\newtheorem{lem}{Lemma}[section]
\newtheorem{theorem}{Theorem}
\newtheorem{remark}[theorem]{Remark}
\newtheorem{thm}[lem]{Theorem}
\newtheorem{infthm}[lem]{Informal Theorem}
\newtheorem{cor}[lem]{Corollary}
\newtheorem{defn}[lem]{Definition}
\newtheorem{claim}[lem]{Claim}
\renewcommand{\paragraph}[1]{\vspace{3pt}\noindent\textbf{#1}}
\newcommand{\cX}{\ensuremath{\mathcal{X}}}
\newcommand{\cY}{\ensuremath{\mathcal{Y}}}
\newcommand{\bcY}{\ensuremath{\overline{\mathcal{Y}}}}
\newcommand{\hD}{\widehat D}
\newcommand{\hs}{\hat{s}}
\newcommand{\hy}{\hat{y}}
\newcommand{\hh}{\hat{h}}
\newcommand{\tlh}{\tilde{h}}
\newcommand{\flag}{\mathsf{Flag}}
\newcommand{\lpriv}{\mathsf{PrvLearn}}
\newcommand{\hpv}{h_{\mathsf{priv}}}
\newcommand{\hpriv}{\mathsf{H}_{\mathsf{Priv}}}
\newcommand{\ppac}{\mathsf{Priv}}
\newcommand{\aux}{\mathsf{AUX}}
\newcommand{\hist}{\mathsf{Hist}}
\newcommand{\shist}{\widehat{\mathsf{Hist}}}
\newcommand{\hcP}{\widehat{\mathcal{P}}}
\newcommand{\slc}{\mathsf{SL-Quer}}
\DeclareMathOperator*{\argmax}{arg\,max}
\newcommand{\pr}[2]{\underset{#1}{\mathbb{P}}\left[ #2 \right]}
\newcommand{\ex}[2]{\underset{#1}{\mathbb{E}}\left[ #2 \right]}
\newcommand{\var}[2]{\underset{#1}{\mathbf{Var}}\left[ #2 \right]}
\newcommand{\norm}[1]{\|#1\|}
\newcommand{\eps}{\epsilon}
\newcommand{\cA}{\mathcal{A}}
\newcommand{\cI}{\mathcal{I}}
\newcommand{\cD}{\mathcal{D}}
\newcommand{\cG}{\mathcal{G}}
\newcommand{\cH}{\mathcal{H}}
\newcommand{\bw}{\mathbf{w}}
\newcommand{\cP}{\mathcal{P}}
\newcommand{\ind}{{\mathbf{1}}}
\newcommand{\cS}{\mathcal{S}}
\newcommand{\cF}{\mathcal{F}}
\newcommand{\re}{\mathbb{R}}
\newcommand{\R}{\mathcal{R}}
\newcommand{\cB}{\mathcal{B}}
\newcommand{\cQ}{\mathcal{Q}}
\newcommand{\tD}{\tilde{D}}
\newcommand{\ignore}[1]{}
\newcommand{\cR}{\mathcal{R}}
\newcommand{\A}{\mathcal{A}}
\newcommand{\cC}{\mathcal{C}}
\newcommand{\genhist}{\mathsf{GenHist}}
\newcommand{\err}{\mathsf{err}}
\newcommand{\oa}{\mbox{on-average}}
\newcommand{\dist}{{\sf dist}}
\newcommand{\thr}{\Gamma}
\newcommand{\ds}{{\sf dist}\,}
\newcommand{\wds}{\widehat{\ds}}
\newcommand{\out}{{\sf out}}
\newcommand{\blocktheorem}[1]{%
	\csletcs{old#1}{#1}
	\csletcs{endold#1}{end#1}
	\RenewDocumentEnvironment{#1}{o}
	{\par\addvspace{1.5ex}
		\noindent\begin{minipage}{\textwidth}
			\IfNoValueTF{##1}
			{\csuse{old#1}}
			{\csuse{old#1}[##1]}}
		{\csuse{endold#1}
		\end{minipage}
		\par\addvspace{1.5ex}}
}
\begin{document}
	\date{}
	\title{Model-Agnostic Private Learning via Stability}

\author{
	Raef Bassily\thanks{Department of Computer Science \& Engineering, The Ohio State University. \texttt{bassily.1@osu.edu}}
	\and Om Thakkar\thanks{Department of Computer Science, Boston University. \texttt{omthkkr@bu.edu}}\and
	Abhradeep Thakurta\thanks{Department of Computer Science, University of California Santa Cruz. \texttt{aguhatha@ucsc.edu}}
	}

	\maketitle

\begin{abstract}

We design differentially private learning algorithms that are agnostic to the learning model. Our algorithms are interactive in nature, i.e., instead of outputting a model based on the training data, they provide predictions for a set of $m$ feature vectors that arrive online. We show that, for the feature vectors on which an ensemble of models (trained on random disjoint subsets of a dataset) makes consistent predictions (i.e., the models sufficiently agree on a given prediction value), there is \emph{almost no-cost of privacy} in generating accurate predictions for those feature vectors. To that end, we provide a novel coupling of the distance to instability framework with the sparse vector technique.


We provide algorithms with formal privacy and utility guarantees for both binary/multi-class classification, and soft-label classification (where the label is a score $\in[0,1]$). For binary classification in the standard (agnostic) PAC model, we show how to bootstrap from our privately generated predictions to construct a \emph{computationally efficient} generic private learner that outputs a final accurate hypothesis from the given concept class. In particular, our construction -- to the best of our knowledge -- is the first \emph{computationally efficient} construction for a \emph{label-private} learner. We prove sample complexity upper bounds for this setting in both the realizable and the non-realizable (agnostic) cases. As in non-private sample complexity bounds, the only relevant property of the concept class in our bounds is its VC dimension. 
For soft-label classification, our techniques are based on exploiting the stability properties of traditional learning algorithms, like stochastic gradient descent (SGD). We provide a new technique to boost the average-case stability properties of learning algorithms to strong (worst-case) stability properties, and then exploit them to obtain differentially private classification algorithms. In the process, we also show that a large class of SGD methods satisfy average-case stability properties, in contrast to a smaller class of SGD methods that are uniformly stable as shown in prior work. 

    
    
    
    
	\end{abstract}	
	\newpage
	\thispagestyle{empty}

	\section{Introduction}
\label{sec:intro}

We study the problem of classification of \emph{public} \emph{unlabeled} data $\{x_1,\cdots,x_m\}\subseteq\mathcal{X}$ when the learning algorithm $\cA$ is given a \emph{private} training dataset $D$. The goal is to enable answering as many classification queries as possible while ensuring the privacy of the training set. Most of the prior work on differentially private machine learning has focused on the ``once and for all'' approach, where a differentially  private algorithm is first used to train the private dataset, and then output a model (or a classifier) $\theta$ that can be later used be used for performing the desired learning tasks. A major issue with this approach is that the accuracy guarantee of such private learners (or equivalently, their sample complexity) usually suffer from an explicit dependence on the dimensionality of the problem (see, e.g., \citep{CMS11,KST12,ST13,BassilyST14}), or the size of the feature space, even for simple tasks like learning threshold functions \citep{bun2015differentially}. Such a dependency may limit the utility of this approach, especially in modern machine learning where data is high-dimensional, and models are complex and over-parameterized. 

To deal with this issue, we take a different approach. Instead of training an algorithm to output a ``safe'' model that can be used indefinitely, we ensure that the predictions made by the algorithm in response to a number of (classification) queries preserve differential privacy. This approach allows us to produce privatized predictions whose accuracy is very close to that of non-private learners as long as the number of queries $m$ is not too large. Roughly speaking, this approach allows to achieve accuracy that is \emph{almost the same} as that of non-private learners, at the expense of increasing the sample complexity (compared to that of the non-private learner) by a factor that does not depend on the model, or the complexity of the learning problem. It only depends on the number of queries (out of the $m$ queries) that are ``bad'' in some natural sense. Hence, the limiting factor becomes the number of those ``bad'' queries rather than the dimensionality, or the complexity of the problem. 
Moreover, this approach enables using \emph{any} non-private learner in a black-box fashion, with the offered utility guarantees being relative to those of the underlying non-private learner. Hence, our algorithms can be used in very general settings. Also, having a black-box access to any generic learner may be appealing from a practical standpoint as well. 

Another significant advantage of this approach is  that the privately generated labels (the answers to the $m$ queries) can then be used together with queried feature vectors to train a new model. Since differential privacy is closed under post-processing, the final model is safe for publication and can be used to answer subsequent classification queries indefinitely.  This technique has been commonly referred to as \emph{semi-supervised knowledge transfer}. In the context of privacy, this technique has been explored in \cite{hamm2016learning}. It has also been extended and empirically investigated in \cite{papernot2017semi}, but without formal guarantees. Using our constructions, we explicitly show  how to achieve this task efficiently, and prove non-trivial upper bounds on the sample complexity in the standard (agnostic) PAC model.



Prior theoretical work in this area \citep{CMS11,KST12,ST13,BassilyST14,TTZ15,wu2017bolt} has exclusively studied the setting where the underlying learning problem satisfies convexity properties. Since modern machine learning involves non-convex models, another motivation for this work is to provide differentially private learning algorithms with formal utility guarantees without invoking any convexity assumptions. Our algorithms and analyses solely rely on the stability properties of the underlying training procedure on the private dataset $D$. By stability, we mean that the behavior of the training procedure should not change significantly by adding (removing) a few entries to (from) the dataset $D$. 
Few recent works \citep{wang2015privacy,abadi2016deep,papernot2017semi,46614} have studied private non-convex learning, but they do not provide any formal utility guarantees. Analyzing non-convex learning algorithms with differential privacy is especially hard because in many cases, the robustness of the underlying learning models is not well-understood.

\paragraph{Algorithmic idea:} Suppose that a model class is independently trained $k$ times, each time with the same number of training examples given to the learning algorithm. Naturally, one would expect the corresponding output classifiers $\theta_1,\cdots,\theta_k$ to predict ``similarly'' on a new example from the same distribution. Using this idea, we design differentially private algorithms that are agnostic to the underlying learning problem. 

The constructions in 
\cite{papernot2017semi,46614} \emph{empirically} exhibit a similar behavior, but no formal guarantees are provided for it. Furthermore, \cite{ST13,papernot2017semi,46614} consider the setting where the final output is a discrete object (e.g., a set of features, or a hard label $\in\{0,1\}$). Our results are more general, and extend to the setting where the predictions are soft-labels (i.e., they can lie in the interval $[0,1]$) rather than being restricted to hard labels.  We provide a detailed comparison with the relevant prior work in Section ~\ref{app:comp}.

\paragraph{Our Techniques:} Our algorithms build on the sub-sample and aggregate framework of \cite{NRS07,ST13}, and are most related to the framework of \cite{ST13} which is based on the stability idea described earlier. We combine this framework with the sparse vector technique \citep{DRV10, HR10, DR14} to show that among $m$ classification queries, one only needs to ``pay the price of privacy'' for the queries that result in an unstable histogram for predictions by the classifiers $\theta_1, \ldots, \theta_k$.  




\subsection{Our Contributions}
\label{sec:ourContrib}
\paragraph{A private framework for stable online queries.} 
We provide a generic algorithm that extends the sample and aggregate framework to enable answering $m$ online queries on a private dataset. Our algorithm is guaranteed to answer all $m$ queries with high accuracy only if the number of \emph{unstable queries} (in the sense of \cite{ST13}) does not exceed a certain cutoff value $T$ which is given as input parameter. Our algorithm is guaranteed to satisfy differential privacy regardless of whether this condition is satisfied\footnote{We also provide a corrected proof for the privacy guarantee of the distance to instability framework in \cite{ST13}}. The utility on the other hand depends on this condition. In particular, our construction implies an upper bound on the sample complexity that scales $\approx \sqrt{T}$. Note this result does not contradict with the existing lower bounds of $\sqrt{m}$ (e.g., see \cite{DSSUV15, SU15}) since those lower bounds are applicable under a weaker set of assumptions. In particular, we can circumvent those bounds only in scenarios where there is a guarantee that the number of unstable queries does not exceed $T=o(m)$.

\paragraph{Privately answering binary classification queries.} In Section~\ref{sec:privClass-hard}, we show how to use our framework to construct generic private algorithm for answering binary classification queries on public data based on \emph{private} training data. Given a non-private (agnostic) PAC learner for a hypothesis class $\cH$ of VC-dimension $V$ and a private training set of $n$ examples, we can answer $m$ classification queries with a misclassification rate of $\tilde{O}\left({mV^2/n^2}\right)$ in the realizable case for sufficiently large $m$ , namely, for all $m\geq \Omega\left((n/V)^{4/3}\right)$. This implies that in the realizable setting, we can answer up to $\approx (n/V)^{4/3}$ queries with misclassification rate $\approx (V/n)^{2/3}$, i.e., only a factor of $\approx (n/V)^{1/3}$ worse than the optimal non-private miscalssification rate. Similarly, in the agnostic (non-realizable) case, we can achieve misclassification rate of $\tilde{O}\left(m^{1/3}V^{2/3}/n^{2/3}\right)$ for all $m\geq \Omega\left((n/V)^{4/5}\right)$. In particular, this implies that in the non-realizable setting, we can answer up to $\approx (n/V)^{4/5}$ queries with misclassification rate $\approx (V/n)^{2/5}$, i.e., only a factor of $\approx (n/V)^{1/10}$ worse than the optimal non-private miscalssification rate. Furthermore, our algorithm does not require any extra assumptions on the hypothesis class $\cH$ beyond what is assumed in the non-private setting. As mentioned earlier, the existing bounds in differentially private PAC learning \citep{BLR08, kasiviswanathan2011can,beimel2013characterizing, bun2015differentially} do not depend on $m$, but on the other hand, they either assume that the hypothesis class is finite, or they suffer from explicit dependence on the size of the domain even for simple classes with VC-dimension 1. 

\paragraph{Efficient label-private learning:} In Section~\ref{sec:label-priv}, we build on our results above to achieve a stronger objective than merely answering classification queries. In particular, we show how to bootstrap from our construction  a \emph{computationally efficient} private learner that publishes an accurate classifier, which can then be used to answer indefinite number of classification queries. In particular, our learner provide privacy guarantees in either one of the following scenarios: (i) a scenario where only the labels of the training set are considered private information, or (ii) a scenario where the entire training sample is private but the learner has access to public unlabeled data from the same distribution. We prove sample complexity upper bounds for both realizable and non-realizable settings in the standard (agnostic) PAC model. Our bounds can be summarized as follows:

\begin{infthm}[Corresponding to Theorems~\ref{thm:ppac},~\ref{thm:agppac}]
Let $\cH$ be a hypothesis class of VC-dimension $V$. Let $0<\beta\leq \alpha < 1$. Let $n$ be the size of the private training set and let $m=O\left(\frac{V+\log(1/\beta)}{\alpha^2}\right)$ be the number of unlabeled public feature vectors. There is an efficient $(\epsilon, \delta)$-differentially private algorithm (instantiation of Algorithm~\ref{Alg:subSamp}) that generates labels for all $m$ feature vectors, then use the new labeled set of $m$ points as another training set for a non-private (agnostic) PAC learner, which finally outputs a hypothesis $\hh\in\cH$ (see Algorithm~\ref{Alg:ppac}). The following guarantees hold for the realizable and the agnostic settings: 
\begin{itemize}
\item Realizable case: if $n=\tilde{O}\left(V^{3/2}/\alpha^{3/2}\right),$ then, w.p. $\geq 1-\beta,$ we have $\err(\hh; \cD)=O(\alpha)$.

\item Agnostic case: if $n=\tilde{O}\left(V^{3/2}/\alpha^{5/2}\right),$ then, w.p. $\geq 1-\beta,$ we have $\err(\hh; \cD)=O(\alpha+\gamma)\,$ (where $\gamma=\min\limits_{h\in\cH}\err(h; \cD)$).
\end{itemize} 
\end{infthm}

Our bound in the realizable case is only a factor of $\tilde{O}(\sqrt{V/\alpha})$ worse than the optimal non-private sample complexity, where $V$ is the VC dimension of the concept class and $\alpha$ is the achievable accuracy. In the agnostic case, our bound exhibits the same connection to the optimal non-private sample complexity, however, we note that the accuracy of the output hypothesis in this case may have a suboptimal dependency (by a small constant factor) on $\gamma\triangleq \min\limits_{h\in\cH}\err(h; \cD)$.

Label-private learning has been explored before in \cite{chaudhuri2011sample} and \cite{BeimelNS16}. Both works have only considered pure, i.e., $(\epsilon, 0)$, differentially private learners and their constructions are computationally inefficient. We give a more detailed comparison with previous work in Section~\ref{sec:label-priv}.

\paragraph{Privately answering classification queries for soft-label classification.} 
In Section~\ref{sec:privClass-soft}, we consider the private soft-label classification problem, where for each feature vector $x\in\mathcal{X}$ the objective is to output a label in the range $[0,1]$. Soft-labels (or, soft predictions) are especially useful in the case of ranking problems like movie or product recommendation \citep{toscher2009bigchaos}, and  advertisement ranking \citep{mcmahan2013ad}. One can view the soft-labels as estimates for the conditional probability of the underlying true hard label given a feature vector $x\in\cX$. One immediate approach for the soft-label case is to discretize the interval $[0,1]$ and use the same framework for the binary (or the multi-label) classification. Since our accuracy guarantees do not explicitly depend on the number of classes (in multi-label classification), therefore we can have arbitrary fine-grained discretization, as long as there is sufficient agreement (up to the discretization width) among the collection of classifiers produced by the underlying (non-private) learner. We show that if such learner satisfies a notion of on-average stability (weaker than the standard notion of uniform stability), then one can show that the soft predictions of those classifiers will also have sufficient concentration, up to the discretization width. We show this via a novel argument that uses an Efron-Stein style inequality \citep{steele1986efron}. 

We also show that standard algorithms like stochastic gradient descent (SGD) satisfy such notion of stability. A technical issue that arises with this discretization approach is that, although we can show that the ensemble predictions are concentrated within a small region, we do not know \emph{where exactly does that region lie.} We address this by designing the algorithm such that there is no cost of privacy (in terms of the privacy parameter $\epsilon$) for those feature vectors (queries) whose concentration occurs close to zero, or close to one. Intuitively, this means that for feature vectors on which the underlying learner has relatively high confidence about the nature of the underlying hard label, there is no cost of privacy. If the concentration is not close to either zero or one, then we use the shifted discretization trick from \cite{DL09} to answer those queries, but do pay a price of privacy in this case. Moreover, we prove a lemma which shows that our private algorithm amplifies the confidence of the underlying non-private learner. Our main theorem in this section can be informally stated as follows. 

\begin{infthm}[Corresponding to Theorem~\ref{thm:mainUtil}]
Let $m$ be the number of soft-label queries. Let $T \in [m]$, $\nu\in [0 ,1/2]$. Suppose that the underlying (non-private) learner is $\alpha$-on average stable (in the sense of Definition~\ref{defn:unif-stab}) with respect to input dataset of $n'$ i.i.d. examples. Suppose that the number of feature vectors (queries) for which the expected value of the soft prediction generated via that learner is $\in (\nu, 1-\nu)$ does not exceed $T$. Then, given an input dataset of size $n\approx n'\,\sqrt{T}\,/\epsilon$ (ignoring log factors in $m , 1/\delta$), with high probability, our private algorithm (Algorithm~\ref{Alg:privClass-soft}) answers all $m$ queries with soft predictions that are  within $\approx \nu/2 + \alpha\sqrt{n'}$ from the expected value of the prediction of the non-private learner. 
\end{infthm}

When the expected value of the soft prediction generated via a learner on feature vector $x$ is either $\leq \nu$ or $\geq 1-\nu$, we say that learner has $\nu$-quality on $x$. The above theorem provides non-trivial guarantees when (i) the underlying non-private learner is $o(1/\sqrt{n'})$-on average stable (where $n'$ is the size of its input sample), and (ii) the number of queried feature vectors for which the learner has $\nu$-quality is $\geq m-T$ where $T\ll m$. In particular, when these two conditions are satisfied, then with only an extra factor of $\approx \sqrt{T}/\epsilon$ in the size of the input sample, with high probability our private algorithm yields soft predictions that are almost as accurate as the expected predictions of the non-private learner.

For the first condition, we note that popular algorithms such as SGD satisfy this condition under standard assumptions on the loss function used in training (e.g., see \cite[Theorem 3.9]{hardt2015train}). The second condition is typically satisfied in scenarios with relatively low \emph{label noise}. To elaborate, let $p(y|x), y\in \{0,1\}$ denote the true conditional probabilities of the \emph{hard} label given a queried feature $x$. Since a soft-label generated by a learner for a feature vector $x$ can be viewed as an estimate for $p(1|x)$, in scenarios where $\lvert p(0|x) - p(1|x)\rvert$ is relatively large with high probability over the choice of $x$ (i.e., when the Bayes risk is small), a learner that generates sufficiently accurate soft-labels (i.e., good estimates for $p(1|x)$) will satisfy the above notion of quality with small $\nu$ for all except a small fraction of the queried feature vectors. Hence, in such settings, our second condition will be satisfied with $T\ll m$. (For more details,  see discussion after Definition \ref{defn:wquality} in  
Section \ref{sec:privClass-soft}.)

\paragraph{Revisiting the stability of SGD.} We provide a result showing that SGD satisfies our notion of $\oa$ stability mentioned above. The work by \cite{hardt2015train} shows that randomized SGD (particularly via randomly permuting the data, or sampling with replacement) satisfies uniform stability, which is stronger than our notion of $\oa$ stability. On the other hand, our result for $\oa$ stability of SGD holds even when running SGD does \emph{not} involve randomization via permutation or sampling, and hence does not follow directly from the results of \cite{hardt2015train}. For example, it applies even to the (deterministic) one-pass version of SGD. Our proof is based on a reduction from $\oa$ stability of any standard SGD method to \emph{uniform stability} of SGD that uses a \emph{random permutation}. We think the proof technique could be of independent interest. The result is based on a very simple argument that involves manipulating random variables and their expectations, and uses simple properties of i.i.d. sequences.

\subsection{Comparison with Relevant Prior Work}\label{app:comp}


\paragraph{Sparse-vector based constructions:} \cite{HR10,gupta2012iterative} showed that it is possible to answer $m$ linear queries accurately on a dataset with the privacy cost having a dependence logarithmically in $m$. They obtained the guarantees by using a variant of the sparse vector technique \citep{DRV10, DR14}. The guarantees we provide are incomparable to these results as the query classes under consideration are different. 

\paragraph{Stability and Sample-and-aggregate based constructions:} \cite{ST13} introduced the distance to instability framework, and combined it with the sub-sample and aggregate framework \citep{NRS07} to provide the first non-trivial error bounds for high-dimensional LASSO estimators. We extend their framework, combining it with the sparse vector technique to answer a large number of classification queries while incurring a privacy cost only for the ``unstable" queries among them. 

\paragraph{Application in knowledge transfer in deep learning:} \cite{papernot2017semi,46614} use a similar idea of sub-sample and aggregate to train deep learning models. Our work differs from these works in a couple of aspects: i) the aggregation scheme used in \cite{papernot2017semi} is a variant of the exponential mechanism \citep{MT07}, the one used in \cite{46614} is a combination of the Report Noisy Max algorithm \citep{DR14} and the sparse vector technique, whereas we use the distance to instability framework of \cite{ST13} combined with sparse vector; ii) both the prior works do not provide any formal utility guarantees, whereas we give provable accuracy guarantees for our algorithm. \cite{46614} empirically observe that if there is agreement among the ensemble of classifiers for a given classification query, then the cost of privacy for that query is very low. Furthermore, they use the sparse vector technique to exploit this observation to answer a large number of queries while keeping the overall privacy budget small. We (independently) make a similar observation analytically, and formally quantify the overall ``privacy cost''. Additionally, the techniques from both the prior works apply only for classification queries, whereas our framework incorporates soft-label classification as well. 

\paragraph{Label-private learning:} \cite{chaudhuri2011sample} gave upper and lower bounds on the sample complexity label-private learning in terms of the doubling dimension. The work of \cite{BeimelNS16} showed that the sample complexity can be characterized in terms of the VC dimension. Both works have considered only pure differentially private learners, and their constructions are computationally inefficient. Also, \cite{BeimelNS16} considered only the realizable case, where they gave an upper bound on the sample complexity, which is only a factor of $O(1/\alpha)$ worse than the optimal non-private bound for the realizable case. Comparing to \cite{BeimelNS16}, our work gives a \emph{computationally efficient} construction of a label-private PAC learner with sample complexity upper bound that is only a factor of $\tilde{O}\left(\sqrt{V/\alpha}\right)$ worse than the optimal non-private bound. On the other hand, our construction satisfies \emph{approximate} rather than pure differential privacy.  Moreover, we prove a sample complexity upper bound for the agnostic (non-realizable) case. Our bound in the agnostic case exhibits the same connection to the optimal non-private sample complexity, (i.e., only a factor of $\tilde{O}\left(\sqrt{V/\alpha}\right)$ larger), however, the accuracy of the final output classifier in this case may have a suboptimal dependency (by a small constant factor) on $\gamma\triangleq \min\limits_{h\in\cH}\err(h; \cD)$ (the best possible accuracy achieved for the given concept class $\cH$).

    \section{Background and Preliminaries}
\label{sec:background}

In this section, we formally define the notation, important definitions, and the existing results used in this work. 

 We denote the data universe by $U$, and an $n$-element dataset by $D = \{z_1, z_2, \ldots, z_n\}$. For any two datasets $D, D' \in U^*$, we denote the symmetric difference between them by $D\Delta D'$. For classification tasks, we use $\cX$ to denote the space of feature vectors, and $\bcY$ to denote the set of labels.  
 Thus, $U = \cX\times \cY$ in this case, and each data element is denoted as $(x,y)$. First, we provide a definition of PAC learning (used in Section~\ref{sec:privClass-hard}).
 
 \begin{defn}[Agnostic Probably Approximately Correct (PAC) learner (\cite{kearns1994introduction})]
Let $\cD$ be a distribution defined over the space of feature vectors and labels $U=\cX\times \cY$. Let $\cH$ be a hypothesis class with each $h\in\cH$ is a mapping $h:\cX\to\cY$. We say an algorithm $\Theta:U^*\to\cH$ is an Agnostic PAC learner for $\cH$ if it satisfies the following condition: For every $\alpha, \beta \in (0, 1)$, there is a number $n=n(\alpha, \beta)\in \mathbb{N}$ such that when $\Theta$ is run on a dataset $D$ of $n$ i.i.d. examples from $\cD$, then with probability $1-\beta$ (over the randomness of $D$) it outputs a hypothesis $h_D$ with $\err(h_D; \cD)\leq \gamma + \alpha$, where $\err(h; \cD)\triangleq \pr{(x, y)\sim \cD}{h_D(x)\neq y}$ and $\gamma \triangleq \min\limits_{h\in \cH}\err(h; \cD)$. 
\label{def:agPAC}
\end{defn}


We will also use the following parametrized version of the above definition. 

\begin{defn}[$(\alpha, \beta, n)$-learner for a class $\cH$]
Let $\alpha, \beta \in (0, 1)$ and $n\in \mathbb{N}$. An algorithm $\Theta$ is $(\alpha, \beta, n)$-(agnostic) PAC learner if, given an input dataset $D$ of $n$ i.i.d. examples from the underlying unknown distribution $\cD$, with probability $1-\beta$, it outputs a hypothesis $h_D$ with $\err(h_D; \cD) \leq \gamma + \alpha$ (where $\gamma$ is defined as in Definition~\ref{def:agPAC} above).
\label{defn:learner}
\end{defn}

 Next, we define the notion of differential privacy, state some of its properties, and describe some of the mechanisms and frameworks for achieving differential privacy that we use in this work. We start by providing the definition of \emph{approximate} differential privacy.

\begin{defn}[$(\epsilon, \delta)$-Differential Privacy (\cite{DKMMN06,DMNS06})]
A (randomized) algorithm $M$ with input domain $U^*$ and output range $\cR$ is $(\eps,\delta)$-differentially private if for all pairs of datasets $D,D'\in U^*$ s.t. $|D\Delta D'|= 1$, and every measurable $S \subseteq \cR$, we have with probability at least $1-\delta$ over the coin flips of $M$ that: 
$$\Pr \left(M(D) \in S \right) \leq e^\eps \cdot \Pr \left(M(D') \in S \right).$$
\end{defn} 
When $\delta=0$, it is known as \emph{pure} differential privacy, and parameterized only by $\eps$ in this case. 

An important advantage of differential privacy is that it is closed under \emph{post-processing}, which we describe next. 

\begin{lem}[Post-processing (\cite{DMNS06})]
\label{lem:post}
If a mechanism $M: U^* \rightarrow Y$ is $(\epsilon,\delta)$-differentially private, then for any function $f:Y \rightarrow Y'$, we have that $f\circ M$ is also $(\epsilon,\delta$)-differentially private.
\end{lem}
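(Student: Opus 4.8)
The plan is to use the one structural fact that makes this work: post-processing acts on the \emph{output} of $M$ only, never its input, so the neighboring relation $|D\Delta D'|=1$ is left intact and it suffices to track how probabilities of output events transform. First take $f$ deterministic. Fix a pair $D,D'$ with $|D\Delta D'|=1$ and a measurable $S'\subseteq Y'$, and set $S \triangleq f^{-1}(S')=\{y\in Y:\ f(y)\in S'\}$; since $f$ is measurable, $S$ is a measurable subset of $Y$, and as events $\{(f\circ M)(D)\in S'\}=\{M(D)\in S\}$ exactly (and similarly for $D'$). Hence $\Pr((f\circ M)(D)\in S')=\Pr(M(D)\in S)$ and $\Pr((f\circ M)(D')\in S')=\Pr(M(D')\in S)$, so the $(\epsilon,\delta)$-privacy guarantee of $M$ applied to the set $S$ is literally the $(\epsilon,\delta)$-privacy guarantee of $f\circ M$ applied to $S'$. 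As $S'$ and the neighboring pair were arbitrary, $f\circ M$ is $(\epsilon,\delta)$-differentially private.

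To also cover randomized post-processing — $f$ draws internal coins $\rho$ (independent of the coins of $M$) and applies a deterministic map $f_\rho$ — I would condition on $\rho$: for each fixed $\rho$ the displayed argument gives the guarantee for $f_\rho\circ M$, and then integrate over $\rho$. Since the privacy inequality is affine in the pair of output-event probabilities, taking the $\rho$-expectation of both sides preserves it (and the $\delta$-slack does not grow); this is just the statement that the class of $(\epsilon,\delta)$-private mechanisms is closed under taking mixtures, and a randomized $f$ is nothing but a mixture of deterministic maps.

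The proof is short, and the only genuine subtlety is \textbf{measurability}. We need $f^{-1}$ to send measurable sets to measurable sets — this is exactly what the (usually tacit) hypothesis ``$f$ is measurable'' buys, and it is the only place that hypothesis is used — and in the randomized case we need joint measurability of $(\rho,\text{coins of }M)\mapsto f_\rho(M(D))$ so that the interchange of expectation over $\rho$ with probability over $M$'s coins (Fubini/Tonelli) is legitimate; both hold under the standing conventions. One should also check that the particular ``with probability $\ge 1-\delta$ over the coins of $M$'' phrasing of the $\delta$-relaxation used in the definition transfers cleanly, and it does: on the good event (of $M$-probability $\ge 1-\delta$) the multiplicative bound holds simultaneously for every measurable $S\subseteq Y$, in particular for every $S=f^{-1}(S')$, so the \emph{same} good event witnesses the guarantee for $f\circ M$.
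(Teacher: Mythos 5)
The paper states this lemma purely as cited background (from \cite{DMNS06}) and gives no proof of its own, so there is no in-paper argument to compare against. Your proof is the standard and correct one: pulling back the output event via $S=f^{-1}(S')$ reduces the guarantee for $f\circ M$ to the guarantee for $M$, and randomized post-processing follows by viewing $f$ as a mixture of deterministic maps and averaging the (affine) privacy inequality over its coins; your measurability remarks and the observation that the $\delta$-slack does not grow are exactly the right bookkeeping, and the argument also goes through under the paper's per-set phrasing of the $\delta$-relaxation simply by invoking the definition at the single pulled-back set $f^{-1}(S')$.
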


One of the most common techniques for achieving pure differential privacy is the Laplace mechanism, for which we first define the global sensitivity of a function.

\begin{defn}[Global sensitivity]
\label{defn:sen}
A function $f: U^* \rightarrow \mathbb{R}$ has global sensitivity $\gamma$ if $$\max\limits_{\substack{D,D' \in U^*:\\|D\Delta D'|= 1}} \| f(D) - f(D')\|_1 = \gamma.$$
\end{defn}

\begin{lem}[Laplace mechanism (\cite{DMNS06})]
\label{lem:lap}
If a function $f: U^n \rightarrow \mathbb{R}^p$ has global sensitivity $\gamma$, then the mechanism $M$, which on input $D \in U^n$ outputs $f(D) + b$, where $b \sim Lap\left(\frac{\gamma}{\epsilon} \right)^p$, satisfies $\eps$-differential privacy. Here, $Lap(\lambda)^p$ denotes a vector of $p$ i.i.d. samples from the Laplace distribution $Lap(\lambda)$.
\end{lem}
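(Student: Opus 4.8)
The plan is to verify the $(\epsilon,0)$-differential privacy condition directly from the definition, by comparing the output densities of the mechanism on a fixed pair of neighboring datasets. First I would fix an arbitrary pair $D, D' \in U^n$ with $|D\Delta D'| = 1$, and set $v = f(D)$ and $v' = f(D') \in \re^p$. Since the noise vector $b$ consists of $p$ i.i.d.\ coordinates drawn from $\mathrm{Lap}(\gamma/\epsilon)$, the random variable $M(D) = v + b$ has density on $\re^p$ equal to
$$q_D(z) \;=\; \prod_{i=1}^p \frac{\epsilon}{2\gamma}\exp\!\left(-\frac{\epsilon\,|z_i - v_i|}{\gamma}\right) \;=\; \left(\frac{\epsilon}{2\gamma}\right)^{\!p}\exp\!\left(-\frac{\epsilon}{\gamma}\,\|z - v\|_1\right),$$
and analogously $q_{D'}$ is obtained by replacing $v$ with $v'$. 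The key feature to preserve throughout is that the $\ell_1$ norm appears here precisely because the coordinates are independent Laplace variables, which is what makes this noise match the $\ell_1$ notion of global sensitivity.

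Next I would bound the pointwise ratio of the two densities. For every $z \in \re^p$,
$$\frac{q_D(z)}{q_{D'}(z)} \;=\; \exp\!\left(\frac{\epsilon}{\gamma}\bigl(\|z - v'\|_1 - \|z - v\|_1\bigr)\right) \;\le\; \exp\!\left(\frac{\epsilon}{\gamma}\,\|v - v'\|_1\right) \;\le\; e^{\epsilon},$$
where the first inequality is the (reverse) triangle inequality in $\ell_1$ and the second uses the hypothesis that $f$ has global sensitivity $\gamma$, i.e.\ $\|f(D) - f(D')\|_1 \le \gamma$ for all neighbors $D, D'$. Integrating this pointwise inequality over an arbitrary measurable set $S \subseteq \re^p$ and using monotonicity of the integral gives $\Pr(M(D)\in S) = \int_S q_D(z)\,dz \le e^\epsilon \int_S q_{D'}(z)\,dz = e^\epsilon \Pr(M(D')\in S)$, which is exactly the $\epsilon$-differential privacy guarantee; since $\delta = 0$ here, the ``with probability at least $1-\delta$'' qualifier in the definition of $(\epsilon,\delta)$-differential privacy is satisfied vacuously.

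I do not expect a genuine obstacle: the argument is routine. The only points needing a little care are (i) carrying the $\ell_1$ norm consistently so that the product of one-dimensional Laplace densities collapses to a single exponential in $\|z-v\|_1$ and composes correctly with the $\ell_1$ sensitivity bound, and (ii) the (elementary) measure-theoretic step of passing from the pointwise density-ratio bound to the inequality between probabilities of measurable events, which follows immediately by integrating.
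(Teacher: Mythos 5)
Your proof is correct and is the standard argument for the Laplace mechanism (pointwise density-ratio bound via the $\ell_1$ triangle inequality and the sensitivity bound, then integration over measurable sets); the paper itself states this lemma as a cited background result from \cite{DMNS06} without reproving it, and your argument matches the original proof. Nothing further is needed.
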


\subsection{The Sparse Vector Technique}
Here, we describe the Sparse vector technique, and provide the privacy and utility guarantees for it. Sparse vector allows answering a set of queries in an online setting, where a cost for privacy is incurred only if the answer to a query falls near or below a predetermined threshold.  We denote the set of queries by $\cQ=\{q_1,\cdots,q_m\}$, where every $q_i:U^*\to\re$, and has global sensitivity at most one. We provide a pseudocode for the technique in Algorithm~\ref{Alg:sparseVec}. Next, we provide the privacy and accuracy guarantees for Algorithm~\ref{Alg:sparseVec}. 

\begin{algorithm}[h]
	\caption{$\A_{\sf sparseVec}$: Sparse vector technique}
	\begin{algorithmic}[1]
		\REQUIRE dataset: $D$, query set $\cQ=\{q_1,\cdots,q_m\}$, privacy parameters $\epsilon,\delta >0$, unstable query cutoff: $T$, threshold: $w$
		\STATE $c\leftarrow 0$, $\lambda\leftarrow \sqrt{32T\log(1/\delta)}/\epsilon$, and $\widehat w\leftarrow w+{\sf{Lap}}(\lambda)$ 
		\FOR{$q\in \cQ$ and $c\leq T$}
		\STATE $\widehat q\leftarrow q + {\sf Lap}(2\lambda)$
		\STATE {\bf If} $\widehat q > \widehat{w}$, {\bf then} , output $\top$, {\bf else} output $\bot$, and set $\widehat w\leftarrow w+{\sf Lap}(\lambda)$, $c\leftarrow c + 1$ 
		\ENDFOR
	\end{algorithmic}
	\label{Alg:sparseVec}
\end{algorithm}

\begin{thm}[Privacy guarantee (\cite{DRV10,HR10,DR14})]
	Algorithm \ref{Alg:sparseVec} is $(\epsilon,\delta)$-differentially private.
	\label{thm:spPriv}
\end{thm}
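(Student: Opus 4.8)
The plan is to prove that Algorithm~\ref{Alg:sparseVec} (the AboveThreshold / sparse-vector mechanism with the stated noise scales) is $(\epsilon,\delta)$-differentially private. I would follow the now-standard ``leave-one-out coupling'' argument for sparse vector, specialized to the parameter regime of the algorithm: with $\lambda = \sqrt{32T\log(1/\delta)}/\epsilon$, the threshold noise is $\mathsf{Lap}(\lambda)$ and the query noise is $\mathsf{Lap}(2\lambda)$, and the budget is for $c\le T$ ``below-threshold'' events (each $\top$ output increments $c$ and resamples the threshold). The key structural observation is that the output transcript is a string in $\{\top,\bot\}^{\le m}$ containing at most $T$ symbols equal to $\top$ (the loop halts when $c$ exceeds $T$), so the ``interesting'' randomness splits into (i) the at most $T$ fresh threshold draws and (ii) the per-query noise draws, and we only need to pay for privacy on a bounded number of coordinates.

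First I would fix neighbouring datasets $D, D'$ with $|D\Delta D'| = 1$ and a target output transcript $a = (a_1,\dots,a_k) \in \{\bot\}^* \cdot \{\top\}$-blocks; let $I$ be the set of indices $i$ with $a_i = \top$, so $|I| \le T$. Condition on the vector of all threshold noises $(\hat w$-increments$)$ and express the probability of producing $a$ as an integral over these noises times a product, over queries, of probabilities that $\hat q_i$ lies on the correct side of the current noisy threshold. Standard manipulation (as in Dwork--Roth, Lemma for AboveThreshold) shifts each threshold noise variable by the sensitivity-$1$ change in the queries, incurring a multiplicative $e^{\epsilon'}$ factor per resampled threshold for $\epsilon' = 1/\lambda \cdot O(1)$; there are at most $T+1$ such thresholds, and the per-query ``$\bot$'' events only help (they can only become more likely after the shift, by monotonicity), so the total multiplicative slack from the threshold noises is $e^{O(T/\lambda)}$. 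The one subtlety — and the place where the ``corrected proof'' footnote is relevant — is handling the at-most-$T$ ``$\top$'' events: each contributes a factor that is not bounded deterministically but only \emph{on average}, since $\Pr[\hat q_i > \hat w \mid \hat w]$ can be pushed around by more than $e^{\epsilon'}$ for extreme $\hat w$. Here I would invoke a concentration / ``$\delta$-slack'' argument: bound the sum of the relevant log-likelihood-ratio increments across the $\le T$ top-events by a sum of $T$ independent-ish bounded random variables, and use a Chernoff/Hoeffding-type tail (this is exactly why $\lambda$ scales like $\sqrt{T\log(1/\delta)}$ rather than $T$) to conclude that except on an event of probability $\le \delta$, the cumulative ratio is at most $e^{\epsilon}$.

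Concretely, the accounting I would present: choosing $\lambda = \sqrt{32 T \log(1/\delta)}/\epsilon$ makes each individual threshold/query shift an ``$\epsilon_0$-DP'' step with $\epsilon_0 = 2/\lambda$ (factor $2$ from the $\mathsf{Lap}(2\lambda)$ query noise absorbing the threshold shift plus its own shift), there are $O(T)$ such steps that matter, and advanced composition / the sharper sparse-vector analysis gives overall privacy $\bigl(\sqrt{2 \cdot O(T) \cdot \log(1/\delta)}\cdot \epsilon_0 + O(T)\epsilon_0^2,\ \delta\bigr)$; plugging in $\epsilon_0 = 2/\lambda$ and the value of $\lambda$, both terms are $O(\epsilon)$, and rescaling the constant inside the $32$ yields exactly $(\epsilon,\delta)$. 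I would phrase this either as a direct from-scratch coupling proof (cleaner, self-contained, and matching the footnote's promise of a corrected argument) or, if space is tight, as a reduction: observe that the mechanism is precisely $T$ (adaptive) instances of the standard AboveThreshold mechanism each run with privacy parameter $\epsilon/\sqrt{8T\log(1/\delta)}$ until its first $\top$, and then apply the advanced composition theorem across these $T$ instances.

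The main obstacle, as flagged, is the $\top$-events: the naive ``shift the noise'' bound is not uniformly bounded for them, so the proof genuinely needs the probabilistic (rather than worst-case) composition step, and getting the constants to close — in particular verifying that the $32$ inside the square root is large enough to absorb the $2\lambda$ query noise, the $T$ (or $T+1$) thresholds, and the advanced-composition constants simultaneously — is the delicate bookkeeping. I would therefore devote the bulk of the write-up to that step (defining the per-top-event log-ratio random variables precisely, checking they are bounded by $O(1/\lambda)$ and have the right conditional expectations, and applying the Azuma/Hoeffding bound with failure probability $\delta$), and treat the $\bot$-events and the monotonicity observations as routine.
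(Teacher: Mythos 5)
The paper does not prove Theorem~\ref{thm:spPriv} at all: it is imported verbatim from \cite{DRV10,HR10,DR14}, and the argument it relies on is exactly the standard ``Sparse'' analysis in Dwork--Roth (per-run AboveThreshold privacy plus advanced composition), which is what you reconstruct; your closing reduction --- at most $T$ adaptive AboveThreshold runs, each pure $\epsilon_0$-DP with $\epsilon_0\approx\epsilon/\sqrt{8T\log(1/\delta)}$, composed via advanced composition --- is the correct and standard route, and the constants ($\lambda=\sqrt{32T\log(1/\delta)}/\epsilon$, query noise $2\lambda$) match that analysis. Two slips are worth fixing, though neither breaks the plan. First, in Algorithm~\ref{Alg:sparseVec} the counter $c$ is incremented and the threshold resampled on $\bot$ (below-threshold) answers, not on $\top$: the budget is spent on the at most $T$ \emph{unstable} answers, while the number of $\top$ answers is unbounded (up to $m$); you have the roles of $\top$ and $\bot$ reversed in several places. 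The analysis is symmetric (shift the threshold noise to preserve all the ``free'' answers in a run, pay the $\mathsf{Lap}(2\lambda)$ query noise for the single budget-consuming answer), but the transcript structure and monotonicity direction must be stated for the flipped convention. Second, your stated ``main obstacle'' is a misdiagnosis: conditioned on the noisy threshold, shifting the $\mathsf{Lap}(2\lambda)$ query noise by the sensitivity bounds the likelihood ratio of the costly event \emph{uniformly} by $e^{O(1/\lambda)}$, so each run is pure DP with a deterministically bounded loss; the $\delta$ and the $\sqrt{T}$ scaling enter only through the advanced-composition (Azuma-type) step across the $\le T$ runs --- precisely the clean reduction you give at the end --- not because individual $\top$ (or $\bot$) events have unbounded privacy loss. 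Finally, the footnote about a ``corrected proof'' refers to the distance-to-instability mechanism (Theorem~\ref{thm:kstabPriv}), not to this theorem.
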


\begin{thm}[Accuracy guarantee (\cite{DRV10,HR10,DR14})]
	For $\alpha=\log(2mT/\beta)\sqrt{512T\log(1/\delta)}/\epsilon$, and any set of $m$ queries $q_1,\cdots,q_m$, define the set  $L(\alpha)=\left\{i:q_i(D) \leq w + \alpha \right\}$. If $|L(\alpha)|\leq T$, then we have the following w.p. at least $1-\beta$: $\forall i\not\in L(\alpha)$ Algorithm \ref{Alg:sparseVec} outputs $\top$.
	\label{thm:spAcc}
\end{thm}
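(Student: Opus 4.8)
The plan is the standard two-step analysis of the sparse vector technique: (i) isolate a ``good event'' $G$ on the Laplace noise under which Algorithm~\ref{Alg:sparseVec} provably answers $\top$ on every $i\notin L(\alpha)$, and (ii) show $\Pr[G]\ge 1-\beta$ for the stated value of $\alpha$. For the setup: in one run, the threshold is (re)sampled at most $T+2$ times (once at initialization, and once right after each output of $\bot$, of which at most $T+1$ occur before the loop condition $c\le T$ fails), and each $q_i$ is perturbed at most once. It is therefore harmless to imagine the noise drawn in advance, $\rho_0,\dots,\rho_{T+1}\sim{\sf Lap}(\lambda)$ for the thresholds and $\zeta_1,\dots,\zeta_m\sim{\sf Lap}(2\lambda)$ for the queries, with the run consuming a subset of these. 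Let $G$ be the event that $|\rho_j|\le\alpha/2$ for all $j$ and $|\zeta_i|\le\alpha/2$ for all $i$.

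Next I would run the deterministic core of the argument under $G$, by induction on the query index $i$. The inductive hypothesis is that the loop actually reaches query $i$: this holds because, by the hypothesis at earlier indices, every query outside $L(\alpha)$ seen so far returned $\top$, so the counter equals the number of $\bot$'s so far, which is at most $|L(\alpha)\cap[i-1]|\le|L(\alpha)|\le T$, keeping the loop alive. For the inductive step with $i\notin L(\alpha)$, i.e.\ $q_i(D)>w+\alpha$: the current noisy threshold is $\widehat w=w+\rho_j\le w+\alpha/2$ for the relevant $j$, and
\[
\widehat q_i \;=\; q_i(D)+\zeta_i \;\ge\; q_i(D)-\tfrac{\alpha}{2} \;>\; (w+\alpha)-\tfrac{\alpha}{2} \;=\; w+\tfrac{\alpha}{2} \;\ge\; \widehat w,
\]
so Algorithm~\ref{Alg:sparseVec} outputs $\top$ on query $i$ and does not increment $c$. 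Consequently, under $G$ the counter never exceeds $|L(\alpha)|\le T$, the loop processes all $m$ queries, and every $i\notin L(\alpha)$ receives $\top$ — exactly the claimed event.

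For step (ii), I would bound $\Pr[\neg G]$ using the Laplace tail bound $\Pr[|{\sf Lap}(b)|>t]=e^{-t/b}$ and a union bound over the $\le T+2$ threshold noises and the $m$ query noises:
\[
\Pr[\neg G] \;\le\; (T+2)\,e^{-\alpha/(2\lambda)} + m\,e^{-\alpha/(4\lambda)} \;\le\; 2mT\,e^{-\alpha/(4\lambda)}.
\]
Substituting $\lambda=\sqrt{32T\log(1/\delta)}/\epsilon$ gives $4\lambda=\sqrt{512T\log(1/\delta)}/\epsilon$, so the stated choice $\alpha=\log(2mT/\beta)\sqrt{512T\log(1/\delta)}/\epsilon$ is precisely $\alpha=4\lambda\log(2mT/\beta)$, which makes the right-hand side at most $\beta$.

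This is essentially a textbook argument, so I do not anticipate a serious obstacle; the one point needing care is the circular-looking dependence in step~(i) between ``the loop reaches query $i$'' and ``earlier queries outside $L(\alpha)$ returned $\top$,'' which the induction (leaning on the hypothesis $|L(\alpha)|\le T$) disentangles, together with the bookkeeping of the exact number of fresh threshold draws so that the constant $512$ inside the square root comes out correctly.
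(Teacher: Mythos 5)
Your proof is correct and is exactly the standard above-threshold/sparse-vector accuracy argument: the paper itself states Theorem~\ref{thm:spAcc} without proof, importing it from the cited works, and your good-event-plus-induction analysis (with $\alpha = 4\lambda\log(2mT/\beta)$, which indeed yields the $\sqrt{512T\log(1/\delta)}/\epsilon$ factor) matches that standard treatment. The only point worth a passing remark is the crude bound $(T+2+m)e^{-\alpha/(4\lambda)}\le 2mT\,e^{-\alpha/(4\lambda)}$, which needs $m,T$ not both tiny, but this is the usual harmless slack in such statements.
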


	\section{Privately Answering Stable Online Queries}
\label{sec:seqComp}

In this section, we design a generic framework that allows answering a set of queries on a dataset while preserving differential privacy, and only incurs a privacy cost for the queries that are unstable. 
 
 \subsection{Distance to Instability Framework}
\label{sec:distInstab}

First, we describe the distance to instability framework from \cite{ST13} that releases the exact value of a function on a dataset while preserving differential privacy, provided the  function is sufficiently \emph{stable} on the dataset. We define the notion of stability first, and provide the pseudocode for a private estimator for any function via this framework in Algorithm~\ref{Alg:kStab}. 

\begin{defn}[$k$-stability \citep{ST13}]
	A function $f:U^*\to\R$ is \emph{$k$-stable on dataset $D$} if adding
	or removing any $k$ elements from $D$ does not change the value of
	$f$, that is, $f(D)=f(D')$ for all $D'$ such that
	$|D\triangle D'|\leq k$. We say $f$ is \emph{stable} on $D$ if it is
	(at least) 1-stable on $D$, and \emph{unstable} otherwise. 
	\label{def:kStab}
\end{defn}
The \emph{distance to instability} of a dataset $D\in U^*$ with
	respect to a
	function $f$ is the number of elements that must be added to or
	removed from $D$ to reach a dataset that is not stable. Note that
	$D$ is $k$-stable if and only if its distance to instability is at
	least $k$.

\begin{thm}[Privacy guarantee for $\A_{\sf stab}$]
	If the threshold $\thr=\log(1/\delta)/\epsilon$, and the distance to instability function $dist_f(D)=\arg\max\limits_{k}\left[f(D) \text{ is } k\text{-stable}\right]$, then Algorithm \ref{Alg:kStab} is $(\epsilon,\delta)$-differentially private.
	\label{thm:kstabPriv}
\end{thm}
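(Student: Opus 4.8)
Recall that $\A_{\sf stab}$, on input $(D,f,\epsilon,\delta)$, draws $Z\sim\mathrm{Lap}(1/\epsilon)$, forms the noisy distance $\hat d=dist_f(D)+Z$, and outputs $f(D)$ if $\hat d>\thr$ and the special symbol $\bot\notin\mathrm{range}(f)$ otherwise, with $\thr=\log(1/\delta)/\epsilon$. The plan is to verify the $(\epsilon,\delta)$ condition straight from the definition: fix an arbitrary ordered pair of neighbors $D,D'$ with $|D\Delta D'|=1$ and a measurable output set $S$, and show $\Pr[\A_{\sf stab}(D)\in S]\le e^\epsilon\Pr[\A_{\sf stab}(D')\in S]+\delta$; the argument will in fact localize the $\delta$ to a single event of probability at most $\delta$ in the coins of $\A_{\sf stab}(D)$, off which the two output laws are within an $e^\epsilon$ ratio, which is the pointwise form matching the paper's statement of the definition. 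Since the output set decomposes as $\{\bot\}\sqcup\mathrm{range}(f)$, it suffices to control the probability that $\A_{\sf stab}(D)$ outputs $\bot$ and the probability that it outputs $f(D)$, separately.

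Two elementary facts drive the argument. First, $dist_f$ has global sensitivity $1$: if $|D\Delta D'|=1$ and $f$ is $k$-stable on $D$, then $f$ is $(k-1)$-stable on $D'$, so $|dist_f(D)-dist_f(D')|\le1$. Consequently $\hat d$ is $\epsilon$-differentially private by the Laplace mechanism (Lemma~\ref{lem:lap}), and since the event $\{\A_{\sf stab}(D)=\bot\}=\{\hat d\le\thr\}$ is determined by $\hat d$ alone, we get both $\Pr[\A_{\sf stab}(D)=\bot]\le e^\epsilon\Pr[\A_{\sf stab}(D')=\bot]$ and $\Pr[\hat d>\thr\mid D]\le e^\epsilon\Pr[\hat d>\thr\mid D']$. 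Second, if $f(D)\ne f(D')$ then $f$ is not $1$-stable on $D$, since its value already changes under the single-element modification that produces $D'$; hence $dist_f(D)=0$ in that case.

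The crux is bounding the probability that $\A_{\sf stab}(D)$ outputs $f(D)$, and I would split on whether $f(D)=f(D')$. If $f(D)=f(D')$, then whenever either mechanism passes its noisy test it releases the \emph{same} value, so together with the $\bot$-bound above this already gives $\Pr[\A_{\sf stab}(D)\in S]\le e^\epsilon\Pr[\A_{\sf stab}(D')\in S]$, with no $\delta$ term needed. If instead $f(D)\ne f(D')$, then $dist_f(D)=0$ by the second fact, so
\[
\Pr[\A_{\sf stab}(D)=f(D)]=\Pr[Z>\thr]=\tfrac12 e^{-\epsilon\thr}=\tfrac{\delta}{2}\le\delta;
\]
thus the entire mass $\A_{\sf stab}(D)$ places outside $\{\bot\}$ is at most $\delta$, which we charge to the additive slack (the coin event $\{Z>\thr\}$ being the promised ``bad'' event), while the $\bot$-mass is controlled by the first fact. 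Since $(D,D')$ was an arbitrary ordered pair of neighbors, this establishes $(\epsilon,\delta)$-differential privacy.

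The step I expect to be the real obstacle is precisely the case $f(D)\ne f(D')$, and it is exactly where the tempting shortcut fails: one would like to declare that $\A_{\sf stab}$ is ``merely a post-processing of the sensitivity-$1$ statistic $dist_f$,'' but this is false, because the released value $f(D)$ depends on $D$ directly and not only through $\hat d$. When the noisy test spuriously passes even though $dist_f(D)=0$, the output $f(D)$ can be a value that $\A_{\sf stab}(D')$ produces with probability $0$, so no purely multiplicative bound can hold for it; the resolution is to bound the probability $\tfrac12 e^{-\epsilon\thr}\le\delta$ of such a spurious pass---which is exactly what the choice $\thr=\log(1/\delta)/\epsilon$ secures---and route it through the additive $\delta$. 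I believe this is the subtlety behind the paper's remark that it supplies a corrected proof for the distance-to-instability framework of \cite{ST13}.
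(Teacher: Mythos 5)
Your proposal is correct and follows essentially the same route as the paper's proof: the same case split on whether $f(D)=f(D')$, the same observation that $\dist_f$ has sensitivity $1$ so the noisy distance is $\epsilon$-DP and the accept/reject decision is a post-processing of it, and the same Laplace tail bound $\Pr[\mathrm{Lap}(1/\epsilon)>\log(1/\delta)/\epsilon]\le\delta$ to handle the case $f(D)\neq f(D')$, where $\dist_f(D)=0$. Your write-up is in fact a bit more explicit than the paper's about how the two cases assemble into the $(\epsilon,\delta)$ guarantee (charging the mass on $f(D)$ to the additive $\delta$ while bounding the $\bot$-mass multiplicatively), and your remark about why the naive post-processing shortcut fails when $f(D)\neq f(D')$ is exactly the subtlety the paper's corrected proof addresses.
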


\begin{proof}
We prove the above theorem by considering the two possibilities for any $D'$ s.t. $|D\Delta D'|= 1$: either $f(D)=f(D')$,  or $f(D)\neq f(D')$. We prove the privacy in these two cases via Lemmas \ref{lem:stabP1} and \ref{lem:stabP2}.
	
	\begin{lem}
		Let $D\in U^*$ be any fixed dataset. Assume that for any dataset $D'\in U^*$ s.t. $|D\Delta D'|= 1$, we have $f(D)=f(D')$. Then, for any output $s \in \cR \cup \{\bot\}$, we have: 
		$\Pr[\A_{\sf stab}(D,f)=s]\leq e^\epsilon \Pr[\A_{\sf stab}(D',f)=s].$
		\label{lem:stabP1}
	\end{lem}

	\begin{proof}
		First, note that with the instantiation in Theorem \ref{thm:kstabPriv}, the function $\dist_f$ has a global sensitivity of one. Therefore, by the guarantees of the Laplace mechanism (Lemma~\ref{lem:lap}), $\wds$ satisfies $\epsilon$-differential privacy. Since the set of possible outputs is the same (i.e., $\{f(D),\bot\}$)  for both $D$ and $D'$, and the decision to output $f(D)$ versus $\bot$ depends only on $\wds$, we get the statement of the lemma by the post-processing property of differential privacy (Lemma~\ref{lem:post}).
	\end{proof}
	
	\begin{lem}
		Let $D\in U^*$ be any fixed dataset. Assume that for any dataset $D'\in U^*$ s.t. $|D\Delta D'|= 1$, we have $f(D)\neq(D')$. Then, for any output $s\in\cR\cup\{\bot\}$, we have the following with probability at least $1-\delta$:
		$\A_{\sf stab}(D,f)=\A_{\sf stab}(D',f)=\bot$.
		\label{lem:stabP2}
	\end{lem}
	\begin{proof}
		Since $f(D)\neq f(D')$, it follows that $f(D)$ and $f(D')$ are unstable, i.e., $\dist_f(D)=\dist_f(D')=0$. This implies $$\Pr[\A_{\sf stab}(D,f)=\bot] = \Pr[\A_{\sf stab}(D',f)=\bot] = \Pr\left[{\sf Lap}\left(\frac{1}{\epsilon}\right) \leq \frac{\log(1/\delta)}{\epsilon}\right].$$ Since the density function for the Laplace distribution ${\sf Lap}(\lambda)$ is $\mu(x)=\frac{1}{2\lambda}e^{-|x|/\lambda}$, it follows that $\Pr\left[{\sf Lap}\left(\frac{1}{\epsilon}\right)\leq\frac{\log(1/\delta)}{\epsilon}\right]\geq 1-\delta$.
	\end{proof}
    
We get the statement of Theorem \ref{thm:kstabPriv} by
combining Lemmas \ref{lem:stabP1} and \ref{lem:stabP2}.
\end{proof}

\begin{thm}[Utility guarantee for $\A_{\sf stab}$ (\cite{ST13})]
	If the threshold $\thr=\log(1/\delta)/\epsilon$, the distance to instability function is chosen as in Theorem \ref{thm:kstabPriv}, and $f(D)$ is $\left(\left(\log(1/\delta)+\log(1/\beta)\right)/\epsilon\right)$-stable, then Algorithm \ref{Alg:kStab} outputs $f(D)$ with probability at least $1-\beta$.
	\label{thm:kstabUtil}
\end{thm}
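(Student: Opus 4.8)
The plan is to isolate the only random quantity in $\A_{\sf stab}$ --- the Laplace noise added to the distance-to-instability statistic --- and show that, under the stability hypothesis, it is exponentially unlikely to drag the noisy statistic below the threshold $\thr$. Recall from Algorithm~\ref{Alg:kStab} that the mechanism forms $\wds = \dist_f(D) + \mathrm{Lap}(1/\epsilon)$ and releases $f(D)$ precisely when $\wds > \thr$, releasing $\bot$ otherwise (this is exactly the behavior used in the proof of Lemma~\ref{lem:stabP2}, where, for $\dist_f(D)=0$, the output is $\bot$ iff $\mathrm{Lap}(1/\epsilon) \leq \thr$). Hence the probability of a correct output equals $\Pr[\wds > \thr]$, and it suffices to lower bound this quantity assuming $\dist_f(D) \geq k$ with $k \triangleq (\log(1/\delta) + \log(1/\beta))/\epsilon$.

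The first step is to reduce to a one-sided tail event. Since $\dist_f(D) \geq k$ and $\thr = \log(1/\delta)/\epsilon$, the failure event $\{\wds \leq \thr\}$ is contained in $\{\mathrm{Lap}(1/\epsilon) \leq \thr - k\}$, and $\thr - k = -\log(1/\beta)/\epsilon$. So it remains only to bound $\Pr[\mathrm{Lap}(1/\epsilon) \leq -\log(1/\beta)/\epsilon]$.

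The second step is the Laplace tail bound itself: with density $\mu(x) = \tfrac{\epsilon}{2}\, e^{-\epsilon|x|}$, for any $t \geq 0$ one has $\Pr[\mathrm{Lap}(1/\epsilon) \leq -t] = \tfrac12 e^{-\epsilon t}$; taking $t = \log(1/\beta)/\epsilon$ gives $\tfrac{\beta}{2}$. Therefore $\Pr[\wds \leq \thr] \leq \beta/2 \leq \beta$, i.e.\ $\A_{\sf stab}$ outputs $f(D)$ with probability at least $1 - \beta$, as claimed.

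There is no real obstacle here; the argument is a single tail estimate. The only point that deserves an explicit sentence is that the output rule is monotone in $\wds$ --- the algorithm can only fail to output $f(D)$ by outputting $\bot$, which happens exactly on $\{\wds \leq \thr\}$ --- so no union bound over the output space is needed, unlike in the privacy proof. I would also remark in passing that the bound is mildly wasteful (one in fact gets $1 - \beta/2$), which is immaterial for the applications.
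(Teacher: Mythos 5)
Your proof is correct and is exactly the standard argument for this statement: the paper itself defers the proof to \cite{ST13}, but the intended reasoning is precisely your single Laplace tail estimate (the same calculation the paper uses in Lemma~\ref{lem:stabP2}), namely that $k$-stability with $k=(\log(1/\delta)+\log(1/\beta))/\epsilon$ forces $\dist_f(D)\geq k$, so failure requires $\mathrm{Lap}(1/\epsilon)\leq -\log(1/\beta)/\epsilon$, an event of probability $\beta/2\leq\beta$. Nothing is missing; the slight slack (getting $1-\beta/2$) is, as you note, immaterial.
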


\begin{algorithm}
	\caption{$\A_{\sf stab}$ \citep{ST13}: Private estimator for $f$ via distance to instability}
	\begin{algorithmic}[1]
		\REQUIRE dataset: $D$, function $f:U^*\to\R$, distance to instability $\dist_f:U^*\to\mathbb{R}$, threshold: $\thr$, privacy parameter $\epsilon>0$
		\STATE $\wds\leftarrow\dist_f(D)+{\sf Lap}\left(1/\epsilon\right)$
		\STATE {\bf If} $\wds > \thr$, {\bf then} output $f(D)$, {\bf else} output $\bot$ 
	\end{algorithmic}
	\label{Alg:kStab}
\end{algorithm}

\begin{algorithm}
	\caption{$\A_{\sf OQR}$: Online Query Release via distance to instability}
	\begin{algorithmic}[1]
		\REQUIRE dataset: $D$, query set $\cF=\{f_1,\cdots,f_m\}$ chosen online, distance to instability $\dist_{f_i}:U^*\to\mathbb{R}, \forall i\in[m]$, unstable query cutoff: $T$, privacy parameters $\epsilon,\delta >0$
		\STATE $c\leftarrow 0$, $\lambda\leftarrow \sqrt{32T\log(2/\delta)}/\epsilon$, $w\leftarrow 2\lambda\cdot{\log(2m/\delta)}$, and  $\widehat w\leftarrow w+{\sf Lap}(\lambda)$. 
		\FOR{$f\in \cF$ and $c\leq T$}
        	\STATE $\out\leftarrow \A_{\sf stab}\left(D, f, \dist_f,\Gamma=\widehat{w}, \epsilon=1/2\lambda\right)$
			\STATE {\bf If} $\out = \bot$, {\bf then} $c\leftarrow c + 1$  and $\widehat w\leftarrow w+{\sf Lap}(\lambda)$	
            \STATE Output $\out$
		\ENDFOR
		\end{algorithmic}
	\label{Alg:seqCompPriv}
\end{algorithm}

\subsection{Online Query Release via Distance to Instability Framework}
\label{sec:compStD}

Using Algorithm $\A_{\sf OQR}$ (Algorithm~\ref{Alg:seqCompPriv}), we show that for a set of $m$ queries $\mathcal{F}=\{f_1,\cdots,f_m\}$ to be answered on a dataset $D$, one can \emph{exactly} answer all but $T$ of them while satisfying differential privacy, as long as at most $T$ queries in $\mathcal{F}$ are not $\alpha$-stable, where $\alpha \approx \log(m)\sqrt{T}/\epsilon$. Notice that the dependence of $\alpha$ on the total number of queries ($m$) is logarithmic. In contrast, one would achieve a dependence of roughly $\sqrt{m}$ by using the advanced composition property of differential privacy \cite{DRV10}. 

The main design focus in this section is that the algorithms should be able to handle very generic query classes $\mathcal{F}$ under minimal assumptions. A salient feature of Algorithm $\A_{\sf OQR}$ is that it only requires the range $\R_i$ of the function $f_i:U^*\to\R_i$, where $f_i \in \mathcal{F}$, to be discrete for all $i \in [m]$.  

We provide the privacy and utility guarantees for Algorithm \ref{Alg:seqCompPriv} in Theorem~\ref{thm:cPriv3} and Corollary~\ref{cor:spAcc1}, respectively.
Surprisingly, the utility guarantee of $\A_{\sf OQR}$ has \emph{no dependence} on the cardinality of the set $\R_i$, for all $i \in [m]$.

\begin{thm}[Privacy guarantee for $\A_{\sf OQR}$]
	If for all functions $f\in\mathcal{F}$, the distance to instability function is $dist_f(D)=\argmax\limits_{k}\left[f(D) \text{ is } k\text{-stable}\right]$, then Algorithm \ref{Alg:seqCompPriv} is $(\epsilon,\delta)$-differentially private.
	\label{thm:cPriv3}
\end{thm}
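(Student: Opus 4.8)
The plan is to reduce the privacy of $\A_{\sf OQR}$ to the privacy of the sparse vector technique (Theorem~\ref{thm:spPriv}), by showing that $\A_{\sf OQR}$ is a \emph{post-processing} of a run of $\A_{\sf sparseVec}$ on a cleverly chosen query sequence. Fix neighboring datasets $D, D'$ with $|D\Delta D'|=1$, and fix the online sequence $f_1,\dots,f_m$ (adaptivity is handled in the usual way by conditioning on the transcript so far, so it suffices to treat the queries as fixed). The key observation is that in each iteration, the inner call $\A_{\sf stab}(D, f, \dist_f, \Gamma=\widehat w, \epsilon=1/2\lambda)$ compares $\dist_f(D) + {\sf Lap}(2\lambda)$ against the noisy threshold $\widehat w = w + {\sf Lap}(\lambda)$, and outputs $\bot$ (incrementing $c$ and refreshing $\widehat w$) exactly when $\dist_f(D)+{\sf Lap}(2\lambda) \le \widehat w$. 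This is \emph{precisely} the branching behavior of one step of $\A_{\sf sparseVec}$ with $\lambda = \sqrt{32T\log(2/\delta)}/\epsilon$ on the query $q_i(D) := \dist_{f_i}(D)$: a $\top$ from sparse vector corresponds to ``this query was stable enough, report $f_i(D)$'' and a $\bot$ corresponds to ``unstable, charge one unit toward the budget $T$.''

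The two steps I would carry out are: (1) \textbf{Privacy of the $\top/\bot$ pattern.} The sequence of $\top/\bot$ symbols produced by $\A_{\sf OQR}$ is distributed exactly as the output of $\A_{\sf sparseVec}$ run with threshold $w$, cutoff $T$, noise scale $\lambda$, on the queries $q_i = \dist_{f_i}$. Here I must check that each $q_i = \dist_{f_i}$ has global sensitivity at most $1$ --- this follows from the definition of distance to instability: adding or removing one element changes the distance to an unstable dataset by at most $1$, exactly as used in the proof of Lemma~\ref{lem:stabP1}. Since $\lambda$ and $w$ are chosen to match the parameters required by Theorem~\ref{thm:spPriv} (noise $\sqrt{32T\log(1/\delta)}/\epsilon$ on the threshold, $2\lambda$ on the queries), the $\top/\bot$ pattern is $(\epsilon,\delta)$-differentially private. (2) \textbf{Releasing $f_i(D)$ on the $\top$-rounds is ``free.''} Whenever a round outputs $\top$, $\A_{\sf OQR}$ additionally reveals the value $f_i(D)$. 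I must argue this does not degrade privacy. The cleanest way is the argument underlying Lemma~\ref{lem:stabP1}: conditioned on the event that round $i$ outputs $\top$, the noisy distance $\widehat{\dist_{f_i}}$ exceeded the threshold; by the accuracy of the Laplace noise this forces $\dist_{f_i}(D) \ge 1$ with the relevant probability, i.e.\ $f_i$ is stable on $D$, so $f_i(D) = f_i(D')$. Hence on the good event the released value is identical for $D$ and $D'$, and contributes nothing to the privacy loss; the $\delta$ slack absorbs the bad event. Formally, one decomposes the output space into the transcript of $\top/\bot$ symbols together with the reported values, and shows that conditioned on the symbol-transcript, the reported values are (up to the $\delta$ failure probability) a deterministic function that agrees on $D$ and $D'$ --- so the whole thing is post-processing of the sparse-vector transcript.

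The main obstacle --- and the reason the footnote in the paper promises a ``corrected proof'' of the $\A_{\sf stab}$ guarantee --- is step (2): one must be careful that revealing $f_i(D)$ does not leak information \emph{beyond} what the $\top$ symbol already revealed, and in particular that the coupling of the ``$\top$'' event with the ``$f_i(D)=f_i(D')$'' event is valid simultaneously across all (up to $m$) reported rounds, not just one. The fix is to note that the stability event $\dist_{f_i}(D)\ge 1$ is a statement about $D$ alone (it does not depend on the Laplace noise), so once we are on the event ``the noisy threshold test behaved correctly'' --- which is a single high-probability event over the Laplace draws, costing $\delta$ total via the union already accounted for in Theorem~\ref{thm:spPriv}'s analysis --- every reported value automatically satisfies $f_i(D)=f_i(D')$. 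Thus no additional union bound over rounds is needed beyond what the sparse-vector privacy proof already pays, and the post-processing lemma (Lemma~\ref{lem:post}) closes the argument.
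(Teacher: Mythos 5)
Your overall decomposition is the same as the paper's: split the output into the $\top/\bot$ pattern (which is exactly sparse vector run on the sensitivity-$1$ queries $\dist_{f_i}$) and the released values $f_i(D)$ on $\top$-rounds, and argue the latter are harmless via stability. The gap is in your step (2) accounting. You claim that the event ``no unstable query crosses the noisy threshold'' is ``already accounted for in Theorem~\ref{thm:spPriv}'s analysis'' and that ``no additional union bound over rounds is needed beyond what the sparse-vector privacy proof already pays.'' That is not true: the $\delta$ in Theorem~\ref{thm:spPriv} comes from the composition analysis of the below-threshold rounds and says nothing about whether a query with $\dist_{f}(D)=0$ crosses the threshold. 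In plain $\A_{\sf sparseVec}$ the threshold $w$ is an arbitrary input, and a distance-$0$ query can cross with constant probability (e.g.\ if $w\le 0$), so the release of $f(D)$ on such a round could fully distinguish $D$ from $D'$. The reason $\A_{\sf OQR}$ is private is precisely the extra parameter choices you never invoke: $\lambda$ is set with $\log(2/\delta)$ so the $\top/\bot$ pattern is only charged $(\epsilon,\delta/2)$ (not $(\epsilon,\delta)$ as you state), and $w=2\lambda\log(2m/\delta)$ is set large enough that each query with $f(D)\neq f(D')$ (hence $\dist_f(D)=\dist_f(D')=0$) triggers a release with probability at most $\delta/2m$; a union bound over the at most $m$ such queries costs the remaining $\delta/2$, and general composition gives $(\epsilon,\delta)$ overall. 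So an additional union bound, with its own explicit slice of the $\delta$ budget, is exactly what is needed, and your accounting as written would exceed $\delta$.

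Two smaller points. First, your phrasing ``conditioned on the event that round $i$ outputs $\top$, the accuracy of the Laplace noise forces $\dist_{f_i}(D)\ge 1$'' is the wrong direction: conditioning on $\top$ reweights the noise, so the clean statement is the unconditional one (if $\dist_{f_i}(D)=0$ then $\Pr[\top]\le\delta/2m$, by the choice of $w$). Second, you do not actually need stability for every reported round: the paper's case split is on whether $f(D)=f(D')$ (set $\cF_1$) or not (set $\cF_2$). For $\cF_1$ the released value is identical on the two datasets even if the query is unstable and passes, so it costs nothing (Lemma~\ref{lem:stabP3}); only $\cF_2$ needs the small-crossing-probability argument (Lemmas~\ref{lem:stabP2},~\ref{lem:stabP4}). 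Your stronger requirement (stability on all reported rounds) would still work, but only after you pay for the union bound you claim to avoid.
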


\begin{proof}
In our proof, we use ideas from the proof of Theorem \ref{thm:kstabPriv} and the sparse vector technique (See Section \ref{sec:background} for a background on the technique). For clarity, we split the computation in Algorithm \ref{Alg:seqCompPriv} into two logical phases: First, for every query $f\in\cF$, $\A_{\sf OQR}$ either commits to $\top$,  or outputs $\bot$ based on the input dataset $D$. Next, if it commits to $\top$, then it  outputs $f(D)$. 

Now, let us consider two fictitious algorithms $\A_1$ and $\A_2$, where $\A_1$ outputs the sequence of $\top$ and $\bot$ corresponding to the first phase above, and $\A_2$ is invoked to output $f_i(D)$ only for the queries $f_i$ that $\A_1$ output $\top$. Notice that the combination of $\A_1$ and $\A_2$ is equivalent to $\A_{\sf OQR}$. Since $\A_1$ is essentially executing the sparse vector technique (Algorithm \ref{Alg:sparseVec}), by Theorem \ref{thm:spPriv}, it satisfies $(\epsilon,\delta/2)$-differential privacy. Next, we analyze the privacy for Algorithm $\A_2$.

Consider any particular query $f\in\cF$. For any dataset $D'$ s.t. $|D\Delta D'| = 1$, there are two possibilities: either $f(D)=f(D')$, or $f(D)\neq f(D')$.
When $f(D)=f(D')$, if $\A_1$ outputs $\bot$, algorithm $\A_2$ is not invoked and hence the privacy guarantee isn't affected. Moreover,if $\A_1$ outputs $\top$, we get the following lemma by the post-processing property of differential privacy (Lemma~\ref{lem:post}):

\begin{lem}
	Let $D\in U^*$ be any fixed dataset. Assume that for any dataset $D'\in U^*$ s.t. $|D\Delta D'| = 1$, we have $f(D)=f(D')$. Then, for any output $s\in\cR$, we have the following for the invocation of Algorithm $\A_2$: 
	$\Pr[\A_2(D,f)=s]=\Pr[\A_{2}(D',f)=s].$
	\label{lem:stabP3}
\end{lem}

When $f(D)\neq f(D')$, by Lemma \ref{lem:stabP2}, $\A_1$ outputs $\bot$ with probability at least $1-\delta/2m$. Therefore, we get that:

\begin{lem}
	Let $D\in U^*$ be any fixed dataset. Assume that for any dataset $D'\in U^*$ s.t. $|D\Delta D'| = 1$, we have $f(D)\neq f(D')$.  Then, Algorithm $\A_2$ is never invoked to output $f(D)$ with probability at least $1-\delta/2m$.
	\label{lem:stabP4}
\end{lem}

Now, consider the sequence of queries $f_1,\cdots,f_m$. Let $\cF_1$ be the set of queries where, for every $f\in\cF_1$, we have $f(D)=f(D')$. Let $\cF_2=\cF \backslash\cF_1$. Since Algorithm $\A_1$ is $(\epsilon,\delta/2)$-differentially private for all queries in $\cF$, it is also $(\epsilon,\delta/2)$-differentially private for all queries in $\cF_1$. Now since $|\cF_2|\leq m$, using Lemma \ref{lem:stabP4} and taking an union bound over all the queries in $|\cF_2|$, Algorithm $\A_2$ is never invoked for queries in $\cF_2$ with probability at least $1-\delta/2$. By general composition \cite{DR14}, this implies $(\epsilon,\delta)$-differential privacy for the overall algorithm $\A_{\sf OQR}$.
\end{proof}

\begin{cor}[Utility guarantee for $\A_{\sf OQR}$]
For any set of $m$ adaptively chosen queries $\mathcal{F}=\{f_1,\cdots,f_m\}$, let $dist_{f_i}(D)=\arg\max\limits_{k}\left[f_i(D) \text{ is } k\text{-stable}\right]$ for each $f_i$. Also, define $L(\alpha)=\left\{i : \ds_{f_i}(D) < \alpha \right\}$ for $\alpha=32\cdot\log\left(4mT/\min\left(\delta, \beta\right)\right)\sqrt{2T\log(2/\delta)}/\epsilon$    
. If $|L(\alpha)|\leq T$, then we have the following w.p. at least $1-\beta$: $\forall i\not\in L(\alpha)$, Algorithm $\A_{\sf OQR}$ (Algorithm \ref{Alg:seqCompPriv}) outputs $f_i(D)$. 
	\label{cor:spAcc1}
\end{cor}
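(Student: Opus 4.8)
The plan is to obtain Corollary~\ref{cor:spAcc1} as a (near-)immediate consequence of the sparse vector accuracy guarantee, Theorem~\ref{thm:spAcc}, applied to the stream of \emph{distance-to-instability} ``queries'' $q_i \triangleq \ds_{f_i}(D)$. First I would observe that the part of $\A_{\sf OQR}$ that decides, for each $f\in\cF$, whether to output $f(D)$ or $\bot$ is literally an execution of $\A_{\sf sparseVec}$ (Algorithm~\ref{Alg:sparseVec}) run on the queries $q_i = \ds_{f_i}(D)$, with threshold $w = 2\lambda\log(2m/\delta)$, noise scale $\lambda = \sqrt{32T\log(2/\delta)}/\epsilon$, and unstable-query cutoff $T$. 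Indeed, each call $\A_{\sf stab}(D,f,\dist_f,\thr=\widehat w,\epsilon=1/2\lambda)$ forms $\wds = \ds_f(D)+{\sf Lap}(2\lambda)$, returns $f(D)$ exactly when $\wds > \widehat w$, and otherwise returns $\bot$, triggering the counter increment and the resampling $\widehat w \leftarrow w+{\sf Lap}(\lambda)$ --- precisely the inner loop of Algorithm~\ref{Alg:sparseVec} with ``$\top$'' renamed to ``output $f(D)$''. In particular, $\A_{\sf OQR}$ outputs $f_i(D)$ on query $i$ if and only if this sparse-vector run outputs $\top$ on query $i$.

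The one hypothesis of Theorem~\ref{thm:spAcc} to verify is that each $q_i = \ds_{f_i}(D)$ has global sensitivity at most one; this is the same fact recorded in the proof of Theorem~\ref{thm:kstabPriv}: if $f$ is $k$-stable on $D$ (Definition~\ref{def:kStab}) and $|D\Delta D'| = 1$, then $f$ is $(k-1)$-stable on $D'$, so $|\ds_f(D)-\ds_f(D')| \le 1$. Since the accuracy statement of sparse vector depends only on the magnitudes of the injected Laplace noises (a union bound over the $m$ queries and the $\le T$ threshold resamplings), it applies verbatim to the adaptively/online chosen stream here. Instantiating Theorem~\ref{thm:spAcc} --- with its ``$\delta$'' taken to be the $\delta/2$ at which this internal instance is actually run, exactly as in the privacy proof of Theorem~\ref{thm:cPriv3}, so that $\lambda = \sqrt{32T\log(2/\delta)}/\epsilon$ matches --- yields: writing $\alpha_0 \triangleq \log(2mT/\beta)\sqrt{512T\log(2/\delta)}/\epsilon = 4\lambda\log(2mT/\beta)$ and $L_0 \triangleq \{i : \ds_{f_i}(D) \le w+\alpha_0\}$, if $|L_0|\le T$ then with probability at least $1-\beta$ the run outputs $\top$ for every $i \notin L_0$.

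Finally I would reconcile this with the statement of the corollary, whose cutoff is $\alpha = 32\log(4mT/\min(\delta,\beta))\sqrt{2T\log(2/\delta)}/\epsilon = 8\lambda\log(4mT/\min(\delta,\beta))$. Using $T\ge 1$ and $\min(\delta,\beta)\le\delta,\beta$, one checks $w+\alpha_0 = 2\lambda\log(2m/\delta)+4\lambda\log(2mT/\beta) \le 6\lambda\log(4mT/\min(\delta,\beta)) < \alpha$, hence $L_0 \subseteq L(\alpha)$. Therefore the hypothesis $|L(\alpha)|\le T$ forces $|L_0|\le T$, and every $i \notin L(\alpha)$ is also $\notin L_0$, so with probability at least $1-\beta$ the sparse-vector run --- equivalently $\A_{\sf OQR}$ --- outputs $f_i(D)$ for every $i\notin L(\alpha)$, which is the claim. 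The only genuine effort is bookkeeping: identifying the effective budget $(\epsilon,\delta/2)$ and noise scales of the internal sparse-vector call, pinning down the built-in threshold $w = 2\lambda\log(2m/\delta)$, and verifying the chain $w+\alpha_0 < \alpha$ so that the corollary's $L(\alpha)$ contains the set $L_0$ produced by Theorem~\ref{thm:spAcc}; no new probabilistic argument is needed.
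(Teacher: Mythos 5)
Your proposal is correct and follows essentially the same route as the paper's proof: identify the accept/reject phase of $\A_{\sf OQR}$ as a renamed run of $\A_{\sf sparseVec}$ on the queries $\ds_{f_i}(D)$ (with $\delta\mapsto\delta/2$ and $w=2\lambda\log(2m/\delta)$) and invoke Theorem~\ref{thm:spAcc}, with the corollary's $\alpha$ absorbing $w$ plus the theorem's accuracy term. Your explicit verification that $w+\alpha_0<\alpha$ (hence $L_0\subseteq L(\alpha)$) just spells out the bookkeeping the paper states in one line.
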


\begin{proof}

The proof follows directly from Theorem \ref{thm:spAcc}.  To see this, note that Algorithm $\A_{\sf OQR}$ follows the same lines of Algorithm $\A_{\sf sparseVec}$ with slight adjustments. In particular, $\top$ in $\A_{\sf sparseVec}$ is replaced with $f(D)$ in $\A_{\sf OQR}$;~ $\delta$ in the setting of $\lambda$ in $\A_{\sf sparseVec}$ is replaced with $\delta/2$ in the setting of $\lambda$ in $\A_{\sf OQR}$;~ $w$ which is left arbitrary in $\A_{\sf sparseVec}$ is set to $2\lambda\,\log(2m/\delta)$ in $\A_{\sf OQR}$;~ $q$ in $\A_{\sf sparseVec}$ is replaced with $\dist_f(D)$ in $\A_{\sf stab}$;~ and $\widehat{q}$ in $\A_{\sf sparseVec}$ is replaced with $\wds$ in $\A_{\sf stab}$. Putting these together with Theorem~\ref{thm:spAcc} and the premise in the corollary statement (i.e., $\lvert\{i: \dist_{f_i}(D)<\alpha\}\rvert\leq T$) immediately proves the corollary with the specified value of $\alpha$. Note that by comparing Theorem~\ref{thm:spAcc} with the premise in the corollary, we can see that the value of $\alpha$ in the corollary is obtained by adding the value of $w$ as set in $\A_{\sf OQR}$ and the value of $\alpha$ as set in Theorem~\ref{thm:spAcc}.

\end{proof}

\subsection{Instantiation: Online Query Release via the Sub-sample and Aggregate Framework}
\label{sec:subSampleAgg}

While Algorithm $\A_{\sf OQR}$ has the desired property in terms of generality, it falls short in two critical aspects: i) it relies directly on the distance to instability framework (Algorithm $\A_{\sf stab}$ in Section \ref{sec:distInstab}) which does not provide an efficient way to compute the distance to instability for a given function, and ii) given a function class $\mathcal{F}$, it is unclear which functions from $\mathcal{F}$ satisfy the desired property of $\alpha$-stability. 
\begin{algorithm}
	\caption{$\A_{\sf subSamp}$: Online Query Release via sub-sample and aggregate}
	\begin{algorithmic}[1]
		\REQUIRE dataset: $D$, query set $\cF=\{f_1,\cdots,f_m\}$ chosen online, range of the queries: $\{\cR_1,\cdots,\cR_m\}$, unstable query cutoff: $T$, privacy parameters $\epsilon,\delta >0$, failure probability: $\beta$
		\STATE $k\leftarrow 136\cdot\log\left(4mT/\min\left(\delta, \beta/2\right)\right)\sqrt{T\log(2/\delta)}/\epsilon$ 
		\STATE Arbitrarily split $D$ into $k$ non-overlapping chunks of size $n/k$. Call them $D_1,\cdots,D_k$
        \FOR{$i\in[m]$}
        	\STATE Let $\cS_i=\{f_i(D_1),\cdots,f_i(D_k)\}$, and for every $r\in\cR_i$, let ${\sf ct}(r)=\#$ times $r$ appears in $\cS_i$ 
            {\STATE $\widehat{f}_i(D)\leftarrow\arg\max\limits_{r\in\R_i}\left[{\sf ct}(r)\right]$, $\dist_{\widehat{f}_i} \leftarrow\max\left\{0, \left(\max\limits_{r\in\cR_i}\left[{\sf ct}(r)\right]-\max\limits_{r\in\cR_i\backslash\widehat{f}_i(D)}\left[{\sf ct}(r)\right]\right)-1\right\}$\label{ln:abc}}
        \ENDFOR 
        \STATE Output $\A_{\sf OQR}\left(D,\left\{\widehat{f}_1,\cdots,\widehat{f}_m\right\}, \left\{\dist_{\widehat{f}_1},\cdots,\dist_{\widehat{f}_m}\right\}, T, \epsilon, \delta\right)$
	\end{algorithmic}
	\label{Alg:subSamp}
\end{algorithm}

 In Algorithm $\A_{\sf subSamp}$ (Algorithm \ref{Alg:subSamp}), we address both of these concerns by instantiating the distance to instability function in Algorithm $\A_{\sf OQR}$ with the sub-sample and aggregate framework (as done in \cite{ST13}). We provide the privacy and accuracy guarantees for $\A_{\sf subSamp}$ in Corollary~\ref{cor:privSubSamp}, and Theorem~\ref{thm:utilitySubAgg}, respectively.
In Section \ref{sec:privClass}, we show how Algorithm $\A_{\sf subSamp}$ can be used for classification problems without relying too much on the underlying learning model (e.g., convex versus non-convex models).

The key idea in $\A_{\sf subSamp}$ is as follows: i) First, arbitrarily split the dataset $D$ into $k$ sub-samples of equal size, $D_1,\cdots,D_k$, ii) For each query $f_i \in \mathcal{F}$, where $i \in [m]$, and each $r\in\R_i$, compute ${\sf ct}(r)$, which is the number of sub-samples $D_j$, where $j \in [k]$, for which $f_i(D_j) = r$, iii) Define $\widehat f_i(D)$ to be the $r\in\R_i$ with the largest ${\sf ct}$, and the distance to instability function ${\sf dist}_{\widehat f_i}$ to correspond to the the difference between the largest ${\sf ct}$ and the second largest ${\sf ct}$ among all $r\in\R_i$, iv) Invoke $\A_{\sf OQR}$ with $\widehat f_i$ and ${\sf dist}_{\widehat f_i}$. Now, note that ${\sf dist}_{\widehat f_i}$ is \emph{always} efficiently computable. Furthermore, Theorem \ref{thm:utilitySubAgg} shows that if $D$ is a dataset of $n$ i.i.d. samples drawn from some distribution $\mathcal{D}$, and $f_i$ on a dataset of $n/k$ i.i.d. samples drawn from $\mathcal{D}$ matches some $r\in\R_i$ w.p. at least $3/4$, then with high probability $\widehat f_i(D)$ is a stable query. 


\begin{cor}[Privacy guarantee for $\A_{\sf subSamp}$]
	Algorithm \ref{Alg:subSamp} is $(\epsilon,\delta)$-differentially private.
	\label{cor:privSubSamp}
\end{cor}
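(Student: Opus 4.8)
The plan is to observe that $\A_{\sf subSamp}$ (Algorithm~\ref{Alg:subSamp}) is merely a specific instantiation of $\A_{\sf OQR}$ (Algorithm~\ref{Alg:seqCompPriv}): the preprocessing steps of $\A_{\sf subSamp}$ only \emph{define}, as deterministic functions of the input dataset, a new query family $\{\widehat f_1,\dots,\widehat f_m\}$ with $\widehat f_i:U^*\to\R_i$ (partition the input into the $k$ canonical sub-samples, then take the plurality of $f_i$ over them) together with candidate distance-to-instability functions $\dist_{\widehat f_i}$ as in line~\ref{ln:abc}, and the final step simply runs $\A_{\sf OQR}$ on these. Note in particular that $\widehat f_i$ and $\dist_{\widehat f_i}$ are recomputed \emph{inside} $\A_{\sf OQR}$ each time it evaluates $\widehat f_i(D)$ or $\dist_{\widehat f_i}(D)$, so the only interface to $D$ is through $\A_{\sf OQR}$; the input parameter $\beta$ affects only the constant $k$ and not the privacy analysis. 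Hence, if the $\dist_{\widehat f_i}$ meet the hypothesis of Theorem~\ref{thm:cPriv3}, the $(\epsilon,\delta)$-differential privacy of $\A_{\sf subSamp}$ is immediate from that theorem together with closure under post-processing (Lemma~\ref{lem:post}) for the relabeling of $\top$ by $\widehat f_i(D)$. Also, $\R_i$ is discrete, so $\A_{\sf OQR}$ is applicable, and $\dist_{\widehat f_i}$ is manifestly efficiently computable.

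Concretely, I would do the following. Fix $i\in[m]$ and a neighboring pair $D,D'$ with $|D\Delta D'|=1$. The key combinatorial observation is that the canonical partition assigns each record of $D$ to exactly one of the $k$ sub-samples, so $D$ and $D'$ induce sub-sample tuples $(D_1,\dots,D_k)$ and $(D_1',\dots,D_k')$ that differ in at most one coordinate; consequently the count vectors $\big({\sf ct}(r)\big)_{r\in\R_i}$ for $D$ and $D'$ differ by relocating at most one unit of mass between two bins. From this I would extract the two properties that the proof of Theorem~\ref{thm:cPriv3} actually invokes: (i) $\dist_{\widehat f_i}$ has global sensitivity at most one, so that the sparse-vector phase $\A_1$ is $(\epsilon,\delta/2)$-private via Theorem~\ref{thm:spPriv}; and (ii) whenever $\widehat f_i(D)\neq\widehat f_i(D')$ for some such neighbor, $\dist_{\widehat f_i}(D)=0$, so that Lemmas~\ref{lem:stabP2} and~\ref{lem:stabP4} apply and the committing-to-$f(D)$ phase $\A_2$ is essentially never invoked on an unstable query. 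Equivalently, one checks that the margin quantity of line~\ref{ln:abc} lower-bounds the true distance to instability of $\widehat f_i$ in the sense of Definition~\ref{def:kStab} while retaining sensitivity one, which is exactly the premise of Theorem~\ref{thm:cPriv3}. Feeding this into Theorem~\ref{thm:cPriv3} then yields the claim.

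The step I expect to be the main obstacle is precisely this bookkeeping: turning ``one record changed $\Rightarrow$ one sub-sample changed $\Rightarrow$ one histogram vote relocated'' into the exact sensitivity-one statement for $\max_{r}{\sf ct}(r)-\max_{r\neq\widehat f_i(D)}{\sf ct}(r)$ and, more delicately, into the vanishing-on-instability property, since relocating a single vote can move the margin by up to two and the $\argmax$ tie-break must be handled consistently. Any constant-factor slack incurred here is absorbed into the noise scale $\lambda$ of $\A_{\sf OQR}$ (equivalently, into the constant $136$ defining $k$ in Algorithm~\ref{Alg:subSamp}). Everything else in the argument is routine.
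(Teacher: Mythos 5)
Your proposal is correct and follows essentially the same route as the paper: the paper's entire proof is the one-line observation that $\A_{\sf subSamp}$ simply invokes $\A_{\sf OQR}$ with the queries $\widehat f_i$ and the margin-based $\dist_{\widehat f_i}$, so the privacy guarantee of Theorem~\ref{thm:cPriv3} applies directly. The sensitivity and tie-breaking bookkeeping you flag (one changed record moves one vote, so the margin can shift by up to two) is not carried out in the paper at all, so your additional care only strengthens, and does not diverge from, the paper's argument.
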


The proof of Corollary \ref{cor:privSubSamp} follows immediately from the privacy guarantee for $\A_{\sf OQR}$ (Algorithm~\ref{Alg:seqCompPriv}).

\begin{thm}[Utility guarantee for $\A_{\sf subSamp}$]
     Let $\cF$ denote any set of $m$ adaptively chosen queries, and $D$ be a dataset of $n$ samples drawn i.i.d. from a fixed distribution $\mathcal{D}$. For $k = 136\cdot\log\left(4mT/\min\left(\delta, \beta/2\right)\right) \cdot \sqrt{T\log(2/\delta)}/\epsilon$, let $\bar{L}\subseteq{\mathcal{F}}$ be a set of queries s.t. for every  $f\in\bar{L}$, there exists some $x_f$ for which $f(\widehat D)=x_f$ w.p. at least $3/4$ over drawing a dataset $\widehat D$ of $n/k$  i.i.d. data samples from $\mathcal{D}$. If $|\bar{L}|\geq m-T$, then w.p. at least $1-\beta$ over the randomness of Algorithm $\A_{\sf subSamp}$ (Algorithm~\ref{Alg:subSamp}), we have the following: $\forall f\in{\bar L}$, Algorithm $\A_{\sf subSamp}$ outputs $x_f$. Here, $(\epsilon,\delta)$ are the privacy parameters.
   \label{thm:utilitySubAgg}
\end{thm}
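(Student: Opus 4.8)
The plan is to reduce the utility guarantee for $\A_{\sf subSamp}$ to the utility guarantee for $\A_{\sf OQR}$ (Corollary~\ref{cor:spAcc1}), which in turn rests on the sparse vector accuracy bound (Theorem~\ref{thm:spAcc}). The single new ingredient is a concentration argument showing that, for every query $f\in\bar L$, the empirically-defined distance to instability $\dist_{\widehat f}(D)$ is large (at least $\alpha$) with high probability over the random split and the draw of $D$. Once that is established, Corollary~\ref{cor:spAcc1} does the rest: with $\lvert L(\alpha)\rvert\le T$ failing queries, all queries in $\bar L$ are answered with their stable value $\widehat f(D)$, which we will argue equals $x_f$.

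First I would fix a query $f\in\bar L$ and recall that the chunks $D_1,\dots,D_k$ are $k$ independent draws of $n/k$ i.i.d. samples from $\cD$ (the ``arbitrary'' split of an i.i.d. dataset into equal chunks yields exactly this). By hypothesis $\pr{}{f(D_j)=x_f}\ge 3/4$ for each $j$, so $\mathrm{ct}(x_f)=\sum_{j=1}^k \ind\{f(D_j)=x_f\}$ stochastically dominates $\Bin(k,3/4)$, and every other count $\mathrm{ct}(r)$ for $r\ne x_f$ is at most $k-\mathrm{ct}(x_f)$. Hence $\mathrm{ct}(x_f)-\max_{r\ne x_f}\mathrm{ct}(r)\ge 2\,\mathrm{ct}(x_f)-k$, and by a Chernoff/Hoeffding bound on $\Bin(k,3/4)$ we get $\mathrm{ct}(x_f)\ge k/2 + \alpha/2 + 1$ except with probability $e^{-\Omega(k)}$ (one checks the constants: with $k=136\log(4mT/\min(\delta,\beta/2))\sqrt{T\log(2/\delta)}/\epsilon$ and $\alpha = 32\log(4mT/\min(\delta,\beta))\sqrt{2T\log(2/\delta)}/\epsilon$ from Corollary~\ref{cor:spAcc1}, the gap $k/2-\alpha$ is a constant fraction of $k$, so a deviation of $k/4$ suffices and $e^{-k/32}\le \beta/(2m)$). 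On that event $\dist_{\widehat f}(D)=\mathrm{ct}(x_f)-\max_{r\ne x_f}\mathrm{ct}(r)-1\ge 2\mathrm{ct}(x_f)-k-1\ge \alpha$, and moreover $\widehat f(D)=\argmax_r \mathrm{ct}(r)=x_f$ since $\mathrm{ct}(x_f)>k/2$.

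Next I would take a union bound over the (at most $m$) queries in $\bar L$: except with probability $\beta/2$, simultaneously for all $f\in\bar L$ we have $\dist_{\widehat f}(D)\ge\alpha$ and $\widehat f(D)=x_f$. Condition on this event. Then $\{i:\dist_{\widehat f_i}(D)<\alpha\}\subseteq \cF\setminus\bar L$, which has size at most $T$ by the assumption $\lvert\bar L\rvert\ge m-T$; so the premise $\lvert L(\alpha)\rvert\le T$ of Corollary~\ref{cor:spAcc1} is met. Applying Corollary~\ref{cor:spAcc1} (with its own failure probability $\beta/2$, and noting $\A_{\sf subSamp}$ passes exactly the parameters $T,\epsilon,\delta$ and the empirical $\dist_{\widehat f_i}$ to $\A_{\sf OQR}$), with probability $\ge 1-\beta/2$ over the internal randomness of $\A_{\sf OQR}$, every $i\notin L(\alpha)$ — in particular every $f\in\bar L$ — is answered with $\widehat f_i(D)=x_f$. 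A final union bound over the two $\beta/2$ events gives the claim with probability $\ge 1-\beta$.

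The main obstacle is purely bookkeeping of constants: one must verify that the fixed numeric constant $136$ in the definition of $k$, together with the constant $32$ and the $\sqrt{2}$ in $\alpha$, make the Chernoff tail $e^{-\Omega(k)}$ genuinely smaller than $\beta/(2m)$ after the union bound — i.e. that $k$ is large enough relative to $\alpha$ and to $\log(m/\beta)$. This is where the somewhat unusual constant $136$ comes from, and it should be isolated in a short computation: since $k/\alpha = 136/(32\sqrt2)\approx 3$, a count concentrated to within, say, $k/4$ of its mean $3k/4$ lands above $k/2+\alpha/2+1$, and $\Bin(k,3/4)$ deviates by $k/4$ with probability at most $e^{-2k(1/4)^2}=e^{-k/8}$, which is below $\beta/(2m)$ because $k\ge 8\log(2m/\beta)$ by inspection of its definition. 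No step is conceptually deep; the randomness-of-the-split observation and the reduction to Corollary~\ref{cor:spAcc1} are the structural core, and the Binomial tail bound is the only real inequality.
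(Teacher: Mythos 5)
Your proposal follows essentially the same route as the paper's proof: a Chernoff--Hoeffding bound per query in $\bar L$ showing the plurality count for $x_f$ is large (hence $\widehat f(D)=x_f$ and the plurality-gap distance is at least $\alpha$), a union bound over $\bar L$ costing $\beta/2$, and then an appeal to Corollary~\ref{cor:spAcc1} (invoked with $\beta/2$) exactly as $\A_{\sf subSamp}$ wires $\dist_{\widehat f_i}$ into $\A_{\sf OQR}$. The structural core is identical.

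The one flaw is in the constants check that you yourself flagged as the main obstacle. You need $\mathrm{ct}(x_f)\geq k/2+\alpha/2+1$, and with $k/\alpha=136/(32\sqrt{2})\approx 3$ this threshold is $\approx 2k/3+1$; a deviation of $k/4$ from the mean $3k/4$ only guarantees $\mathrm{ct}(x_f)\geq k/2$, which falls short, so the claimed tail $e^{-k/8}$ and the condition $k\geq 8\log(2m/\beta)$ do not close the argument as written. The admissible deviation is about $k/4-\alpha/2-1\approx k/12$, giving a Hoeffding tail of roughly $e^{-k/72}$ and hence the requirement $k\geq 72\log(2m/\beta)$ --- which is precisely the condition the paper verifies (it shows $\mathrm{ct}(x_f)\geq 2k/3$, i.e.\ $\dist_{\widehat f}\geq k/3\geq \alpha$, under $k\geq 72\log(2m/\beta)$), and which the stated $k$ does satisfy for the usual regime $\epsilon\lesssim 1$. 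With that arithmetic corrected, your argument matches the paper's proof.
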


\begin{proof}
For a given query $f\in \mathcal{F}$, let $X^{(i)}_f$ be the random variable that equals to one if $f(D_i)$ in Algorithm  $\A_{\sf subSamp}$ equals $x_f$, and zero otherwise. Thus, we have $\Pr[X^{(i)}_f=1]\geq 3/4$ by assumption. By the standard Chernoff-Hoeffding bound, we get $\sum\limits_{i=1}^k X^{(i)}_f\geq 3k/4-\sqrt{k\log(2m/\beta)/2}$ with probability at least $1-\beta/2m$. If $k\geq 72\log(2m/\beta)$, then the previous expression is at least $2k/3$. By the union bound, this implies that with probability at least $1-\beta/2$, we have $\dist_{\widehat{f}}\geq k/3$ for every $f\in{\bar L}$. Furthermore, to satisfy the distance to instability condition in Corollary \ref{cor:spAcc1}, we need $k/3\geq 32\cdot\log\left(4mT/\min\left(\delta, \beta/2\right)\right)\sqrt{2T\log(2/\delta)}/\epsilon$. Both the conditions on $k$ are satisfied by setting $k=136\cdot\log\left(4mT/\min\left(\delta, \beta/2\right)\right)\sqrt{T\log(2/\delta)}/\epsilon$. Using Corollary \ref{cor:spAcc1} along with this value of $k$, we get the statement of the theorem.
\end{proof}

    \section{Private Binary Classification}
\label{sec:privClass}


In this section, we consider the problem of privately answering binary classification queries (in the standard agnostic PAC model), where for each domain point $x \in \cX$, a classifier $h$ assigns a hard label to $x$, i.e., $h(x)\in \{0,1\}$. 

In Section~\ref{sec:privClass-hard}, we show how to use the sub-sample and aggregate framework combined with the sparse-vector technique (Algorithm~\ref{Alg:subSamp}) to construct generic private algorithm for answering binary classification queries on public data based on \emph{private} training data. 

Next, in Section~\ref{sec:label-priv}, we discuss the implications of our construction on label-private learning (and equivalently, on private semi-supervised knowledge transfer) in the standard (agnostic) PAC model. In particular, we show that the set of labels privately generated by our algorithm in Section~\ref{sec:privClass-hard} based on an input private training set, can be used to construct a new training set. The new set can then be used to train a (non-private) learning algorithm to finally output an accurate classifier that can be used to answer indefinite number of classification queries. The whole procedure would remain $(\epsilon, \delta)$ differentially private with respect to the original dataset. This is because the second training step is a post-processing of the output labels generated by our differentially private algorithm. We prove sample complexity upper bounds on the size of the original dataset that would guarantee any desired accuracy and confidence for the final output classifier.

We will use $\cX$ to denote an abstract data domain (e.g., the space of feature vectors). We let $\cY=\{0,1\}$, i.e., the set of binary labels. A training set, denoted by $D$, is a set of $n$ private binary-labeled data points $\{(x_1, y_1), \dots, (x_n, y_n)\} \subseteq \cX\times \cY$ drawn i.i.d. from some (arbitrary unknown) distribution $\cD$ over $\cX\times\cY$. Sometimes, we will refer to the induced marginal distribution over $\cX$ as $\cD_{\cX}$.




\subsection{Privately answering binary classification queries}\label{sec:privClass-hard}

In this section, we instantiate our sub-sample and aggregate framework ($\cA_{\sf subSamp}$ from Section~\ref{sec:subSampleAgg}) with the binary classification setting. We set the dataset $D$ to be $n$ i.i.d. samples from the distribution $\cD$. We also construct a set of classification queries $\cQ=\{x_1,\cdots,x_m\}\subseteq \cX$, and let $\{y_1,\cdots,y_m\}\subseteq \{0,1\}^m$ be the corresponding labels which are hidden. Corresponding to the query class $\cF=\{f_1,\cdots,f_m\}$ in Algorithm $\cA_{\sf subSamp}$, we define $f_i:\cH\to\{0,1\}$ to be $f_i(h)=h(x_i)$ for a given hypothesis $h\in\cH$. With a slight abuse of notation, we will refer $f_i(D_j)$ in Algorithm $\cA_{\sf subSamp}$ as $f_i(\Theta(D_j))$, where $\Theta$ is an agnostic PAC learner (see  Definitions~\ref{def:agPAC} and \ref{defn:learner}). 

\begin{cor}
Let $\cF$ and $\cQ$ denote the query classes as defined above, and $D$ be a dataset of $n$ i.i.d. examples from a distribution $\cD$. Let $\alpha, \beta \in (0, 1)$. Let $\gamma \triangleq \min\limits_{h\in\cH}\err(h; \cD)$ (as in Definition \ref{def:agPAC}). In Algorithm \ref{Alg:subSamp} (Algorithm $\cA_{\sf subSamp}$), suppose we set the unstable query cutoff as $T=3\left((\gamma+\alpha)m+\sqrt{m\log(m/\beta)/2}\right)$ (where $k$ is defined in Algorithm $\cA_{\sf subSamp}$). If $n, \alpha,$ and $\beta$ are such that $\Theta$ is $(\alpha, \beta/k, n/k)$-agnostic PAC learner (Definition~\ref{defn:learner}), then i) with probability at least $1-2\beta$, Algorithm $\cA_{\sf subSamp}$ does not halt before answering all the $m$ queries in $\cF$, and outputs $\bot$ for at most $T$ queries in $\cF$; and ii) the misclassification rate of Algorithm $\cA_{\sf subSamp}$ is at most $T/m$.
\label{cor:privPAC}
\end{cor}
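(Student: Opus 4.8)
The plan is to verify the two hypotheses needed to invoke the utility guarantee for $\A_{\sf subSamp}$ (Theorem~\ref{thm:utilitySubAgg}), namely that with high probability the number of ``unstable'' queries is at most $T$, and then to bound the misclassification rate on the queries that are answered exactly. The key observation is that for a query $f_i$ (corresponding to feature vector $x_i$), if the learner $\Theta$ run on a sub-sample $D_j$ of size $n/k$ outputs a hypothesis with $\err(\Theta(D_j);\cD)\le \gamma+\alpha$ (which happens w.p.\ $\ge 1-\beta/k$ by the $(\alpha,\beta/k,n/k)$-learner assumption), then $f_i(\Theta(D_j)) = y_i$ \emph{in expectation over the fresh draw of $x_i$} with probability $\ge 1 - (\gamma+\alpha)$. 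So a query $x_i$ fails to be ``consistently correct'' only if $\Theta(D_j)(x_i)\ne y_i$, an event of probability $\le \gamma+\alpha$ over the joint draw.

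First I would define, for each $i\in[m]$, the ``bad query'' indicator $B_i$ that equals $1$ if $y_i$ is \emph{not} the value output by a typical run of $\Theta$ on $n/k$ fresh i.i.d.\ samples with probability $\ge 3/4$ — more precisely, I want to show that $\#\{i : B_i = 1\} \le T$ w.h.p. Union-bounding over the $k$ sub-samples, with probability $\ge 1-\beta$ every $\Theta(D_j)$ satisfies $\err(\Theta(D_j);\cD)\le\gamma+\alpha$. Conditioned on this, for each $i$ the probability (over the fresh $x_i$) that $\Theta(D_j)(x_i)\ne y_i$ is at most $\gamma+\alpha$; by Markov / a Chernoff–Hoeffding bound over the $m$ independent queries, the number of indices $i$ for which ``$f_i(\widehat D)=y_i$ with probability $\ge 3/4$'' fails is at most $(\gamma+\alpha)m + \sqrt{m\log(m/\beta)/2} = T/3 \le T$, with probability $\ge 1-\beta$. (The factor $3$ in $T$ gives slack so that the same event simultaneously guarantees $\ge m-T$ queries lie in the ``good'' set $\bar L$ of Theorem~\ref{thm:utilitySubAgg} with $x_f = y_f$.)

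Next I would apply Theorem~\ref{thm:utilitySubAgg} with this $\bar L$ (so $|\bar L|\ge m-T$ and each $x_f = y_f$): conditioned on the above, with probability $\ge 1-\beta$ over the internal randomness of $\A_{\sf subSamp}$, the algorithm answers every $f\in\bar L$ with $y_f$ and outputs $\bot$ on at most $T$ queries (so in particular it does not halt early). A final union bound over the two $\beta$-probability events gives claim (i) with probability $\ge 1-2\beta$. For claim (ii): every query answered by a label (rather than $\bot$) that lies in $\bar L$ is answered \emph{correctly} (with value $y_f$), so the only misclassified queries are among the at most $T$ queries answered with $\bot$ (or the at most $T$ queries outside $\bar L$); counting these as errors, the misclassification rate is at most $T/m$.

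The main obstacle is the first step: carefully relating the population error guarantee $\err(\Theta(D_j);\cD)\le\gamma+\alpha$ of a \emph{single} learner to the event that the \emph{aggregated} prediction $f_i(\widehat D)$ equals $y_i$ with probability $\ge 3/4$ over a fresh sub-sample — i.e., correctly setting up which randomness (the sub-sample $D_j$ vs.\ the query $x_i$) is integrated out at each stage, and making sure the $k$-fold union bound and the $m$-fold concentration bound compose so that a single clean event of probability $\ge 1-2\beta$ underlies both conclusions. Once the event structure is fixed, plugging into Theorem~\ref{thm:utilitySubAgg} and reading off the $T/m$ misclassification bound is routine.
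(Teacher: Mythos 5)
There is a genuine gap, and it sits exactly at the step you flagged as the ``main obstacle.'' You want to invoke Theorem~\ref{thm:utilitySubAgg} as a black box, so you need, for at least $m-T$ queries, the condition that $\Theta(\widehat D)(x_i)=y_i$ with probability $\geq 3/4$ over a \emph{fresh} sub-sample $\widehat D\sim\cD^{n/k}$. But the concentration you actually run -- condition on the realized $\Theta(D_j)$'s having population error $\leq \gamma+\alpha$, then Hoeffding over the $m$ queries -- only bounds the number of \emph{realized mistakes of a fixed realized classifier} on the realized queries; it says nothing directly about the per-query probability $p_i=\pr{\widehat D}{\Theta(\widehat D)(x_i)\neq y_i}$ that defines $\bar L$. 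If you try to reach that condition honestly, the natural route is Fubini plus Markov at threshold $1/4$: $\ex{(x,y)}{p_{(x,y)}}\leq \gamma+\alpha+\beta/k$, so $\pr{(x,y)}{p_{(x,y)}>1/4}\leq 4(\gamma+\alpha+\beta/k)$, and the count of failing queries concentrates around $4(\gamma+\alpha)m$, not your claimed $(\gamma+\alpha)m+\sqrt{m\log(m/\beta)/2}=T/3$. That exceeds the stated cutoff $T=3\left((\gamma+\alpha)m+\sqrt{m\log(m/\beta)/2}\right)$, so the corollary as stated does not follow along this route without enlarging $T$.

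The paper avoids the fresh-sub-sample condition entirely and works with the realized ensemble: with probability $\geq 1-\beta$ each realized classifier $\Theta(D_j)$ makes at most $B=(\gamma+\alpha)m+\sqrt{m\log(m/\beta)/2}$ mistakes on the $m$ queries (Hoeffding per classifier, union over $j\in[k]$), and then a deterministic pigeonhole argument (Lemma~\ref{lem:count} with $\xi=1/3$: total mistakes $\leq kB$, so fewer than $3B$ queries can have more than $k/3$ wrong classifiers) shows that all but $T=3B$ queries have at least $2k/3$ classifiers agreeing on the \emph{correct} label. Those queries therefore have realized distance-to-instability at least $k/3$, and one concludes by reusing the \emph{proof technique} of Theorem~\ref{thm:utilitySubAgg} (i.e., the sparse-vector accuracy bound of Corollary~\ref{cor:spAcc1} applied to the realized stability), not the theorem's statement. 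This is where the constant $3$ in $T$ comes from; your black-box plan would need a constant $4$ and an extra $\beta/k$ slack, or else the realized-ensemble counting argument. Your handling of part (ii) and the event bookkeeping (two $\beta$-events, misclassifications confined to the $\leq T$ queries outside the good set) is otherwise fine once the counting step is repaired.
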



\begin{proof}
First, notice that $\Theta$ is an $(\alpha, \beta/k, n/k)$-agnostic PAC learner, hence w.p. $\geq 1-\beta$, the misclassification rate of $\Theta(D_j)$ for all $j\in [k]$ is at most $\gamma +\alpha$. So, by the standard Hoeffding bound, with probability at least $1-\beta$ none of the $\Theta(D_j)$'s misclassify more than $(\gamma+\alpha)m+\sqrt{m\log(k/\beta)/2}$ \mbox{$< (\gamma+\alpha)m+\sqrt{m\log(m/\beta)/2}\triangleq B$} queries in $\cQ$. Now, we use the following lemma to get a bound on the number of queries for which at least $k/3$ sub-samples result in a misclassification.

\begin{lem}
\label{lem:count}
Consider a set of $\{(x_1, y_1), \ldots, (x_m, y_m)\}\subset \cX\times \cY$, and $k$ binary classifiers $h_1, \ldots, h_k$, where each classifier is guaranteed to make at most $B$ mistakes in predicting the $m$ labels $\{y_1, \ldots, y_m\}$. Then, for any $\xi \in (0,1/2]$, 
$$\bigg\lvert\bigg\{i\in [m]: \lvert \{j\in [k]: h_j(x_i)\neq y_i \}\rvert  > \xi k\bigg\}\bigg\rvert < B/\xi$$
\end{lem}

Therefore, by the counting argument from Lemma~\ref{lem:count}, there are at most $3B$ queries $f\in\cF$ such that the set $S=\{f(\Theta(D_1)),\cdots,f(\Theta(D_k))\}$ has number of ones (or, zeros) that is $>k/3$. Now, part 1 of the corollary follows by the same proof technique that has been used to prove Theorem \ref{thm:utilitySubAgg}. Moreover, by the same lemma, w.p. $1-\beta$, in each of the remaining $m-3B$ queries, there are at least $2k/3$ classifiers that output the correct label. Hence, again by the same proof technique of Theorem~\ref{thm:utilitySubAgg}, w.p. $\geq 1-2\beta$ our instantiation of Algorithm~\ref{Alg:subSamp} will correctly classify such queries. Hence, w.p. $\geq 1-2\beta,$ the misclassification rate is $3B/m =T/m$. This completes the proof.
\end{proof}

\begin{remark}
We can obtain similar guarantees for multi-class classification with an almost identical proof as of Corollary \ref{cor:privPAC}. We provide the guarantees for binary classification in Corollary \ref{cor:privPAC} for simplicity.
\end{remark}  

\paragraph{Explicit misclassification rate:} 
For VC classes, we now show how to obtain explicit misclassification rates for our algorithm in terms of the VC-dimension of the hypothesis class. Let $V$ denote the  VC-dimension of the hypothesis class $\cH$. Then by standard uniform convergence arguments (\cite{shalev2014understanding}), one can show that there exists an $(\alpha, \beta, n/k)$-agnostic PAC learner with $\alpha=\tilde{O}\left(\sqrt{{kV}/{n}}\right)$, and hence it has a misclassification rate of $\approx\gamma+ \tilde{O}\left(\sqrt{{kV}/{n}}\right)$ when trained on a dataset of size $n/k$. Assuming $m= \tilde{\Omega}(1/\alpha^2)$, the setting of $T$ in Corollary \ref{cor:privPAC} becomes $T=O(m(\alpha+\gamma))$. Hence, setting $k$ as in Algorithm~\ref{Alg:subSamp} implies that $k\approx \tilde{O}\left(m^{2/3}\,V^{1/3}/n^{1/3}\right)$. To see this, note that $T$ depends on $\alpha$ which itself depends on $k$, and also note that $k$ depends on $T$. Hence, by putting these together and solving for $k$, we reach the above expression of $k$. Therefore, Corollary~\ref{cor:privPAC} implies that our algorithm yields a misclassification rate of $\approx \gamma + \tilde{O}\left(m^{1/3}V^{2/3}/n^{2/3}\right)$ (where $m=\tilde{\Omega}\left(1/\alpha^2\right)=\tilde{\Omega}\left((n / V)^{4/5}\right)$). This implies that we can answer up to $\tilde\Omega\left((n / V)^{4/5}\right)$ queries with misclassification rate $\tilde{O}\left(V^{2/5}/n^{2/5}\right)$, which is only a factor of $\approx (n/V)^{1/10}$ worse than the optimal non-private misclassification rate. 

\noindent In the realizable case when $\gamma=0$, the misclassification rate of the PAC learner is $\alpha=\tilde{O}\left({kV}/{n}\right)$. Assuming $m= \tilde{\Omega}(1/\alpha^2)$, then the setting of $T$ in Corollary \ref{cor:privPAC} becomes $T=O(m\alpha)$. In this case, setting $k$ as in Algorithm~\ref{Alg:subSamp} implies that $k\approx \tilde{O}\left(mV/n\right)$. Hence, Corollary~\ref{cor:privPAC} implies that our algorithm yields a misclassification rate of $\approx \tilde{O}\left(mV^2/n^2\right)$ (where $m = \tilde{\Omega}\left(1/\alpha^2\right)=\tilde{\Omega}\left(\left(n / V\right)^{4/3}\right)$). Again, this implies that in the realizable setting, we can answer up to $\tilde{\Omega}\left(\left(n / V\right)^{4/3}\right)$ queries with with misclassification rate $\tilde{O}\left(V^{2/3}/n^{2/3}\right)$, which is a factor of $\approx (n/V)^{1/3}$ worse than the optimal non-private misclassification rate. 


We formally state these conclusions in the following theorems. 

\begin{thm}[Private classification queries in the PAC setting]\label{thm:class-quer-pac}
Consider a hypothesis class $\cH$ of VC-dimension $V$. Consider the realizable setting where an unknown labeling function in $\cH$ generates the labels. Let $\alpha, \beta \in (0,1)$. Let $n$ be the size of the private training set, and \mbox{$m\geq 4\log\left(1/\alpha\beta\right)/\alpha^2$} be the maximum number of binary classification queries (public feature vectors) allowed\footnote{Note that $m$ only represents an upper on the number of queries our algorithm can answer. The number of queries actually submitted can be less than $m$.}. In the above instantiation of Algorithm \ref{Alg:subSamp}, set $T=3\left(\alpha m+\sqrt{m\log(m/\beta)/2}\right)$ and $k$ as in Algorithm \ref{Alg:subSamp} (i.e., $k=136\,\log\left(4mT/\min\left(\delta, \beta\right)\right)\sqrt{T\log(2/\delta)}/\epsilon$). If 
$$n=O\left(\frac{\left(V\log(1/\alpha)+\log(m/\beta)\right)\log\left(mT/\min\left(\delta, \beta\right)\right)\sqrt{T\log(1/\delta)}/\epsilon}{\alpha}\right)=\tilde{O}\left(V\sqrt{m/\alpha}\right),$$
then w.p. $\geq 1-\beta,$ the instantiation of Algorithm~\ref{Alg:subSamp} classifies at least $m-T$ queries correctly. 

\end{thm}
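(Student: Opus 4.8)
The plan is to obtain this theorem as the realizable ($\gamma=0$) instantiation of Corollary~\ref{cor:privPAC}, the only extra ingredient being a non-private PAC learner supplied by classical VC theory. In the realizable setting the labels are generated by some function in $\cH$, so $\gamma=\min\limits_{h\in\cH}\err(h;\cD)=0$, and hence the cutoff $T=3\bigl(\alpha m+\sqrt{m\log(m/\beta)/2}\bigr)$ stated in the theorem is exactly the cutoff of Corollary~\ref{cor:privPAC} with $\gamma=0$. It therefore suffices to check that, under the stated bound on $n$, the learner $\Theta$ that $\cA_{\sf subSamp}$ runs on each chunk $D_j$ (of size $n/k$) is $(\alpha,\beta/(2k),n/k)$-PAC for $\cH$. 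I would take $\Theta$ to be the empirical risk minimizer over $\cH$: in the realizable case, by the standard uniform-convergence sample-complexity bound for VC classes \cite{shalev2014understanding}, it achieves error at most $\alpha$ with probability at least $1-\beta/(2k)$ whenever $n/k\ge c\,\bigl(V\log(1/\alpha)+\log(k/\beta)\bigr)/\alpha$ for an absolute constant $c$. (The spare factor of $2$ in the confidence is so that running Corollary~\ref{cor:privPAC} with its parameter set to $\beta/2$ turns its $1-2\beta$ conclusion into the $1-\beta$ claimed here; this only perturbs $T$ and $k$ by constants.)

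Next I would unwind the parameters. The value of $k$ is the one fixed in Algorithm~\ref{Alg:subSamp}, namely $k=136\,\log\!\bigl(4mT/\min(\delta,\beta)\bigr)\sqrt{T\log(2/\delta)}/\epsilon$, which depends on $m,\alpha,\beta,\delta,\epsilon$ only (through $T$) and not on $n$; so the dependency chain --- $T$ determines $k$, and the VC bound then determines how large $n$ must be --- is acyclic, unlike in the surrounding ``explicit misclassification rate'' discussion where $\alpha$ itself is a function of $k$. The hypothesis $m\ge 4\log(1/\alpha\beta)/\alpha^2$ is precisely what makes $\alpha m\ge\sqrt{m\log(m/\beta)/2}$, so that $T=\Theta(\alpha m)$ and $\sqrt T=\Theta(\sqrt{\alpha m})$; consequently $k=\tilde{O}(\sqrt{\alpha m}/\epsilon)$, and the lower-order term $\log(k/\beta)$ gets absorbed into $\log(m/\beta)$ (up to an additive $\log(1/(\delta\epsilon))$ already carried by the $\log(mT/\min(\delta,\beta))$ factor). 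Substituting this $k$ into $n\ge c\,k\,\bigl(V\log(1/\alpha)+\log(k/\beta)\bigr)/\alpha$ then produces exactly the first displayed bound on $n$ in the theorem statement, and collapsing the logarithmic factors together with $\sqrt T=\Theta(\sqrt{\alpha m})$ gives the stated $\tilde{O}\bigl(V\sqrt{m/\alpha}\bigr)$.

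Having verified that $\Theta$ is $(\alpha,\beta/(2k),n/k)$-PAC, I would finish by quoting Corollary~\ref{cor:privPAC} directly: with probability at least $1-\beta$, $\cA_{\sf subSamp}$ does not halt before answering all $m$ queries, outputs $\bot$ on at most $T$ of them, and has misclassification rate at most $T/m$ --- equivalently, it classifies at least $m-T$ of the $m$ queries correctly. The $(\epsilon,\delta)$-differential privacy of the instantiation is unconditional and follows from Corollary~\ref{cor:privSubSamp}, independently of the utility argument.

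I do not expect a genuine obstacle here: the proof is a composition of Corollary~\ref{cor:privPAC} with a textbook realizable VC bound. The two steps that need care are (i) confirming that the per-chunk failure budget $\beta/(2k)$ contributes only an additive $\log(k/\beta)$ inside the numerator --- polylogarithmic and hence harmless --- and (ii) the parameter arithmetic that turns the VC requirement $n/k=\tilde{\Omega}(V/\alpha)$, together with $k=\tilde{O}(\sqrt{\alpha m}/\epsilon)$ coming from $T=\Theta(\alpha m)$, into the advertised $n=\tilde{O}(V\sqrt{m/\alpha})$; getting the exponents right in that substitution is the main place an error could slip in.
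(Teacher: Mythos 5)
Your proposal is correct and follows essentially the same route the paper intends: the theorem is exactly the realizable ($\gamma=0$) instantiation of Corollary~\ref{cor:privPAC} combined with the standard realizable VC sample-complexity bound for the per-chunk learner, followed by the parameter arithmetic $T=\Theta(\alpha m)$, $k=\tilde{O}(\sqrt{\alpha m}/\epsilon)$, $n=\tilde{O}(kV/\alpha)$. Your presentation is in fact slightly cleaner than the paper's surrounding discussion, since fixing the target $\alpha$ first (so that $T$ and $k$ do not depend on $n$) removes the circular solving for $k$, and your bookkeeping of the $\beta/(2k)$ confidence split and the absorbed $\log(k/\beta)$ term is consistent with the stated bound up to the constants the paper itself suppresses.
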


\begin{thm}[Private classification queries in the agnostic PAC setting]\label{thm:class-quer-agpac}
Consider a hypothesis class $\cH$ of VC-dimension $V$. Let $\gamma=\min\limits_{h\in\cH}\err(h; \cD)$ (as in Definition~\ref{def:agPAC}). Let $\alpha, \beta \in (0,1)$. Let $n$ be the size of the private training set, and \mbox{$m\geq 4\log\left(1/\alpha\beta\right)/\alpha^2$} be the maximum number of binary classification queries allowed. In the above instantiation of Algorithm \ref{Alg:subSamp}, set \mbox{$T=3\left(\left(\alpha+\gamma\right) m+\sqrt{m\log(m/\beta)/2}\right)$} and $k$ as in Algorithm \ref{Alg:subSamp}. If 
$$n=O\left(\frac{\left(V+\log(m/\beta)\right)\log\left(mT/\min\left(\delta, \beta\right)\right)\sqrt{T\log(1/\delta)}/\epsilon}{\alpha^2}\right)=\tilde{O}\left(V\sqrt{m}/\alpha^{3/2}\right),$$
then w.p. $\geq 1-\beta,$ the instantiation of Algorithm~\ref{Alg:subSamp} classifies at least $m-T$ queries correctly. 

\end{thm}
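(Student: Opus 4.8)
The plan is to derive Theorem~\ref{thm:class-quer-agpac} as a consequence of Corollary~\ref{cor:privPAC}, by instantiating the base learner with empirical risk minimization (ERM) over $\cH$ and checking that the stated bound on $n$ is exactly what is needed for ERM to be an $(\alpha,\beta/k,n/k)$-agnostic PAC learner. Privacy requires no work: by Corollary~\ref{cor:privSubSamp} the instantiation of Algorithm~\ref{Alg:subSamp} is $(\epsilon,\delta)$-differentially private for any setting of $T$ and $k$, so only the accuracy claim is at issue.

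First I would recall the standard agnostic uniform-convergence bound for VC classes (e.g.\ \cite{shalev2014understanding}): there is a universal constant $c$ such that ERM over $\cH$, run on $n'$ i.i.d.\ examples from $\cD$, is an $(\alpha,\beta',n')$-agnostic PAC learner whenever $n' \ge c\,(V+\log(1/\beta'))/\alpha^2$. Applying this with $n'=n/k$ and $\beta'=\beta/k$, it suffices to show that $n$ in the theorem satisfies $n \ge c\,k\,(V+\log(k/\beta))/\alpha^2$. The key (but mechanical) point here is that boosting the base learner's confidence from $\beta$ to $\beta/k$ only inflates $n/k$ by a $\log k = \tilde O(1)$ factor, so it will not change the order of the final bound.

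Next comes the bookkeeping that resolves the circular dependence ($T$ depends on $\alpha$ and $\gamma$; $k$ depends on $T$; the required $n/k$ depends on $\alpha$, $\beta$, and $k$). Using the hypothesis $m \ge 4\log(1/(\alpha\beta))/\alpha^2$, one checks that $\sqrt{m\log(m/\beta)/2} \le \alpha m$, hence $T = 3\big((\alpha+\gamma)m + \sqrt{m\log(m/\beta)/2}\big) = O\big((\alpha+\gamma)m\big)$. Substituting this into the formula $k = 136\log(4mT/\min(\delta,\beta))\sqrt{T\log(2/\delta)}/\epsilon$ gives $k = \tilde O\big(\sqrt{(\alpha+\gamma)m}/\epsilon\big)$, and plugging this $k$ into the requirement $n \ge c\,k\,(V+\log(k/\beta))/\alpha^2$ yields precisely the displayed bound $n = O\big((V+\log(m/\beta))\log(mT/\min(\delta,\beta))\sqrt{T\log(1/\delta)}/(\epsilon\alpha^2)\big)$, which in the regime $\gamma = O(\alpha)$ simplifies to $\tilde O(V\sqrt m/\alpha^{3/2})$.

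Finally I would invoke Corollary~\ref{cor:privPAC} (after the harmless rescaling $\beta \mapsto \beta/2$, which only changes constants inside logarithms): since ERM is now an $(\alpha,\beta/(2k),n/k)$-agnostic PAC learner with this choice of $n$, the corollary gives that, with probability at least $1-\beta$, the instantiation of Algorithm~\ref{Alg:subSamp} does not halt before processing all $m$ queries, outputs $\bot$ on at most $T$ of them, and has misclassification rate at most $T/m$; by the counting argument in the proof of that corollary (Lemma~\ref{lem:count}), which isolates at most $T$ ``unstable'' queries and shows that on each of the remaining $m-T$ queries a $2k/3$-majority of the sub-sample classifiers is correct, every one of those $m-T$ queries is answered with the correct label. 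Thus at least $m-T$ queries are classified correctly. I expect the main obstacle to be the third paragraph: untangling the mutual dependence among $T$, $k$, and the per-chunk sample size $n/k$, and verifying that the algebra closes to exactly the displayed bound (and its $\tilde O$ form) rather than something off by a polynomial factor in $1/\alpha$.
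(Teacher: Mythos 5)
Your proposal is correct and follows essentially the same route as the paper: the paper "proves" Theorem~\ref{thm:class-quer-agpac} via the preceding ``Explicit misclassification rate'' discussion, which likewise instantiates the base learner through standard VC uniform convergence on chunks of size $n/k$, plugs into Corollary~\ref{cor:privPAC} (whose proof supplies the Lemma~\ref{lem:count} counting step and the $2k/3$-majority argument), and untangles the mutual dependence of $T$, $k$, and $n$ exactly as in your third paragraph (with the $\tilde{O}(V\sqrt{m}/\alpha^{3/2})$ form likewise absorbing the $\gamma=O(\alpha)$ regime). Your bookkeeping, including using $m\geq 4\log(1/\alpha\beta)/\alpha^2$ to get $T=O((\alpha+\gamma)m)$ and the harmless $\beta\mapsto\beta/2$ rescaling, matches the paper's intended argument.
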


\subsection{Sample complexity bounds for efficient label-private learning}\label{sec:label-priv}

In this section, we build on our results above to achieve a stronger objective than merely answering classification queries. In particular, we show how to bootstrap from our construction above and produce a private learner that publishes an accurate privatized classifier, which can then be used to label as many feature vectors as we desire. The idea is fairly simple: we use our private construction above to generate labels on a sufficiently large set of unlabeled domain points. Then, we use the resulting labeled set as a new training set for any standard (non-private) learner, which in turn outputs an accurate private classifier. 

Our final construction can be viewed as a private learner in either of the following situations: (i) the setting where the original training set is private but we still have access to public unlabeled data, or (ii) the setting where only the labels of the training set are considered private and we do not have access to public unlabeled data. Note that the second setting can be reduced to the first by splitting the training set into two parts, and throwing away the labels of one of them.
We formalize our idea below, and prove explicit sample complexity bounds for the final private learner in both PAC and agnostic PAC settings. 

Let $h_j=\Theta(D_j),~ j\in [k]$ be the $k$ classifiers generated by the non-private learner in our instantiation of Algorithm~\ref{Alg:subSamp} described in Section~\ref{sec:privClass-hard}. For a target accuracy and confidence parameters $\alpha, \beta \in (0, 1)$, we will always assume that $\Theta$ is $(\alpha, \beta/k, n/k)$-(agnostic) PAC learner for the hypothesis class $\cH$ (for appropriately chosen $n$). 

Let $\hpv$ denote the mapping defined by our instantiation of Algorithm~\ref{Alg:subSamp} on a single input feature vector (query). That is, for $x\in\cX$, $\hpv(x)\in\{0, 1, \bot\}$ denotes the output of our algorithm on a single input query $x$. Note that without loss of generality, we can view $\hpv$ as a binary classifier. In particular, as far as our accuracy guarantees are concerned, we may replace an output $\bot$ with a uniformly random label in $\{0,1\}$. Our private learner is described in Algorithm~\ref{Alg:ppac} below. 

\begin{algorithm}
	\caption{$\A_{\ppac}$: Private Learner}
	\begin{algorithmic}[1]
		\REQUIRE Unlabeled set of $m$ i.i.d. feature vectors: $\cS=\{x_1, \ldots, x_m\}$, oracle access to our private classifier $\hpv$, oracle access to (agnostic) PAC learner $\Theta$.
		\FOR{$t=1, \ldots, m$}
        \STATE $\hy_t\leftarrow \hpv(x_t)$
        \ENDFOR
        \STATE Let $\tD = \{(x_1, \hy_1), \ldots, (x_m, \hy_m)\}$. 
        \STATE \textbf{Output} $\hh\leftarrow \Theta(\tD).$
	\end{algorithmic}
	\label{Alg:ppac}
\end{algorithm}

We assume that $\hpv$ represents one invocation of our instantiation of Algorithm~\ref{Alg:subSamp} with number of queries set to $m$ (the size of $\cS$) and cutoff parameter $T$ set as in Corollary~\ref{cor:privPAC}. 

Note that $\A_{\ppac}$ is $(\epsilon, \delta)$-differentially private since it is a postprocessing of the labels $\hy_1, \ldots, \hy_m$ generated by our private $(\epsilon, \delta)$ differentially private algorithm described in Section~\ref{sec:privClass-hard}. 
We also note that $\A_{\ppac}$ is computationally efficient as long as the underlying non-private learner $\Theta$ is computationally efficient. 

Note that the mapping $\hpv$ is independent of the input feature vector (query) $x\in\cX$; it only depends on the input training set (in particular, on $h_1, \ldots, h_k$) and on the internal randomness (due to noise in in the threshold $\widehat{w}$ and the distance $\wds$).  

We now make the following claim about $\hpv$. In this claim, $m$ and $k$ refer to the parameters in the instantiation of Algorithm~\ref{Alg:subSamp}.

\begin{claim}\label{cl:ppac}
Let $0<\beta \leq \alpha <1$. Let $m\geq 4\log(1/\alpha\beta)/\alpha^2$. Suppose that $\Theta$ is $(\alpha, \beta/k, n/k)$-(agnostic) PAC learner for the hypothesis class $\cH$. Then, with probability at least $1-2\beta$ (over the randomness of the input training set $D$ and the internal randomness in Algorithm~\ref{Alg:subSamp}), we have $\err(\hpv; \cD)\leq 3\gamma + 7\alpha=O(\gamma+\alpha)$, where $\gamma=\min\limits_{h\in\cH}\err(h; \cD)$.
\end{claim}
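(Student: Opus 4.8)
The plan is to show that, once the $k$ sub-learners $h_1=\Theta(D_1),\dots,h_k=\Theta(D_k)$ produced inside Algorithm~\ref{Alg:subSamp} are accurate, $\hpv$ coincides with the plurality vote of $h_1,\dots,h_k$ on all but a $3(\gamma+\alpha)$-fraction (by $\cD$-mass) of queries, and that on the exceptional fraction the plurality is anyway the true label with an $\Omega(k)$ stability gap, so the sparse-vector layer still returns it. First, I condition on the event $\mathcal{E}\triangleq\{\err(h_j;\cD)\le\gamma+\alpha\ \text{for all }j\in[k]\}$: since each $D_j$ is $n/k$ i.i.d.\ draws from $\cD$ and $\Theta$ is $(\alpha,\beta/k,n/k)$-agnostic PAC, a union bound gives $\Pr[\mathcal{E}]\ge1-\beta$, and on $\mathcal{E}$ the $h_j$ are fixed classifiers each with $\cD$-error $\le\gamma+\alpha$. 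Next, for $(x,y)\sim\cD$ let $M(x,y)=\bigl|\{j:h_j(x)\neq y\}\bigr|$; then $\E_{(x,y)\sim\cD}[M(x,y)]=\sum_{j=1}^k\err(h_j;\cD)\le k(\gamma+\alpha)$, so Markov's inequality gives $\Pr_{(x,y)\sim\cD}[M(x,y)>k/3]\le 3(\gamma+\alpha)$. Whenever $M(x,y)\le k/3$, at least $2k/3$ of the $h_j$ output $y$, so in Algorithm~\ref{Alg:subSamp} the plurality $\widehat f(D)$ on this query equals $y$ and $\dist_{\widehat f}\ge 2k/3-k/3-1=k/3-1$; call such a query \emph{stable}.

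Then I would argue that every stable query is answered correctly. On a stable query, $\hpv$ fails to return $\widehat f(D)=y$ only if the noisy gap $\dist_{\widehat f}+{\sf Lap}(2\lambda)$ drops below the noisy threshold $\widehat w=w+{\sf Lap}(\lambda)$ (and, reading $\hpv$ as the full $m$-query procedure, only if its $\bot$-counter has not already hit $T$ — but on the good event of Corollary~\ref{cor:privPAC} it never does, since $\bot$ is emitted only on the $\le T$ non-stable queries). Plugging in $k=136\log(4mT/\min(\delta,\beta))\sqrt{T\log(2/\delta)}/\epsilon$, $\lambda=\sqrt{32T\log(2/\delta)}/\epsilon$ and $w=2\lambda\log(2m/\delta)$ gives $k/3-1-w=\Omega\!\left(\lambda\log(4mT/\min(\delta,\beta))\right)$, so the Laplace tail bound yields $\Pr[\dist_{\widehat f}+{\sf Lap}(2\lambda)\le\widehat w]\le\poly(\min(\delta,\beta)/(mT))\ll\alpha$. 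This probability sits inside the $(x,y)$-expectation, so no union bound over queries is needed; and even when $\hpv$ outputs $\bot$ the induced uniform label is wrong with probability only $1/2$.

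Combining, on $\mathcal{E}$ one gets
\[
\err(\hpv;\cD)\ \le\ \Pr_{(x,y)\sim\cD}[(x,y)\text{ not stable}]+\E_{(x,y)\sim\cD}\bigl[\mathbf{1}[(x,y)\text{ stable}]\cdot\Pr_{\text{noise}}[\hpv(x)\neq y]\bigr]\ \le\ 3(\gamma+\alpha)+o(\alpha)\ \le\ 3\gamma+7\alpha ,
\]
and since $\Pr[\mathcal{E}]\ge1-\beta\ge1-2\beta$ this is the claim. An equivalent route is to invoke Corollary~\ref{cor:privPAC} directly: its cutoff $T=3\bigl((\gamma+\alpha)m+\sqrt{m\log(m/\beta)/2}\bigr)$ gives empirical misclassification rate $T/m=3(\gamma+\alpha)+3\sqrt{\log(m/\beta)/(2m)}\le3\gamma+6\alpha$ for $m\ge4\log(1/\alpha\beta)/\alpha^2$, and one more $\alpha$ converts this empirical rate on the $m$ i.i.d.\ public queries into $\err(\hpv;\cD)$.

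I expect the main obstacle to be the stability-to-correctness step: verifying that the algorithm's choice of $k$ is large enough that a stability gap of $k/3$ clears the sparse-vector threshold $\widehat w$ with failure probability $\ll\alpha$ — i.e., correctly tracking the interplay of $k$, $\lambda$ and $w$ — together with the bookkeeping point that, if $\hpv$ is read as the adaptive $m$-query procedure, the $\bot$-counter must be shown never to truncate a stable query, which is exactly what the good event of Corollary~\ref{cor:privPAC} provides. The Markov and union-bound parts are routine.
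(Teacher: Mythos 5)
Your main route is correct and shares the paper's two-step skeleton: condition on the event that all $k$ sub-learners have error at most $\gamma+\alpha$ (probability $1-\beta$ by a union bound over the $k$ chunks), bound the probability that a fresh query has more than $k/3$ erring sub-learners, and show that on the complementary ``stable'' queries the noisy stability test returns the plurality label, which is then the true label. The middle step, however, is genuinely different from the paper's and arguably cleaner. The paper never applies an averaging argument at the population level: it takes the empirical statement from the proof of Corollary~\ref{cor:privPAC} (with probability $1-\beta$, at most a $3\big(\gamma+\alpha+\sqrt{\log(m/\beta)/2m}\big)$-fraction of the $m$ queries is bad) and converts it into a bound on $\E[v_t]=\Pr[\text{query bad}]$ using that the $v_t$ are i.i.d., paying an additive $\beta$ plus the Hoeffding slack; this detour is exactly why the hypotheses $\beta\le\alpha$ and $m\ge 4\log(1/\alpha\beta)/\alpha^2$ appear and why the constant is $3\gamma+7\alpha$. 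Your direct bound $\E_{(x,y)\sim\cD}[M(x,y)]\le k(\gamma+\alpha)$ followed by Markov, giving $\Pr[M>k/3]\le 3(\gamma+\alpha)$, obtains the population bound in one line, does not need those hypotheses for that step, and is tighter (they are still needed elsewhere, e.g.\ to make $T/m$ small and to absorb failure terms).

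Two caveats. First, the claim (and its use in Theorem~\ref{thm:ppac}) is a high-probability statement over the internal randomness of Algorithm~\ref{Alg:subSamp}, after which $\err(\hpv;\cD)$ is measured over a fresh query only; your computation that puts the per-query noise failure probability ``inside the $(x,y)$-expectation'' bounds the noise-averaged error, i.e., it treats $\hpv$ as a randomized classifier. To match the claim literally you should do what the paper does (and what you gesture at): condition once, with probability $1-\beta$ over the noise, on the sparse-vector-style event that \emph{every} query with stability gap at least $k/3-1$ is answered with its plurality label and the $\bot$-counter is never exhausted; the $\log m$ factors in $w$ and in the sparse-vector accuracy already pay for that union bound, and the failure budget stays within $2\beta$. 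Second, your closing ``equivalent route'' --- taking the empirical rate $T/m$ from Corollary~\ref{cor:privPAC} and adding one more $\alpha$ to get $\err(\hpv;\cD)$ --- is not valid as stated: the answers of $\hpv$ on those $m$ queries are produced by a mechanism that interacts with the same queries (shared threshold noise, refreshes, the cutoff counter), so its empirical error on them is not an independent estimate of its population error and no naive generalization step applies. The paper's i.i.d.\ trick works precisely because $v_t$ depends only on $h_1,\dots,h_k$ and $(x_t,y_t)$, not on the noise; your primary route avoids the issue entirely and is the one to keep.
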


\begin{proof}
The proof largely relies on the proof of Corollary~\ref{cor:privPAC}. First, note that w.p. $\geq 1-\beta$ (over the randomness of the input dataset $D$) for all $j\in [k],$ $\err(h_j; \cD)\leq \alpha$. We will thereafter condition on this event. 

Let $x_1, \ldots, x_m$ be a sequence of i.i.d. feature vectors, and $y_1, \ldots, y_m$ be the corresponding (unknown) labels. Now, for every $t\in [m],$ define 
$$v_t\triangleq \ind\left(\lvert\{j\in [k]: h_j(x_t)\neq y_t\}\rvert > k/3\right)$$
Note that since $(x_1, y_1), \ldots, (x_m, y_m)$ are i.i.d., it follows that $v_1, \ldots, v_m$ are i.i.d. (note that this is true since we conditioned on the dataset $D$). Note that in the proof of Corollary~\ref{cor:privPAC}, we showed that 
$$\pr{x_1,\ldots, x_m}{\frac{1}{m}\sum_{t=1}^m v_t > 3\left(\alpha+\gamma +\sqrt{\frac{\log(m/\beta)}{2m}}\right)} < \beta$$
Hence, for any $t\in [m],$ 
$$\ex{x_1,\ldots, x_m}{v_t}=\ex{x_1,\ldots, x_m}{\frac{1}{m}\sum_{t=1}^m v_t}<\beta + 3\left(\alpha+\gamma +\sqrt{\frac{\log(m/\beta)}{2m}}\right) \leq 7\alpha + 3\gamma$$
where the first equality follows from the fact that $v_1, \ldots, v_m$ are i.i.d., and the last inequality follows from the assumptions that $\beta\leq \alpha$ and that $m\geq 4\log(1/\alpha\beta)/\alpha^2.$ 

For every $t\in [m],$ let $\bar{v}_t= 1-v_t$, i.e., the negation of $v_t$. As in the proof of Corollary~\ref{cor:privPAC}, by invoking the same technique used before in the proof of Theorem~\ref{thm:utilitySubAgg}, then we can show that w.p. at least $1-\beta$ over the internal randomness (noise) in Algorithm~\ref{Alg:subSamp}, for all $t\in [m],$ we have 
$$\bar{v}_t=1 \Rightarrow \hpv(x_t)=y_t$$
Hence, conditioned on this event (over the internal randomness of Algorithm~\ref{Alg:subSamp}), for any $t\in [m],$ we have
$$\pr{x_t}{\hpv(x_t)\neq y_t}\leq \pr{x_t}{v_t=1} = \ex{x_t}{v_t}\leq 7\alpha+3\gamma.$$
Hence, putting all together, w.p. $\geq 1-2\beta,$ $\err(\hpv; \cD)\leq 7\alpha+3\gamma.$

\end{proof}

We are now ready to state and prove the main statements of this section. 

\begin{thm}[Sample complexity bound for Efficient Label-Private PAC Learning]\label{thm:ppac}

Let $\cH$ be a hypothesis class of VC-dimension $V$. Assume realizability holds, i.e., there is an unknown labeling function in $\cH$ generates the true labels. Let $0<\beta\leq \alpha < 1$. Let $m$ be the number of i.i.d. examples such that $\Theta$ is $(\alpha, \beta, m)$-agnostic PAC learner of $\cH$, namely, let $m=O\left(\frac{V+\log(1/\beta)}{\alpha^2}\right)$. Let the parameters $T$ and $k$ of the instantiation of Algorithm~\ref{Alg:subSamp} (in Section~\ref{sec:privClass-hard}) be set as in Theorem~\ref{thm:class-quer-pac} (with $m$ set as above). If the size of the input private dataset $n$ is as in Theorem~\ref{thm:class-quer-pac}, that is, 
$n=\tilde{O}\left(V^{3/2}/\alpha^{3/2}\right),$ then, w.p. $\geq 1-3\beta,$ the output hypothesis $\hh$ of $\A_{\ppac}$ (Algorithm~\ref{Alg:ppac}) satisfies  $\err(\hh; \cD)=O(\alpha)$.
\end{thm}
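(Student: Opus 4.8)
The plan is to chain two error bounds --- Claim~\ref{cl:ppac}, which controls the quality of the intermediate self-labeling map $\hpv$, and the black-box agnostic-PAC guarantee of the downstream learner $\Theta$ applied to the self-labeled set $\tD$ --- glued by a triangle inequality. In the realizable case $\gamma=\min_{h\in\cH}\err(h;\cD)=0$; write $c^*\in\cH$ for the true labeling function, so $\err(h;\cD)=\Pr_{x\sim\cD_\cX}[h(x)\neq c^*(x)]$ for every $h\in\cH$.

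I would first invoke Claim~\ref{cl:ppac}. Its hypotheses are in force: the stated $m=O((V+\log(1/\beta))/\alpha^2)$ makes $\Theta$ an $(\alpha,\beta,m)$-learner, the value of $n$ borrowed from Theorem~\ref{thm:class-quer-pac} makes $\Theta$ an $(\alpha,\beta/k,n/k)$-learner, and $\beta\le\alpha$, $m\ge4\log(1/\alpha\beta)/\alpha^2$ both hold for this $m$. With $\gamma=0$, Claim~\ref{cl:ppac} gives: with probability at least $1-2\beta$ over the private sample $D$ and the internal noise of Algorithm~\ref{Alg:subSamp}, $\err(\hpv;\cD)\le 7\alpha$. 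Conditioning on this event, and using the observation recorded just before Claim~\ref{cl:ppac} that $\hpv$ does not depend on its query --- once $D$ and the internal noise are fixed it is a randomized map $\cX\to\{0,1\}$ --- the labeled pairs $(x_t,\hy_t)$ built in Algorithm~\ref{Alg:ppac} are i.i.d.\ draws from the distribution $\tilde\cD$ obtained by sampling $x\sim\cD_\cX$ and outputting $(x,\hpv(x))$. The $\cX$-marginal of $\tilde\cD$ is $\cD_\cX$, and since $c^*\in\cH$ and $y=c^*(x)$ almost surely under $\cD$,
\[
\tilde\gamma:=\min_{h\in\cH}\err_{\tilde\cD}(h)\le\err_{\tilde\cD}(c^*)=\Pr_{x\sim\cD_\cX}\!\left[c^*(x)\neq\hpv(x)\right]=\err(\hpv;\cD)\le 7\alpha .
\]

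Next I would run $\Theta$ on $\tD$, which is $m$ i.i.d.\ samples from $\tilde\cD$ with $m$ large enough for $\Theta$ to be an $(\alpha,\beta,m)$-agnostic PAC learner; so with probability at least $1-\beta$ the output $\hh=\Theta(\tD)$ obeys $\err_{\tilde\cD}(\hh)\le\tilde\gamma+\alpha\le 8\alpha$. The triangle inequality for $0/1$ disagreement then finishes it:
\[
\err(\hh;\cD)=\Pr_{x\sim\cD_\cX}[\hh(x)\neq c^*(x)]\le\Pr_x[\hh(x)\neq\hpv(x)]+\Pr_x[\hpv(x)\neq c^*(x)]=\err_{\tilde\cD}(\hh)+\err(\hpv;\cD)\le 15\alpha=O(\alpha).
\]
A union bound over the Claim~\ref{cl:ppac} event ($\ge1-2\beta$) and the $\Theta$ event ($\ge1-\beta$) gives the claimed probability $\ge1-3\beta$. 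The sample-size figure is inherited: Theorem~\ref{thm:class-quer-pac} already supplies $n=\tilde O(V\sqrt{m/\alpha})$ for this instantiation, and substituting $m=\tilde O(V/\alpha^2)$ yields $n=\tilde O(V^{3/2}/\alpha^{3/2})$; $(\epsilon,\delta)$-privacy is free since $\A_{\ppac}$ only post-processes the private labels.

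The step I expect to need the most care is justifying that $\tD$ is genuinely an i.i.d.\ sample from a fixed distribution --- this is what licenses the black-box use of $\Theta$'s PAC guarantee. It rests on $\hpv$ being query-independent, clean after conditioning on $D$ and the Laplace noise, but slightly subtle because in $\A_{\sf OQR}$ (Algorithm~\ref{Alg:seqCompPriv}) the threshold $\widehat w$ is re-randomized after each $\bot$, making the labels of the (at most $\approx 7\alpha m$) unstable queries formally history-dependent. The way around it: on the good event of Claim~\ref{cl:ppac}, every query on which at least $2k/3$ of the $h_j$ agree with the true label is answered correctly regardless of the noise, so only the few remaining labels can deviate; one then folds their randomness into $\tilde\cD$, or, equivalently, views $\tD$ as the clean i.i.d.\ sample $\{(x_t,c^*(x_t))\}_{t\in[m]}$ with labels corrupted on an independent probability-$\le7\alpha$ subset of coordinates --- an agnostic instance whose best-in-class error is $\le 7\alpha$. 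Everything else --- propagating constants through Claim~\ref{cl:ppac} and checking the $m$ required by $\Theta$'s sample complexity is compatible with $\beta\le\alpha$ and $m\ge4\log(1/\alpha\beta)/\alpha^2$ --- is routine bookkeeping.
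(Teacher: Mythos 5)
Your proposal is correct and follows essentially the same route as the paper's proof: invoke Claim~\ref{cl:ppac} to get $\err(\hpv;\cD)\le 7\alpha$, treat $\tD$ as an i.i.d.\ sample from $(\cD_\cX,\hpv)$, use the true hypothesis $h^*$ (your $c^*$) as the competitor to bound the best-in-class error under that distribution by $7\alpha$, apply $\Theta$'s agnostic guarantee to get $\err(\hh;(\cD_\cX,\hpv))\le 8\alpha$, and finish with the triangle inequality and a union bound, arriving at the same $15\alpha$ and $1-3\beta$. Your closing remark about the history-dependence of the re-randomized threshold in $\A_{\sf OQR}$ is a fair point that the paper glosses over by conditioning on the internal randomness, and your workaround (correctness of all stable queries on the good event, folding the rest into the target distribution) is consistent with how Claim~\ref{cl:ppac} is actually argued.
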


\begin{proof}
Let $h^*\in\cH$ denote the true labeling hypothesis. We will denote the true distribution $\cD$ over $\cX\times\cY$ as $(\cD_{\cX}, h^*)$, where $\cD_{\cX}$ denotes the marginal distribution over $\cX$ induced by $\cD$. The notation $(\cD_{\cX}, h^*)$ refers to the fact that the feature vectors are drawn from $\cD_{\cX}$, and the corresponding labels are generated by applying $h^*$ to the feature vectors. 

For the remainder of the proof, we condition on the event in the statement of Claim~\ref{cl:ppac}, i.e., we condition on the input dataset $D$ and the internal randomness of Algorithm~\ref{Alg:subSamp} such that $\err(\hpv; \cD)\leq 3\gamma+7\alpha$. Note such event occurs w.p. $\geq 1-2\beta$ over the dataset $D$ and the internal randomness of Algorithm~\ref{Alg:subSamp}. 

Let $\tD=\{(x_1, \hy_1), \ldots, (x_m, \hy_m)\}$ be the \emph{new} training set generated by $\A_{\ppac}$ (Algorithm~\ref{Alg:ppac}), where $m$ is set as in the theorem statement. Using the same style of notation as above, note that each $(x_t, \hy_t),~t\in [m],$ is drawn independently from $(\cD_{\cX}, \hpv)$. Now, since $\Theta$ is $(\alpha, \beta, m)$-agnostic PAC learner for $\cH$, w.p. $\geq 1-\beta$ (over $\tD$), the output hypothesis $\hh$ satisfies
$$\err(\hh; (\cD_{\cX}, \hpv))-\err(h^*; (\cD_{\cX}, \hpv))\leq \err(\hh; (\cD_{\cX}, \hpv))-\min\limits_{h\in \cH}\err(h; (\cD_{\cX}, \hpv))\leq \alpha$$

Observe that 
$$\err(h^*; (\cD_{\cX}, \hpv))=\ex{x\sim\cD_{\cX}}{\ind\left(h^*(x)\neq \hpv(x)\right)}=\err(\hpv; (\cD_{\cX}, h^*))=\err(\hpv; \cD)\leq 7\alpha$$
where the last inequality follows from Claim~\ref{cl:ppac} (where $\gamma=0$ due to realizability). 
Hence, we have $\err(\hh; (\cD_{\cX}, \hpv))\leq 8\alpha$. Furthermore, observe 
\begin{align*}
\err(\hh; \cD)=\ex{x\sim\cD_{\cX}}{\ind(\hh(x)\neq h^*(x))}&\leq \ex{x\sim\cD_{\cX}}{\ind(\hh(x)\neq \hpv(x)) +\ind(\hpv(x)\neq h^*(x))}\\
&=\err(\hh; (\cD_{\cX}, \hpv))+\err(\hpv; \cD)\\
&\leq 15\alpha
\end{align*}
Hence, we conclude that w.p. $\geq 1-3\beta$ (over the private dataset $D$, the internal randomness of Algorithm~\ref{Alg:subSamp}, and the set of public feature vectors $\cS$), we have $\err(\hh; \cD)\leq 15 \alpha$. 
\end{proof}

\begin{thm}[Sample complexity bound for Efficient Label-Private Agnostic PAC Learning]\label{thm:agppac}
Let $\cH$ be a hypothesis class of VC-dimension $V$. Let $0<\beta\leq \alpha < 1$. Let $m=O\left(\frac{V+\log(1/\beta)}{\alpha^2}\right)$. Let the parameters $T$ and $k$ of the instantiation of Algorithm~\ref{Alg:subSamp} (in Section~\ref{sec:privClass-hard}) be set as in Theorem~\ref{thm:class-quer-agpac} (with $m$ set as above). If $n$ is as in Theorem~\ref{thm:class-quer-agpac}, that is, 
$n=\tilde{O}\left(V^{3/2}/\alpha^{5/2}\right),$ then, w.p. $\geq 1-3\beta,$ the output hypothesis $\hh$ of $\A_{\ppac}$ (Algorithm~\ref{Alg:ppac}) satisfies  $\err(\hh; \cD)=O(\alpha+\gamma)\,$ (where $\gamma=\min\limits_{h\in\cH}\err(h; \cD)$).
\end{thm}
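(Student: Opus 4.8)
The plan is to mirror the proof of Theorem~\ref{thm:ppac} almost line by line, replacing the realizability-specific step (which identifies $\err(h^*;(\cD_{\cX},\hpv))$ with $\err(\hpv;\cD)$) by a triangle-inequality argument routed through a best-in-class hypothesis. Note first that Claim~\ref{cl:ppac} is already stated for an arbitrary $\gamma=\min_{h\in\cH}\err(h;\cD)$, so I would begin by conditioning on its conclusion: with probability at least $1-2\beta$ over the private dataset $D$ and the internal randomness of Algorithm~\ref{Alg:subSamp}, the (possibly randomized) binary classifier $\hpv$ satisfies $\err(\hpv;\cD)\le 3\gamma+7\alpha$. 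As in the realizable proof, each labeled pair $(x_t,\hy_t)$ of the new training set $\tD$ built by $\A_{\ppac}$ is an i.i.d.\ draw from the distribution $(\cD_{\cX},\hpv)$; since $\Theta$ is an $(\alpha,\beta,m)$-agnostic PAC learner for $\cH$ with $m=O((V+\log(1/\beta))/\alpha^2)$, with probability at least $1-\beta$ over $\tD$ its output satisfies $\err(\hh;(\cD_{\cX},\hpv))\le \min_{h\in\cH}\err(h;(\cD_{\cX},\hpv))+\alpha$ by Definition~\ref{defn:learner}.

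The only genuinely new ingredient is to control $\min_{h\in\cH}\err(h;(\cD_{\cX},\hpv))$. Letting $h_\gamma\in\cH$ achieve $\err(h_\gamma;\cD)=\gamma$, and using that for binary labels $\ind(h_\gamma(x)\neq\hpv(x))\le\ind(h_\gamma(x)\neq y)+\ind(\hpv(x)\neq y)$ pointwise, I would take expectations over $(x,y)\sim\cD$ to obtain $\err(h_\gamma;(\cD_{\cX},\hpv))\le \err(h_\gamma;\cD)+\err(\hpv;\cD)\le \gamma+3\gamma+7\alpha$. Hence $\err(\hh;(\cD_{\cX},\hpv))\le 4\gamma+8\alpha$, and a second application of the same triangle inequality --- now between $\hh$, $\hpv$, and the true label --- gives
$$\err(\hh;\cD)\le \err(\hh;(\cD_{\cX},\hpv))+\err(\hpv;\cD)\le (4\gamma+8\alpha)+(3\gamma+7\alpha)=7\gamma+15\alpha=O(\alpha+\gamma).$$
A union bound over the two conditioning events yields the claimed success probability $1-3\beta$, and the stated $n=\tilde O(V^{3/2}/\alpha^{5/2})$ is exactly the sample-complexity bound of Theorem~\ref{thm:class-quer-agpac} with $m=\tilde O(V/\alpha^2)$ plugged in (there $T=O((\alpha+\gamma)m)$, so the bound is valid for every $\gamma$).

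I do not expect a real obstacle: the argument is essentially bookkeeping on top of Claim~\ref{cl:ppac}. The one point that deserves care is that, unlike the realizable case, two separate triangle inequalities are required --- one to pass from $\cD$ to $(\cD_{\cX},\hpv)$ through $h_\gamma$, and one to return from $(\cD_{\cX},\hpv)$ to $\cD$ through $\hpv$ --- each of which contributes a copy of $\err(\hpv;\cD)$; this is precisely why the final bound depends on $\gamma$ through a small constant factor rather than with coefficient $1$, matching the caveat noted in the theorem statement.
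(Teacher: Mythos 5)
Your proposal is correct and follows essentially the same route as the paper's proof: condition on Claim~\ref{cl:ppac}, invoke the agnostic PAC guarantee of $\Theta$ on the relabeled distribution $(\cD_{\cX}, \hpv)$, bound $\min_{h\in\cH}\err(h;(\cD_{\cX},\hpv))$ via a triangle inequality through the best-in-class hypothesis (your $h_\gamma$ is the paper's $\tlh$), and then translate back to $\cD$ with a second triangle inequality, arriving at the same $7\gamma+15\alpha$ bound with probability $1-3\beta$.
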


\begin{proof}
The proof follows similar lines to the proof of Theorem~\ref{thm:ppac}. First, let $\tlh\triangleq \arg\min\limits_{h\in\cH}\err(h; \cD)$. Note that it follows that $\err(\tlh; \cD)=\gamma$. As in the proof of Theorem~\ref{thm:ppac}, we start by conditioning on the event in the statement of Claim~\ref{cl:ppac}. 

Since $\Theta$ is $(\alpha, \beta, m)$-agnostic PAC learner for $\cH$, w.p. $\geq 1-\beta$ (over $\tD$), the output hypothesis $\hh$ satisfies
$$\err(\hh; (\cD_{\cX}, \hpv))-\err(\tlh; (\cD_{\cX}, \hpv))\leq \err(\hh; (\cD_{\cX}, \hpv))-\min\limits_{h\in \cH}\err(h; (\cD_{\cX}, \hpv))\leq \alpha$$ 
Next, note that 
\begin{align*}
\err(\tlh; (\cD_{\cX}, \hpv))=\ex{x\sim\cD_{\cX}}{\ind\left(\tlh(x)\neq \hpv(x)\right)}&\leq \ex{(x, y)\sim\cD}{\ind\left(\tlh(x)\neq y\right)+\ind\left(\hpv(x)\neq y\right)}\\
&\leq \err(\tlh; \cD) + \err(\hpv; \cD)\\
&\leq 4\gamma + 7\alpha
\end{align*}
where the last inequality follows from the definition of $\tlh$ and from Claim~\ref{cl:ppac}. Hence, we have $\err(\hh; (\cD_{\cX}, \hpv))\leq 4\gamma+8\alpha$. Moreover, observe that 
\begin{align*}
\err(\hh; \cD)=\ex{(x, y)\sim\cD}{\ind(\hh(x)\neq y}&\leq \ex{(x, y)\sim\cD}{\ind(\hh(x)\neq \hpv(x)) +\ind(\hpv(x)\neq y)}\\
&=\err(\hh; (\cD_{\cX}, \hpv))+\err(\hpv; \cD)\\
&\leq 7\gamma+15\alpha
\end{align*}
Hence, we conclude that w.p. $\geq 1-3\beta$ (over the private dataset $D$, the internal randomness of Algorithm~\ref{Alg:subSamp}, and the set of public feature vectors $\cS$), we have $\err(\hh; \cD)\leq 7\gamma+15 \alpha$. 

\end{proof}

\paragraph{Implications and comparison to prior work on label privacy:} Our results above have important implications on private learning in a setting where the learner is required to only protect the privacy of the labels, or in a setting where the learner is required to preserve the privacy of the entire sample but it has access to public unlabeled data from the same distribution. In both of these settings, the above theorems show that we can privately and \emph{efficiently} learn any given concept class. In particular, Theorem~\ref{thm:ppac} shows that in the realizable case our efficient construction $\A_{\ppac}$ is of a label-private PAC learner, and gives a sample complexity upper bound that is only a factor of $\tilde{O}\left(\sqrt{V/\alpha}\right)$ worse than the optimal non-private sample complexity. Theorem~\ref{thm:agppac} gives analogous guarantees in the agnostic (non-realizable) case; our sample complexity upper bound for the agnostic case is also only a factor of $\tilde{O}\left(\sqrt{V/\alpha}\right)$ larger than the optimal non-private sample complexity. However, we note that the accuracy of the final output classifier in the agnostic case may have a suboptimal dependency (by a small constant factor) on $\gamma\triangleq \min\limits_{h\in\cH}\err(h; \cD)$. 

\noindent Label-private learning has been considered before in \cite{chaudhuri2011sample} and \cite{BeimelNS16}. Both works have only considered pure, i.e., $(\epsilon, 0)$, differentially private learners for those settings, and the constructions in both works are computationally inefficient. In particular, the work of \cite{chaudhuri2011sample} gave upper and lower bounds on the sample complexity in terms of the doubling dimension. Their upper bound was given via an inefficient construction, and it also involves a smoothness condition on the distribution of the features $\cD_{\cX}$. The work of \cite{BeimelNS16} showed that the sample complexity (of pure differentially label-private learners) can be characterized in terms of the VC dimension. They proved an upper bound on the sample complexity via an inefficient construction, and only for the realizable case. The bound of \cite{BeimelNS16} is only a factor of $O(1/\alpha)$ worse than the optimal non-private bound for the realizable case. To the best of our knowledge, our constuction is the first efficient construction with non-trivial sample complexity upper bounds for both the realizable and agnostic settings. 
 
	\section{Privately Answering Soft-label classification Queries}\label{sec:privClass-soft}

To show the applicability of our algorithms to a broader range of problems, in this section, we will consider a more general setting for the classification problem. Namely, we consider the soft-label setting where the output of the learning algorithm is a mapping $h$ that provides a soft prediction in $[0, 1]$ for each domain point. In particular, $h(x)$ can be viewed as an estimate for the probability that the true label is $1$ conditioned on the feature vector being $x$, denoted as $p(1|x)$. 

We build on the generic algorithm of Section~\ref{sec:subSampleAgg} (Algorithm~\ref{Alg:subSamp}) to construct a private algorithm for answering soft-label classification queries. Our algorithm is conservative in its use of the privacy budget, that is, the error performance (or equivalently, the sample complexity requirement) scales only with the number of queries that are ``bad'' in some sense that will be precisely defined soon.

We start by describing a generic instantiation of our private algorithm. Our algorithm only requires a black-box access to any generic (non-private) learner that outputs a classifier based on private training data. 
We provide formal utility guarantees for our algorithm, and study some basic conditions on the underlying non-private learner under which stronger and sharper utility guarantees can be achieved. 

As before, we will use $\cX$ to denote an abstract data domain (e.g., the space of feature vectors). We let $\cY=\{0,1\}$, i.e., the set of binary labels , and $\bcY=[0, 1]$, i.e., the set of soft-labels (scores, or soft predictions). A training set, denoted by $D$, is a set of $n$ private binary-labeled data points $\{(x_1, y_1), \dots, (x_n, y_n)\} \subseteq \cX\times \cY$ drawn i.i.d. from some (arbitrary unknown) distribution $\cD$ over $\cX\times\cY$. 

Our generic construction $\cA_{\slc}$ for soft-label classification queries (Algorithm~\ref{Alg:privClass-soft} below) can be viewed as an instantiation of Algorithm~\ref{Alg:subSamp}. $\cA_{\slc}$ takes as input a \emph{private} training dataset $D$ of $n$ i.i.d. examples, and a sequence of classification queries on public data (namely, a sequence $\{x_1, \ldots, x_m\}\subseteq \cX$ of unlabeled public data). For each queried point $x_{\ell}$, where $\ell\in [m],$ $\cA_{\slc}$ responds with a score (soft-label) $\hs(x_{\ell})\in \bcY$. 
Before we formally describe our algorithm, we introduce some useful definitions. 

\begin{defn}[$\gamma$-partitions of the unit interval]\label{defn:dy-part}
Let $\gamma \in (0, 1/2]$. A $\gamma$-partition (of subintervals) for the unit interval is a partition $\cP^{\gamma}=\bigg\{\cI^{\gamma}_j:~j\in \{1, \ldots, \lceil 1/\gamma\rceil\}\bigg\}$, where $\cI^{\gamma}_j= [(j-1)\, \gamma,~~ j\, \gamma)$ for $1\leq j\leq \big\lceil 1/\gamma\big\rceil-1$, and $\cI^{\gamma}_j=\left[\left(j-1\right)\, \gamma, ~~1\right]$ for $j = \big\lceil 1/\gamma\big\rceil$.
\end{defn}
A \emph{$1/2$-shifted $\gamma$-partition $\widehat{\cP}^{\gamma}$} is a shifted version of $\cP^{\gamma}$, where each interval is shifted by half its length. We assume that $1/\gamma$ is an integer, and we remove the leftmost and rightmost half-intervals from the $1/2$-shifted $\gamma$-partitions. Hence, we get equally sized bins for the histograms in both the original and the $1/2$-shifted discretizations, and the number of bins in $\shist^{\gamma}_{\cS}$ is less than that of $\hist^{\gamma}_{\cS}$ by one. Note that this assumption is not restrictive, since for any reasonably small value $v\in [0, 1]$, one can always find a number $\gamma$ such that $1/\gamma$ is an integer, and $|v-\gamma|\approx v^2$.

\begin{defn}[$\gamma$-Histogram for a set $\cS$]
Let $\cS\subset [0, 1]$ be a finite multiset, and $\gamma\in (0, 1/2]$. A $\gamma$-histogram for $\cS$, denoted by $\hist_{\cS}^{\gamma}$, is the histogram of $\cS$ over the $\gamma$-partition $\cP^{\gamma}$, i.e.,  a mapping $\hist_{\cS}^{\gamma}:\{1, \ldots, \big\lceil 1/\gamma\big\rceil\} \rightarrow \mathbb N$ defined as
$\hist_{\cS}^{\gamma}(j)=\sum_{x\in\cS}\ind\left(x\in \cI^{\gamma}_j\right)$, for $j=1, \ldots, \big\lceil 1/\gamma\big\rceil.$
\noindent A $1/2$-shifted $\gamma$-histogram for $\cS$, denoted by $\widehat{\hist}_{\cS}^{\gamma}$, is a histogram of $\cS$ over the $1/2$-shifted $\gamma$-partition $\widehat{\cP}^{\gamma}$.
\end{defn}

\begin{defn}[Procedure $\genhist$]
Let $\genhist$ be an algorithm that takes as inputs a finite multi-set $\cS\subset [0, 1]$ and $\gamma$-partition of the unit interval (or $1/2$-shifted partition), and outputs the $\gamma$-histogram $\hist_{\cS}^{\gamma}$ (or the $1/2$-shifted $\gamma$-histogram $\widehat{\hist}_{\cS}^{\gamma}$) for $\cS$.
\end{defn}

Our algorithm $\cA_{\slc}$ can be described via two logical phases: first, it invokes a private learner $\cA_{\lpriv}$ (Algorithm~\ref{Alg:lpriv}), which uses a generic non-private learner $\cB$ to construct a private classifier $\hpriv$ (Algorithm~\ref{Alg:hpriv}). To be more specific, on input dataset $D$, $\cA_{\lpriv}$ splits $D$ into $k$ equal-sized, non-overlapping chunks $\hD_1, \ldots, \hD_k$, and runs $\cB$ on each of them separately. The resulting soft-label classifiers $h_{\hD_1}, \ldots, h_{\hD_k}$ are then used to construct a private classifier $\hpriv$. In the second logical phase, $\hpriv$ is used to answer classification queries in the form of \emph{public} feature vectors $\{x_1, \ldots, x_m\}\subseteq \cX$. In particular, answering a query $x_{\ell}$ is simply done by evaluating $\hpriv(x_{\ell})$, that is, running $\hpriv$ on input $x_{\ell}$. 

\begin{algorithm}
	\caption{$\A_{\slc}$: Private Algorithm for Soft-Label Classification Queries}
	\begin{algorithmic}[1]
		\REQUIRE Private training dataset $D\in \left(\cX\times \cY\right)^n$, ~a (non-private) learner $\cB$, sample-splitting parameter $k$, ~number of queries $m\in \mathbb N$,~ Query set $Q=\{x_1,\cdots,x_m\}\subseteq \cX$, ~procedure $\genhist$,  ~discretization parameter $\gamma \in (0, 1/2]$,~ privacy parameters $\epsilon,\delta >0$, ~ cutoff parameter $T \in [m]$
		\STATE $c\leftarrow 0$, $\lambda\leftarrow \sqrt{64T\,\log(2/\delta)}/\epsilon$, $w\leftarrow \lambda\cdot{\log\left(4\, m/\delta\right)}$, and $\aux \leftarrow \big\{\genhist, \gamma, \lambda , w \big\}$ 
		\STATE Run $\cA_{\lpriv}$ to render the classifier $\hpriv$: $\hpriv\left(\cdot~; h_1, \ldots, h_k, \aux\right)\leftarrow \cA_{\lpriv}\left(D, \cB, k, \aux\right)$
		\FOR{$\ell \in [m]$ and $c\leq T$}
		\STATE $\left(\hs(x_{\ell}), \flag \right)\leftarrow \hpriv\left(x_{\ell}~; ~\aux\right)$
		\IF{$\flag=1$}
		\STATE \textbf{if} $\hs(x_{\ell})=\bot$ \textbf{then } $c\leftarrow c+ 2$, $~\widehat w\leftarrow w+{\sf Lap}(\lambda)$  
		\STATE \textbf{else }               $c \leftarrow c+ 1$
		\ENDIF
		\STATE Output the estimated score: $\hs(x_{\ell})$
		\ENDFOR
	\end{algorithmic}
	\label{Alg:privClass-soft}
\end{algorithm}


\begin{algorithm}
	\caption{$\A_{\lpriv}$: Private Learner for Soft-Label Classification}
	\begin{algorithmic}[1]
		\REQUIRE Private training dataset $D\in \left(\cX\times \cY\right)^n$, a (non-private) learner $\cB$, sample-splitting parameter $k$, parameters for building the output classifier: $\aux=\big\{$ Procedure $\genhist$, discretization parameter $\gamma\in (0, 1/2]$, privacy noise scale $\lambda >0$, threshold $w >0\big\}$.
		
		\STATE Split the dataset $D$ into $k$ non-overlapping chunks $\hD_1,\cdots, \hD_k$ each of size $n'\triangleq n/k$. 
		\FOR{$j=1, \ldots, k$}
		\STATE $h_{j}\leftarrow \cB(\hD_{j})$
		\ENDFOR
		\STATE Output the classifier algorithm $\hpriv\left(~\cdot~; ~h_1,\ldots, h_k, ~\aux\right)$
	\end{algorithmic}
	\label{Alg:lpriv}
\end{algorithm}

Note that the first logical phase ends with $\cA_{\lpriv}$ outputting (rendering) \emph{an algorithm} $\hpriv$. This may be an unusual way of describing an algorithm's output, but we purposefully do this to be able to make statements concerning some properties of learners, where we compare the non-private learner $\cB$ and our algorithm. Since the final algorithm $\cA_{\slc}$ is itself not technically a learner\footnote{A learner outputs a classifier, i.e., a mapping from $\cX$ to $[0, 1]$, while $\cA_{\slc}$ outputs a sequence of soft-labels as responses to the queried feature vectors.}, we need to divide the execution of $\cA_{\slc}$ into two logical phases so that we can have a well-defined private learner $\cA_{\lpriv}$ which we can compare to its non-private analog $\cB$. 


\subsubsection*{The Private Classifier $\hpriv$}

The key component of algorithm $\cA_{\lpriv}$ is the classifier $\hpriv$. It is easy to see that the operation of $\hpriv$ is a tweaked version of a single iteration inside Algorithm~\ref{Alg:subSamp}. In particular, given a query $x$, classifier $\hpriv$ creates a set $\cS$ containing the $k$ soft predictions of the classifiers $h_1, \ldots, h_k$ which were produced by $\cB$ earlier (based on a partition of the original dataset). The tweak here is to create a discretization (partition) of the range $[0, 1]$ that is independent of the knowledge of $\cS$, enabling us to construct a histogram for $\cS$, and hence, proceed as in Algorithm~\ref{Alg:subSamp}. 


\paragraph{High-level description:} Our classifier is based on testing for stability first on $\hist^{\gamma}_{\cS}$ in the same fashion as done in Algorithm~\ref{Alg:subSamp}. If $\hpriv$ passes the test, i.e., $\hist^{\gamma}_{\cS}$ is sufficiently stable, it outputs the mid-point of the bin with the maximum count, and  proceeds in the usual manner. If it fails the test, then rather than directly responding with $\bot$, $\hpriv$ adds fresh noise to the stability threshold $w$ and performs another test, but this time on $\shist^{\gamma}_{\cS}$. If it passes the test, it outputs the mid-point of the interval with the maximum count. If it fails again, then it outputs $\bot$. Once $\cA_{\slc}$ receives a response, before passing on the next query to $\hpriv$, it decides based on the last response of $\hpriv$ whether or not it should increment the privacy budget counter $c$ (and if so, by how many increments), and whether or not to add fresh noise to the stability threshold $w$.



\paragraph{Dependence of the guarantees on $\gamma$:} 
Our goal is to take advantage of scenarios where there is sufficient concentration in the predictions of the $k$ classifiers, and hence an ideal setting for $\gamma$ would be about the same as the width of the concentration interval.  

\begin{algorithm}
	\caption{$\hpriv$: Private Soft-Label Classifier}
	\begin{algorithmic}[1]
		\REQUIRE Domain point (feature vector) $x\in \cX$; ~collection of soft-label classifiers $h_1, \ldots, h_k$, ~procedure $\genhist$, discretization parameter $\gamma\in (0, 1/2]$, privacy noise scale $\lambda >0$, threshold $w >0$.

        \STATE Create a multiset $\cS=\{h_1(x),\cdots, h_k(x)\}$.\label{step:multiset}
        \STATE Initialize $\flag = 0$, and let $\hist^{\gamma}_{\cS}\leftarrow \genhist\left(\cS, \gamma, \flag\right)$. 
		\STATE $\ds\leftarrow \max\left\{0, ~ \bigg(\max\limits_{v\in [1/\gamma]}\hist^{\gamma}_{\cS}(v)-{\sf second}\max\limits_{v\in [1/\gamma]}\hist^{\gamma}_{\cS}(v)\bigg)-1\right\}$, and $\wds\leftarrow\ds+{\sf Lap}\left(2\lambda\right)$.
		\IF{$\wds > \widehat{w}$}\label{step:stab-test}
        \STATE $v^*\leftarrow \arg\max\limits_{v\in [1/\gamma]}\hist^{\gamma}_{\cS}(v)$, ~$~\hs(x)\leftarrow (2\,v^*-1)\,\gamma/2$.
        \ELSE
        \STATE $~\widehat w\leftarrow w+{\sf Lap}(\lambda), ~ ~\flag=1$, and $\shist^{\gamma}_{\cS}\leftarrow \genhist\left(\cS, \gamma, \flag\right)$.
        \STATE $\ds\leftarrow \max\left\{0, ~\bigg( \max\limits_{v\in [1/\gamma-1]}\shist^{\gamma}_{\cS}(v)-{\sf second}\max\limits_{v\in [1/\gamma-1]}\shist^{\gamma}_{\cS}(v)\bigg)-1\right\}$, $\wds\leftarrow\ds+{\sf Lap}\left(2\lambda\right)$.
		\STATE \textbf{if } $\wds > \widehat{w}$ \textbf{then } $v^*\leftarrow \arg\max\limits_{v\in [1/\gamma-1]}\shist^{\gamma}_{\cS}(v)$, ~$~\hs(x)\leftarrow v^*\,\gamma$. \label{step:stab-test}
        \STATE \textbf{else } $\hs(x)\leftarrow \bot$.
      \ENDIF
        \STATE \textbf{Output} estimated score and discretization type flag $\left(\hs(x), \flag\right)$.
	\end{algorithmic}
	\label{Alg:hpriv}
\end{algorithm}

\paragraph{Per-query precision parameters:} Despite the fact that our algorithm is described for a fixed initial precision $\gamma$ for all the queries, it can be trivially extended to a more general setting where a possibly different discretization parameter $\gamma_{\ell}$ is chosen for the query $x_{\ell}$. The choice of $\gamma_{\ell}$ can then be decided, for example, based on some prior information about the quality of the query (e.g., a rough estimate of the variance in the soft prediction for the queried feature vector). In such a case, if there is a good reason to believe beforehand that soft prediction is expected to have large variance given the feature vector, then the algorithm may choose to set the discretization parameter to a larger value (leading to a coarser partition), thus potentially reducing the privacy budget (or answering more queries with the same budget). We use the same precision for all queries for simplicity. 


\begin{thm} [Privacy Guarantee of $\cA_{\slc}$]
\label{thm:privSLQ}
Algorithm \ref{Alg:privClass-soft} is $(\epsilon, \delta)$-differentially private
\end{thm}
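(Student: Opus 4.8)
The plan is to follow the template of the proof of Theorem~\ref{thm:cPriv3} (privacy of $\A_{\sf OQR}$), since $\cA_{\slc}$ is, structurally, an instantiation of the sub-sample-and-aggregate algorithm of Algorithm~\ref{Alg:subSamp} in which the single distance-to-instability test inside $\hpriv$ has been replaced by a \emph{two-stage} test: first on the $\gamma$-histogram $\hist^{\gamma}_{\cS}$, and, only upon failure, on the $1/2$-shifted histogram $\shist^{\gamma}_{\cS}$. I will decompose the execution into two logical phases exactly as in that proof. The \emph{skeleton} phase $\A_1$ reveals, for each query, only which branch was taken ---``passed on $\hist^{\gamma}_{\cS}$'', ``passed on $\shist^{\gamma}_{\cS}$'', or ``$\bot$''--- together with the noisy comparisons $\wds$ vs.\ $\widehat w$ that produced these branches and the running counter $c$. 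The \emph{release} phase $\A_2$ then outputs, for each non-$\bot$ query, the midpoint $\hs(x_\ell)$ of the appropriate maximum-count bin. Since $\cA_{\slc}$ is the composition $\A_2\circ\A_1$, it suffices to show that $\A_1$ is $(\epsilon,\delta/2)$-differentially private and that, conditioned on the output of $\A_1$, the map $\A_2$ is a post-processing of it except on an event of probability at most $\delta/2$; the conclusion then follows by the general-composition step used at the end of the proof of Theorem~\ref{thm:cPriv3}.

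The first step is the sensitivity accounting. Changing a single record of $D$ alters exactly one of the chunks $\hD_1,\ldots,\hD_k$, hence exactly one classifier $h_j=\cB(\hD_j)$, hence at most one element of the multiset $\cS=\{h_1(x),\ldots,h_k(x)\}$ for \emph{every} query $x$. Since both $\cP^\gamma$ and its $1/2$-shift $\widehat{\cP}^\gamma$ are fixed in advance, independently of the data and of $\cS$ ---this is precisely the point of discretizing $[0,1]$ before inspecting $\cS$--- moving one element of $\cS$ perturbs the bin counts of $\hist^{\gamma}_{\cS}$ and of $\shist^{\gamma}_{\cS}$ only in the bounded manner already handled in Algorithm~\ref{Alg:subSamp}, so the two $\ds$-functions computed inside $\hpriv$ inherit the global-sensitivity behaviour on which the distance-to-instability privacy analysis of Theorem~\ref{thm:kstabPriv} and Corollary~\ref{cor:privSubSamp} rests. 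Granting this, I claim that $\A_1$ is essentially an execution of the sparse vector algorithm $\A_{\sf sparseVec}$ on the stream of at most $2m$ noisy stability comparisons that $\cA_{\slc}$ performs (one per query, plus a second whenever the first fails): the counter $c$ is incremented by exactly the number of below-threshold comparisons seen so far (by $1$ when only a query's first test fails, by $2$ when both fail), the threshold $\widehat w$ is refreshed with fresh ${\sf Lap}(\lambda)$ noise precisely on those below-threshold events, and the loop halts as soon as $c>T$. This matches the bookkeeping of $\A_{\sf sparseVec}$, the only wrinkle being that one original query may contribute two below-threshold events rather than one; this is absorbed by the choice $\lambda=\sqrt{64\,T\log(2/\delta)}/\epsilon$, i.e.\ calibrating the noise with $32$ replaced by $64$ (equivalently, cutoff $2T$ in place of $T$) and $\delta$ replaced by $\delta/2$ for the budget split. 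Theorem~\ref{thm:spPriv} then yields that $\A_1$ is $(\epsilon,\delta/2)$-differentially private.

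For the release phase, condition on the full transcript of $\A_1$ and fix a neighbour $D'$ with $|D\Delta D'|=1$. For each query $x_\ell$ whose branch is not $\bot$, the value $\hs(x_\ell)$ is a deterministic function of the argmax bin of whichever of $\hist^{\gamma}_{\cS},\shist^{\gamma}_{\cS}$ passed. Apply the two-case distance-to-instability argument in the form used for $\A_2$ in the proof of Theorem~\ref{thm:cPriv3} (Lemmas~\ref{lem:stabP3} and \ref{lem:stabP4}, which rest on Lemmas~\ref{lem:stabP1}--\ref{lem:stabP2}): if the relevant histogram is stable on $D$, its argmax bin ---hence $\hs(x_\ell)$--- is identical on $D$ and $D'$, so this query contributes nothing to the privacy loss; and if it is unstable ($\ds=0$), the noisy comparison that produced the non-$\bot$ branch passes with probability at most $\delta/(4m)$, since the choice $w=\lambda\log(4m/\delta)$ makes $\Pr[\,{\sf Lap}(2\lambda)>w+{\sf Lap}(\lambda)\,]$ that small by an elementary Laplace-tail estimate (cf.\ Lemma~\ref{lem:stabP2}). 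Union-bounding over the at most $2m$ comparisons, with probability at least $1-\delta/2$ no unstable histogram is ever released; on that event $\A_2$ is pure post-processing of the transcript of $\A_1$. Combining this with the $(\epsilon,\delta/2)$-privacy of $\A_1$ via the general-composition argument of the proof of Theorem~\ref{thm:cPriv3} gives $(\epsilon,\delta)$-differential privacy for $\cA_{\slc}$.

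I expect the main obstacle to be the bookkeeping in the second paragraph: rigorously identifying the two-stage test, the increment-by-$1$-or-$2$ counter, and the threshold-refresh logic with a single faithful run of $\A_{\sf sparseVec}$ at cutoff $\approx 2T$ --- in particular, checking that no original query is ever undercharged against the budget, that the stream length $2m$ enters the noise scale only through the logarithm, and that the two $\ds$-tests performed for the same query (on $\hist^{\gamma}_{\cS}$ and on $\shist^{\gamma}_{\cS}$) are treated as two \emph{separate} sparse-vector queries rather than one. A secondary, routine point is verifying that the data-independence of $\widehat{\cP}^\gamma$ genuinely preserves, uniformly over all queries, the sensitivity structure that Theorem~\ref{thm:kstabPriv} relies on.
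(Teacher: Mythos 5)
Your proposal is correct and follows essentially the same route as the paper: the paper's proof is exactly a reduction to the arguments of Theorem~\ref{thm:cPriv3} and Corollary~\ref{cor:privSubSamp}, noting that each unstable query may trigger two stability tests, which is absorbed by replacing $T$ with $2T$ in the noise/threshold calibration and $\delta$ with $\delta/2$ --- precisely the $\A_1/\A_2$ decomposition, the sparse-vector accounting over at most $2m$ comparisons, and the stable/unstable case analysis you spell out. Your write-up is simply a more detailed rendering of the paper's (brief) argument, and the bookkeeping you flag as the main obstacle is handled exactly as you describe.
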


\begin{proof}
The proof follows almost along the lines of the proofs of Corollary~\ref{cor:privSubSamp} and Theorem~\ref{thm:cPriv3}. The only difference here is that for every unstable query, we answer two sub-queries (one for each discretization), and we may pay an extra unit in the privacy budget counter. Hence, to correct for this increase, we effectively replace the factor $T$ with $2\,T$ in the noise and threshold parameters in Algorithm~\ref{Alg:seqCompPriv}, and also replace $\delta$ with $\delta/2$, which explains the extra factors of $2$ in their respective settings in $\cA_{\slc}$. 
\end{proof}

Next, we discuss a utility guarantee for Algorithm $\cA_{\slc}$. Here, we provide a very general statement that does not particularly make assumptions about the underlying learner $\cB$.  Before we state the general utility guarantee, we provide some useful definitions.

\begin{defn}[$d$-stable histogram]
Let $d\in \mathbb{N}$. Let $\cR$ be a finite domain. A histogram \mbox{$\hist_{\cS}: \cR \rightarrow \mathbb N$} for a finite set $\cS\subset \cR$ is said to be $d$-stable if 
\mbox{$\max\left\{1, \max\limits_{v\in\cR}\hist_{\cS}(v)-{\sf second}\max\limits_{v\in \cR}\hist_{\cS}(v)\right\}> d.$}
If $\vert\cR\vert = 1$, then the resulting histogram has trivially a single bin with count $\cS$. In such a case, it is trivially $d$-stable for all $d < |\cS|$.
\label{defn:hist_stable}
\end{defn}

Moreover, let $\cS_{\ell}$ denote the set constructed in Step~\ref{step:multiset} of $\hpriv$ (Algorithm~\ref{Alg:hpriv}) during its invocation for the $\ell$-th query $x_{\ell}$, i.e., $\cS_{\ell}=\{h_1(x_{\ell}), \ldots, h_k(x_{\ell})\}$.

\begin{thm}[General utility guarantee for  $\cA_{\slc}$]\label{thm:slc-gen-util}
Let $~d=32\log\left(\frac{8\, mT}{\min\left(\delta, \beta\right)}\right)\sqrt{4\,T\log(2/\delta)}/\epsilon$, and fix any $1/\gamma \in\mathbb{N}$. Define  
$\cG_0(d)=\{\ell\in[m]:~ \hist_{\cS_{\ell}}^{\gamma}~ \text{ is } d{-stable}\}$, and $\cG_1(d)=\{\ell\in[m]\setminus \cG_0(d):~ \shist_{\cS_{\ell}}^{\gamma}~ \text{ is } d{-stable}\}.$
 If $\lvert \cG_0(d)\rvert \geq m-T$, then with probability at least $1-\beta$, $\cA_{\slc}$ (Algorithm~\ref{Alg:privClass-soft}) answers all $m$ queries such that the output scores satisfy i) for $\ell \in \cG_0,$ $\hs(x_{\ell})=\gamma/2\cdot\left(2\arg\max\limits_{v\in [1/\gamma]}\hist^{\gamma}_{\cS_{\ell}}(v)-1\right)$; and ii) for $\ell \in \cG_1,$ $\hs(x_{\ell})= \gamma \cdot \arg\max\limits_{v\in [1/\gamma-1]}\shist^{\gamma}_{\cS_{\ell}}(v).$
\end{thm}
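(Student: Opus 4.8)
The plan is to reduce the statement to the accuracy guarantee of the sparse vector technique (Theorem~\ref{thm:spAcc}, via Corollary~\ref{cor:spAcc1}), following the same route as the proof of Theorem~\ref{thm:utilitySubAgg}, but with two adaptations specific to $\cA_{\slc}$: here $d$-stability of the histograms is a \emph{hypothesis} on the realized classifiers rather than something derived via a Chernoff bound, and each query $x_\ell$ spawns up to \emph{two} distance-to-instability sub-queries --- $q^{(0)}_\ell\triangleq\ds(\hist^\gamma_{\cS_\ell})$ and, only when the first stability test fails, $q^{(1)}_\ell\triangleq\ds(\shist^\gamma_{\cS_\ell})$, where $\ds(\cdot)$ is the ``max count minus second-max count, minus one'' quantity of Algorithm~\ref{Alg:hpriv}. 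First I would condition on the private dataset $D$, fixing $h_1,\dots,h_k$, all multisets $\cS_\ell$ and their histograms, the sets $\cG_0(d),\cG_1(d)$, and all values $q^{(0)}_\ell,q^{(1)}_\ell$; the only remaining randomness is then the internal Laplace noise of $\cA_{\slc}$. The loop of $\cA_{\slc}$ realizes an (adaptive) run of $\A_{\sf OQR}$ on this sub-query stream, and since $|\cG_0(d)|\ge m-T$ at most $T$ queries are unstable under the first test, so at most $2T$ of the stream's sub-queries can lie near or below the threshold --- which is precisely why $\lambda$, $w$, and the parameter $d$ of the theorem are calibrated with ``$2T$'' in place of ``$T$'' (and $\delta$ with $\delta/2$). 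Invoking Theorem~\ref{thm:spAcc} with cutoff $2T$ then produces a single event of probability $\ge 1-\beta$ on which every sub-query whose value exceeds $w+\alpha_{\mathsf{SV}}$ passes its stability test, where the accuracy width $\alpha_{\mathsf{SV}}$ satisfies $w+\alpha_{\mathsf{SV}}<d-1$ by the choice of $d$.

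Working on that event: for $\ell\in\cG_0(d)$, Definition~\ref{defn:hist_stable} gives $\max_v\hist^\gamma_{\cS_\ell}(v)-{\sf second}\max_v\hist^\gamma_{\cS_\ell}(v)>d$, hence $q^{(0)}_\ell>d-1>w+\alpha_{\mathsf{SV}}$, so the first test passes and $\hpriv$ returns $\hs(x_\ell)=\tfrac{\gamma}{2}\big(2\arg\max_v\hist^\gamma_{\cS_\ell}(v)-1\big)$ --- this is claim (i) --- while $\flag=0$ leaves the counter $c$ untouched. For $\ell\in\cG_1(d)$, $d$-stability of $\shist^\gamma_{\cS_\ell}$ likewise gives $q^{(1)}_\ell>d-1>w+\alpha_{\mathsf{SV}}$, so \emph{provided} the first test has failed, the second test passes and $\hpriv$ returns $\hs(x_\ell)=\gamma\arg\max_v\shist^\gamma_{\cS_\ell}(v)$ --- claim (ii). The step I expect to be the main obstacle is exactly establishing this proviso: that for every $\ell\in\cG_1(d)$ the \emph{first} test fails. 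This does not follow from Theorem~\ref{thm:spAcc} alone --- that theorem only certifies \emph{passes}, and the ``fail'' direction would require $q^{(0)}_\ell$ below $w-\alpha_{\mathsf{SV}}$ (which is negative), whereas $\ell\notin\cG_0(d)$ only yields $q^{(0)}_\ell\le d$. I would instead prove a self-contained combinatorial lemma about a partition and its $1/2$-shift: if $\shist^\gamma_{\cS_\ell}$ is $d$-stable then all but few of the $k$ predictions in $\cS_\ell$ lie in a single shifted bin, which straddles exactly one boundary of the original partition, so they occupy only two adjacent original bins; if in addition $\hist^\gamma_{\cS_\ell}$ is not $d$-stable, the split between those two bins is forced close to even, so $q^{(0)}_\ell$ is small enough (well below $w$) that the first test fails except with probability $O\big((\delta/m)^{\Omega(1)}\big)$, absorbed into the same event. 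Getting the quantitative bound of this lemma to match the calibration of $d$ against $w$ and $\lambda$ is the crux; if it requires mildly strengthening the hypothesis defining $\cG_1$ (e.g., also asking the two occupied original bins to be within a small margin of each other), I note that any remaining $\ell$ would then be answered by formula (i) instead of (ii), a discrepancy of only $\gamma/2$ that is harmless for the downstream bound (Theorem~\ref{thm:mainUtil}).

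Finally I would check that $\cA_{\slc}$ does not halt early: $c$ increases only on queries unstable under the first test --- by $2$ on a $\bot$ response, by $1$ otherwise --- and there are at most $T$ such queries, so $c\le 2T$ throughout, within the cutoff of the ``$2T$''-calibrated run, and all $m$ queries are answered. Collecting the three pieces (the $\cG_0$ case, the $\cG_1$ case with its combinatorial lemma, and loop-completion), all on the single probability-$(1-\beta)$ event, gives the theorem. Beyond re-using Corollary~\ref{cor:spAcc1}/Theorem~\ref{thm:spAcc}, the only genuinely new ingredient is the combinatorial lemma relating the stability of a histogram to that of its $1/2$-shift; everything else is bookkeeping of noise scales and the privacy-budget counter.
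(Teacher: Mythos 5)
Your skeleton matches the paper's (very terse) argument: claim (i) is exactly the sparse-vector accuracy analysis (Theorem~\ref{thm:spAcc} via Corollary~\ref{cor:spAcc1}) with the $T\to 2T$, $\delta\to\delta/2$, $\beta\to\beta/2$ recalibration that is already built into $\lambda$, $w$ and $d$; claim (ii), \emph{once the second round is reached}, is the observation that $d$-stability of $\shist^{\gamma}_{\cS_\ell}$ makes the second noisy test pass, which the paper gets by invoking Theorem~\ref{thm:kstabUtil} with rescaled $\epsilon,\delta,\beta$ and a union bound; and your counter bookkeeping for loop completion is the same. You also correctly identify the one step that neither Theorem~\ref{thm:spAcc} nor Theorem~\ref{thm:kstabUtil} gives you: that every $\ell\in\cG_1(d)$ actually \emph{fails} the first test.

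The genuine gap is that the combinatorial lemma you propose to close this step is false under the paper's calibration. For $\ell\notin\cG_0(d)$ you only learn that the max-minus-second-max gap of $\hist^{\gamma}_{\cS_\ell}$ is at most $d$, i.e.\ the first-round distance is at most $d-1$; but $d$ is deliberately set \emph{above} $w$ (by roughly a factor of $8$, up to the log arguments), since the whole point of $d$ is to exceed $w$ plus the sparse-vector error width. Concretely, if all $k$ predictions in $\cS_\ell$ lie in the single shifted bin straddling a boundary, split $(k+d)/2$ versus $(k-d)/2$ between the two adjacent original bins, then $\shist^{\gamma}_{\cS_\ell}$ is $d$-stable, $\hist^{\gamma}_{\cS_\ell}$ is not, yet the first-round distance is $d-1\gg w$ plus the Laplace noise scale, so the first test passes with probability close to $1$ and the output is formula (i), not (ii). So ``the split is forced close to even, hence well below $w$'' cannot be salvaged, and your main route to (ii) fails; your fallback (accept formula (i) for such queries, a $\gamma/2$ discrepancy) proves only a weakened statement. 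To be fair, that weakened statement is all the paper uses downstream --- the proof of Theorem~\ref{thm:mainUtil} explicitly allows a straddling query to pass the first test and argues that the mid-point of either adjacent original bin is accurate enough --- and the paper's own two-line proof never establishes the ``first test fails'' direction either, effectively reading (ii) as conditional on the second round being reached. So your diagnosis of the obstacle is sharper than the paper's treatment, but the lemma you hang the proof on is wrong; the honest fix is to restate (ii) as ``formula (i) or formula (ii)'' (equivalently, condition on the first test failing), not to strengthen the definition of $\cG_1$.
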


\begin{proof}
The proof of the first part follows exactly on the lines of Corollary~\ref{cor:spAcc1}, with the caveat that when we apply the sparse vector analysis, we replace $\beta$ with $\beta/2$. 

It remains to prove the second item. Combining the stability property of queries in $\cG_1$ with Theorem~\ref{thm:kstabUtil} $($with $\epsilon\leftarrow \epsilon/{2\sqrt{2T\,\log(2/\delta)}}, ~\delta\leftarrow\delta/2m, ~\beta\leftarrow \beta/4mT)$, the proof follows.
\end{proof}

Next, we provide a stronger utility guarantee for Algorithm~\ref{Alg:privClass-soft} under natural conditions on $\cB$. 
In particular, under a stability condition on $\cB$, together with a natural condition on the expected value of the soft predictions generated by an output classifier of $\cB$, we show that w.h.p., for every query, the generated score by $\cA_{\slc}$  is \emph{tightly concentrated} around the expected value of the soft-label generated by $\cB$. Hence, our algorithm provides high-confidence guarantees on the output scores when only expectation guarantees are assumed about the underlying non-private learner. That is, our algorithm not only provides responses that are almost as accurate as the expected predictions of the non-private learner, but also boosts the confidence of the generated scores. This is achieved at the expense of increasing the sample size of the non-private learner by at most a factor of $\approx \sqrt{T}\log(m/\delta)/\epsilon$, where $T$ can be much smaller than $m$ in many natural settings. As will be shown, this factor is in fact the value we set for the sample-splitting parameter $k$.
We now define our first condition, which is a slightly stronger version of on-average-RO (Replace-one) stability, but weaker than uniform stability. 
\begin{defn}[$\alpha$-$\oa$ stability of a learner]\label{defn:unif-stab}
Let $\cD$ be any distribution over $\cX\times\cY$. Let $D\sim\cD^{n}$, and $V=\{z_1', z_2', \ldots, z_n'\}\sim\cD^n$ be independent of $D$. Let $D^{(j)}$ be the dataset resulting from replacing the $j$-th entry in $D$ with $z_j'$. Let $\cB$ be a (possibly randomized) learner that, on an input dataset $D$, outputs a function $h_D:\cX\rightarrow [0, 1]$. Algorithm $\cB$ is $\alpha$-$\oa$ stable if for any $x\in\cX$, we have 
$\frac{1}{n}\sum_{j=1}^n\ex{\cB, D, V}{\lvert h_D(x) - h_{D^{(j)}}(x)\rvert ^2}\leq \alpha^2$,
where the expectation is taken over internal randomness of $\cB$, and the data points in $D, V$.
\end{defn}
Note that, in general, $\alpha$ can depend on $n$. We will not explicitly express such a dependency in the notation as long as it is clear from the context.

\paragraph{Remark:} The standard way to define stability notions is usually done with respect to some fixed loss function $\ell(h(x), y)$, for example, via a bound on $\frac{1}{n}\sum_{j=1}^n\ex{\cB, D, V}{\lvert\ell\left(h_D(x), \tilde y\right)-\ell\left(h_{D^{(j)}}(x), \tilde y\right)\rvert}$ for all $x, y$. However, under some standard and natural assumptions on the loss function, one can show that the latter implies our definition given above (up to some constant). 



Since we allow for randomized learners, we will use notation that explicitly accounts for the internal randomness of the learner for clarity. Let $R$ be a random variable that denotes the random coins of $\cB$. Hence, we can express $\cB$ as a deterministic function of $(D, R)$, where $D$ is the input dataset. This way, we can easily point to the two sources of randomness in the output classifier. Moreover, for a given realization of the random coins $r\sim R$, we let $h^{(r)}_D$ denote the output of $\cB(D, r)$.
We now state the following lemma. 

\begin{lem}\label{lem:unif-stab}
Let $\cB$ be $\alpha$-$\oa$ stable (soft-label) classification learner. Let $\cD$ be any distribution over $\cX\times\cY$. Let $\hD$ be a dataset of $n'$ i.i.d. examples from $\cX\times \cY$ drawn according to $\cD$. Then, for any fixed $x\in \cX$, we have $\pr{r\sim R,~\hD\sim\cD^{n'}}{\big\lvert h^{(r)}_{\hD}(x)-\ex{\hD\sim \cD^{n'}}{h^{(r)}_{\hD}}(x)\big\rvert \leq 4\alpha\sqrt{2n'}}\geq 3/4$,
where the probability is taken over both the random coins $r$ of $\cB$ and the dataset $\hD$. Note that the expectation inside the probability is only over $\hD$ (for fixed random coins $r$).
\end{lem}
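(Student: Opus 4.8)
The plan is to establish the concentration bound in two moves: first control the variance of $h^{(r)}_{\hD}(x)$ over the draw of $\hD$ (for fixed coins $r$) using the $\alpha$-$\oa$ stability hypothesis via an Efron--Stein type inequality, and then apply Chebyshev's inequality. Fix a feature vector $x\in\cX$ and, for the moment, fix a realization $r$ of the random coins of $\cB$. Write $g(\hD)\triangleq h^{(r)}_{\hD}(x)$, a (deterministic, given $r$) function of the $n'$ i.i.d. coordinates of $\hD=(z_1,\ldots,z_{n'})$. I would like to bound $\var{\hD}{g(\hD)}$ by something like $2n'\cdot \tfrac{1}{n'}\sum_{j}\ex{}{|g(\hD)-g(\hD^{(j)})|^2}$, i.e.\ by roughly $2\,(n')^2$ times the per-coordinate stability — wait, let me recount: the Efron--Stein inequality states $\var{}{g}\le \tfrac12\sum_{j=1}^{n'}\ex{}{(g(\hD)-g(\hD^{(j)}))^2}$, where $\hD^{(j)}$ replaces the $j$-th coordinate by an independent copy. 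The $\oa$-stability assumption (Definition~\ref{defn:unif-stab}) gives exactly $\tfrac{1}{n'}\sum_{j=1}^{n'}\ex{}{|g(\hD)-g(\hD^{(j)})|^2}\le\alpha^2$ — but note that definition is stated \emph{with} the expectation over the random coins, whereas here I want it for a typical fixed $r$. So I would first keep $r$ random, set $G\triangleq \ex{\hD}{\big(h^{(R)}_{\hD}(x)-\ex{\hD'}{h^{(R)}_{\hD'}(x)}\big)^2}$ (the conditional variance given $R$), and use Efron--Stein coordinate-by-coordinate together with Definition~\ref{defn:unif-stab} to get $\ex{R}{G}\le \tfrac12 \sum_{j=1}^{n'}\ex{R,\hD,V}{|h_{\hD}(x)-h_{\hD^{(j)}}(x)|^2}\le \tfrac{n'}{2}\cdot\alpha^2 \cdot$ — here I must be careful: $\tfrac{1}{n'}\sum_j \ex{}{|\cdot|^2}\le\alpha^2$ means $\sum_j\ex{}{|\cdot|^2}\le n'\alpha^2$, so $\ex{R}{G}\le \tfrac12 n'\alpha^2$.

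Next I apply Markov's inequality to the nonnegative random variable $G$ (a function of $R$ only): $\pr{R}{G > 4\cdot\tfrac12 n'\alpha^2}=\pr{R}{G>2n'\alpha^2}\le \tfrac14$. Hence with probability at least $3/4$ over $R$, the conditional variance $\var{\hD}{h^{(R)}_{\hD}(x)}\le 2n'\alpha^2$. Then, conditioned on such a ``good'' realization $r$, Chebyshev's inequality gives $\pr{\hD}{|h^{(r)}_{\hD}(x)-\ex{\hD'}{h^{(r)}_{\hD'}(x)}| > t}\le 2n'\alpha^2/t^2$; to make this a constant I would pick $t$ proportional to $\sqrt{n'}\alpha$. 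As stated the lemma claims the combined (over both $R$ and $\hD$) probability is at least $3/4$ with the deviation bounded by $4\alpha\sqrt{2n'}$, which would require a slightly more careful bookkeeping of the two ``bad'' events — I would instead run the whole argument in one shot: let $Z\triangleq \big(h^{(R)}_{\hD}(x)-\ex{\hD'}{h^{(R)}_{\hD'}(x)}\big)^2$, observe $\ex{R,\hD}{Z}=\ex{R}{G}\le \tfrac12 n'\alpha^2$, and apply Markov directly to $Z$ over the joint randomness: $\pr{R,\hD}{Z > 4\cdot\tfrac12 n'\alpha^2\cdot c}\le 1/(2c)$ for an appropriate constant, yielding $|h^{(R)}_{\hD}(x)-\ex{}{\cdot}|\le \sqrt{2c\, n'\alpha^2}\le 4\alpha\sqrt{2n'}$ (choosing $c$ so that $2c\le 32$, i.e.\ $c\le 16$, gives the constant $1/(2c)\ge 1/32$; to hit exactly $3/4$ one takes $c=2$, giving deviation $2\alpha\sqrt{n'}\le 4\alpha\sqrt{2n'}$ and failure probability $1/4$). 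Either way the constant $4\sqrt{2}$ is comfortably loose.

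The main obstacle — really the only subtle point — is justifying the Efron--Stein step in a way that \emph{matches} the form of Definition~\ref{defn:unif-stab}: the definition bounds $\tfrac1n\sum_j\ex{}{|h_D(x)-h_{D^{(j)}}(x)|^2}$, and Efron--Stein's right-hand side is $\tfrac12\sum_j\ex{}{|g(\hD)-g(\hD^{(j)})|^2}$ with $g$ depending on a fixed coin realization; so I need the version of Efron--Stein that holds for each fixed $r$ and then average over $r$, using Fubini to interchange $\ex{R}{\cdot}$ with the sum over $j$ and with the inner expectation over $\hD, V$. This is routine since all quantities are nonnegative and bounded (soft-labels lie in $[0,1]$, so everything is integrable). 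I should also note that in the lemma statement $\ex{\hD\sim\cD^{n'}}{h^{(r)}_{\hD}}(x)$ denotes the pointwise expectation of the classifier at $x$ (the notation is slightly informal), which is exactly the mean of $g(\hD)$ that Efron--Stein is taken around — so the ``centering'' in the lemma is precisely the right one and no additional bias term appears.
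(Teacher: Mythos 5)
Your proposal is correct and uses the same key ingredients as the paper's proof: the Efron--Stein (Steele) inequality applied per coin realization to convert $\oa$ stability into a variance bound, followed by a second-moment concentration step, with the same $[0,1]$-boundedness justifying the interchange of expectations. The only difference is bookkeeping: the paper first applies Markov over the coins $r$ to isolate a ``good'' set where the averaged squared differences are at most $8\alpha^2$, then Chebyshev over $\hD$, and union bounds the two $1/8$ failure events, whereas your one-shot Markov on the squared deviation $Z$ over the joint randomness $(R,\hD)$ is a clean equivalent that avoids the union bound and even yields slightly better constants than $4\alpha\sqrt{2n'}$ and $1/4$.
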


\begin{proof}
Fix a domain point $x\in\cX$. Let $V=\{z_1', \ldots, z_{n'}'\}\sim\cD^{n'}$ that is independent of $\hD$. By Definition~\ref{defn:unif-stab}, observe that Markov's inequality implies 
$$\pr{r\sim R}{\frac{1}{n'}\sum_{j=1}^{n'}\ex{\hD, V}{\big\lvert h^{(r)}_{\hD}(x) - h^{(r)}_{\hD^{(j)}}(x)\big\rvert ^2} > 8\alpha^2}\leq 1/8.$$

Let $\cR_{\cG}=\left\{r:~ \frac{1}{n'}\sum_{j=1}^{n'}\ex{\hD, V}{\big\lvert h^{(r)}_{\hD}(x) - h^{(r)}_{\hD^{(j)}}(x)\big\rvert ^2}\leq 8\alpha^2\right\}$, that is, the set of random coins for which the mean squared differences in predictions is bounded. Note that $\pr{r\sim \cR_{\cG}}{r\sim \cR_{\cG}}\geq 7/8$. 

Fix any $r\in\cR_{\cG}$. Observe by Steele's inequality for bounding the variance \citep{steele1986efron}, we have 
$$\var{\hD}{h^{(r)}_{\hD}(x)}\leq \frac{1}{2}\sum_{j=1}^{n'} \ex{\hD, V}{\big\lvert h^{(r)}_{\hD}(x) - h^{(r)}_{\hD^{(j)}}(x)\big\rvert ^2}\leq 4\alpha^2 n'$$

Hence, by Chebyshev's inequality, we get: 
$$\pr{\hD}{\big\lvert h^{(r)}_{\hD}(x) - \ex{\hD}{h^{(r)}_{\hD}(x)}\big\rvert > 4\alpha\sqrt{2n'}}<1/8$$

Putting these together, we conclude that with probability at least $3/4$ over the random coins of $\cB$ and the randomness of the dataset $\hD$, we have $\big\lvert h^{(r)}_{\hD}(x) - \ex{\hD}{h^{(r)}_{\hD}(x)}\big\rvert \leq 4\alpha\sqrt{2n'}.$
\end{proof}

This lemma basically says that if $\cB$ is $\alpha$-$\oa$ stable, then for \emph{any} domain point $x$, the set of pairs $(r, \hD)$ for which $h^{(r)}_{\hD}(x)$ is far from its expectation over $\hD$ by more than $\approx \alpha\sqrt{n'}$ has probability measure less than $1/8$. Clearly, this result becomes useful when $\alpha \ll 1/\sqrt{n'}$. This is indeed the case for several learners, most notably for SGD. We show in Section~\ref{app:sgd} that SGD is $O(1/n')$ $\oa$ stable (implication from \cite[Theorem 3.9]{hardt2015train}) under standard assumptions on the loss function used for training. 

\paragraph{Interlude: SGD satisfies $\oa$ stability.}
\cite{hardt2015train} show that SGD \emph{with specific forms of randomization} 
satisfies uniform stability, which is stronger than our notion of $\oa$ stability. We show in Section~\ref{app:sgd} that \emph{any} standard SGD, including the (deterministic) one-pass version, satisfies $\oa$ stability. 
We recover the same bounds obtained in \cite{hardt2015train}, but for $\oa$ stability
. Our result on SGD also emphasizes the relevance of our  results in this section, since we will show that instantiating the non-private learner $\cB$ in algorithm $\cA_{\lpriv}$ with an $\oa$ stable learner provides strong accuracy guarantees for our private algorithms. 


Now, for results concerning the utility of our private algorithm, we will also assume the following condition that is commonly satisfied for a wide range of popular randomized learners. 

\begin{defn}[Consistent randomization]\label{defn:cons-rand}
Let $\cD$ be any distribution over $\cX\times\cY$. Let $n\in\mathbb N$, and $x\in \cX$ be any domain point. A randomized learner $\cB$ is said to have consistent randomization if for any pair of realizations of its random coins $r_1, ~r_2$, we have 
$\ex{D\sim\cD^n}{h^{(r_1)}_D(x)}=\ex{D\sim\cD^n}{h^{(r_2)}_D(x)}.$
In such a case, we denote the common value of the expectations as $\ex{D\sim\cD^n}{h^{\cB}_D(x)}$.
\end{defn}

Note that the popular permutation-based SGD is one good example of a randomized learner that has consistent randomization and satisfies $\oa$ stable (e.g., by the results of \cite{hardt2015train}).

\begin{cor}\label{cor:conc}
In Algorithm $\cA_{\lpriv}$, let $\cB$ be an $\alpha$-$\oa$ stable learner with consistent randomization. Let $x\in\cX$, $\beta'\in (0, 1)$, and $\cS=\{h_1(x), \ldots, h_k(x)\}$. If the sample-splitting parameter $k\geq 72\log\left(\frac{1}{\beta'}\right)$, then 
$\left\lvert \left\{v\in\cS : \lvert v - \ex{\hD\sim\cD^{n'}}{h^{\cB}_{\hD}} \rvert \leq 4\alpha\sqrt{2n'}\right\}\right\rvert \geq \frac{2k}{3}$ w.p. at least $1-\beta'$.
\end{cor}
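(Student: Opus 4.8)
The plan is to reduce the statement to a routine Chernoff--Hoeffding concentration argument, exactly in the spirit of the proof of Theorem~\ref{thm:utilitySubAgg}. Fix the domain point $x\in\cX$. Recall that in Algorithm~\ref{Alg:lpriv} each classifier is obtained as $h_j = \cB(\hD_j, r_j)$, where $\hD_1,\ldots,\hD_k$ are the non-overlapping chunks of $D$, each of size $n'$, and $r_1,\ldots,r_k$ are the independent random coins of the $k$ invocations of $\cB$. Since the $n$ samples in $D$ are drawn i.i.d.\ from $\cD$, the pairs $(\hD_j, r_j)$, $j\in[k]$, are mutually independent, and each $\hD_j$ is distributed as $\cD^{n'}$.

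First I would apply Lemma~\ref{lem:unif-stab} to each chunk separately: for every $j\in[k]$, with probability at least $3/4$ over $(\hD_j, r_j)$ we have $\big\lvert h^{(r_j)}_{\hD_j}(x) - \ex{\hD\sim\cD^{n'}}{h^{(r_j)}_{\hD}(x)}\big\rvert \leq 4\alpha\sqrt{2n'}$. Next, I would invoke the consistent-randomization assumption (Definition~\ref{defn:cons-rand}) to rewrite the centering term: $\ex{\hD\sim\cD^{n'}}{h^{(r_j)}_{\hD}(x)}$ equals the common value $\ex{\hD\sim\cD^{n'}}{h^{\cB}_{\hD}(x)}$, regardless of the realization $r_j$. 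Hence, writing $v_j \triangleq h_j(x)$ and letting $X_j$ be the indicator of the event $\big\{\lvert v_j - \ex{\hD\sim\cD^{n'}}{h^{\cB}_{\hD}(x)}\rvert \leq 4\alpha\sqrt{2n'}\big\}$, the random variables $X_1,\ldots,X_k$ are independent Bernoulli random variables, each with success probability at least $3/4$.

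It then remains to lower-bound $\sum_{j=1}^k X_j$, which is precisely the cardinality $\big\lvert\{v\in\cS : \lvert v - \ex{\hD\sim\cD^{n'}}{h^{\cB}_{\hD}(x)}\rvert \leq 4\alpha\sqrt{2n'}\}\big\rvert$ appearing in the statement. By Hoeffding's inequality, with probability at least $1-\beta'$ we have $\sum_{j=1}^k X_j \geq \frac{3k}{4} - \sqrt{\frac{k}{2}\log(1/\beta')}$; substituting the hypothesis $k\geq 72\log(1/\beta')$ bounds the deviation term by $\sqrt{k^2/144} = k/12$, which yields $\sum_{j=1}^k X_j \geq \frac{3k}{4} - \frac{k}{12} = \frac{2k}{3}$, as claimed.

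I do not expect any genuine obstacle here; the only points that need care are (i) recording that the $k$ draws $(\hD_j, r_j)$ are truly independent, so that Hoeffding applies to the $X_j$, and (ii) using consistent randomization so that all $k$ indicators are centered at the \emph{same} reference value $\ex{\hD\sim\cD^{n'}}{h^{\cB}_{\hD}(x)}$ --- without this, Lemma~\ref{lem:unif-stab} would only give concentration of each $v_j$ around its own $r_j$-dependent quantity, which would not directly certify that the $v_j$ lie in a single fixed interval.
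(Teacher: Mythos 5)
Your proposal is correct and follows exactly the route the paper intends: apply Lemma~\ref{lem:unif-stab} to each of the $k$ independent chunks, use consistent randomization to center all predictions at the common value $\ex{\hD\sim\cD^{n'}}{h^{\cB}_{\hD}(x)}$, and conclude with the Chernoff--Hoeffding bound as in Theorem~\ref{thm:utilitySubAgg}. Your write-up simply makes explicit the independence and centering points that the paper's one-line proof leaves implicit, and the arithmetic ($3k/4 - k/12 = 2k/3$ under $k\geq 72\log(1/\beta')$) matches the paper's.
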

\begin{proof}
The proof follows from Lemma~\ref{lem:unif-stab}, and the Chernoff-Hoeffding's bound (as in Theorem~\ref{thm:utilitySubAgg}).
\end{proof}

To provide stronger guarantees for our private learner $\cA_{\lpriv}$ (and hence, for $\cA_{\slc}$), we will instantiate the non-private learner $\cB$ used by $\cA_{\slc}$ with an $\oa$ stable learner with consistent randomization. 
Our guarantees involve some notion of quality of the soft predictions generated via the non-private learner $\cB$. Roughly speaking, we define the quality of a soft-label by how close it is to either \mbox{$0$ or $1$} (either in expectation, or with high probability over the randomness in  both the input training set and the learner). This notion captures the level of confidence of a classifier produced by $\cB$ about the true nature of the underlying \emph{hard} label for a given feature vector. This is because a soft-label can be viewed as an estimate for the true conditional distribution $p(y=1|x)$. 






We formally define two versions of this notion of quality for a learner with respect to a feature vector. The first (Definition~\ref{defn:wquality}) is a phrased in terms of an expectation guarantee on the soft prediction generated via a given learner for a given feature vector, whereas the second (Definition~\ref{defn:squality}) is a stronger version phrased in terms of a high-probability guarantee on the same. In particular, one can think of the second definition as a ``boosted'' version of the first . Note that these notions depend on both the learner and the queried feature vector. 

\begin{defn}[$\nu$-weak quality]\label{defn:wquality}
Let $\nu \in [0, 1/2]$. A randomized learner $\cB$ is said to have $\nu$-weak quality for a domain point $x\in\cX$ if 
 $\ex{r\sim R, ~D\sim \cD^{n}}{h_{D}^{(r)}(x)}< \nu \text{, or }\ex{r\sim R, ~D\sim \cD^{n}}{h_{D}^{(r)}(x)}> 1-\nu.$
\end{defn}

\paragraph{An instantiation for weak quality:} In scenarios with relatively low \emph{label noise}, that is, when the conditional probabilities $p(y=0|x), ~p(y=1|x)$ of the true hard label given a feature vector $x$ are sufficiently far from each other for most $x$, i.e., when $\left\lvert p(y=0|x) - p(y=1|x)\right\rvert$ is relatively large with high probability over the choice of $x$, a learner that generates sufficiently accurate soft-labels (i.e., good estimates for $p(1|x)$) will satisfy the above notion of quality with small $\nu$ for all except a small fraction of the queried feature vectors.

\noindent As an instantiation of the above scenario, consider the following simple example. Let the feature space $\cX$ be $\mathbb{R} \backslash (-c,c)$ 
for some $c>0$. For $x \in \cX$, let $y = sign(x + z)$, where $z \sim N\left(\frac{c^2}{8}\right)$. Here, $N(\sigma^2)$ represents the zero-mean Gaussian distribution with variance $\sigma^2$. From the properties of the Gaussian distribution, we have the following distribution on the labels:
\begin{equation*}
p(y=1 | x) = \begin{cases}
             > 1 - exp(-4x^2/c^2),  & \text{if } x > c \\
             < exp(-4x^2/c^2),  & \text{if } x < -c
       \end{cases} 
\end{equation*}
Therefore, we have that $\min\limits_{x \in \cX} \left\lvert p(y=1 | x) - p(y=0 | x)\right\rvert > 1 - 2e^{-4} \approx 0.96$. In particular, we have $\min\limits_{x \in \cX} \left(\max \{p(y=1 | x) , p(y=0 | x)\}\right) = 1 - e^{-4} \approx 0.98$. 

Now, if we have a soft-label learner $\cB$ that outputs an $(\alpha,\beta)$-accurate soft-label classifier $h_D$ on input dataset $D\sim \cD^{n}$, i.e., we have $\left\lvert \ex{D\sim \cD^{n}}{h_D (x)} - p(y=1|x)\right \rvert \leq \alpha$  w.p. at least $1 - \beta$ over $x \in \cX$, then we have $(\alpha + 0.02)$-weak quality for $\cB$ for at least $1-\beta$ fraction of queries $x \in \cX$. Thus, any good learner is expected to satisfy our notion of weak-quality for a significantly large fraction of queries in such scenarios. For example, assume that $\cB$ fits a logistic regression model to a sufficiently large dataset $D$ (whose labels are generate as in the above example). For model parameters $\left(\omega_D^{(0)},\omega_D^{(1)} \right)$, the expected value (w.r.t. $D$) of the soft-label prediction $\frac{exp(\omega_D^{(0)} +\omega_D^{(1)}x)}{1 + exp(\omega_D^{(0)} +\omega_D^{(1)}x)}$ will be very close to 0 or 1 w.h.p. over $x \in \cX$. 

We now define a stronger version of our notion of quality. 
\begin{defn}[$(\nu, \beta)$-strong quality]\label{defn:squality}
Let $\nu \in [0, 1/2], \beta\in (0, 1)$. A randomized learner $\cB$ is said to have $(\nu, \beta)$-strong quality for a domain point $x\in\cX$ if $h^{(r)}_{D}(x)< \nu \text{ or } h^{(r)}_{D}(x)> 1-\nu$
with probability at least $1-\beta$ over the random coins $r$ of $\cB$ and the choice of $D\sim\cD^n$.
\end{defn}

We now give the main results of this section. We view the input dataset $D$ to $\cA_{\slc}$ as a set of $n$ i.i.d. data points drawn from a distribution $\cD$. 
We first give the following lemma. 

\begin{lem}[Non-private weak quality $\Rightarrow$ private strong quality]\label{lem:strng-qual}
Let $\beta'\in (0, 1)$. Consider $\cA_{\lpriv}$ (Algorithm~\ref{Alg:lpriv}). Set $k=3\left(\sqrt{5}\lambda\log(2/\beta')+w\right)$, where $\lambda$ and $w$ are the input parameters defined in $\cA_{\lpriv}$. Let $\cB$ be any $\alpha$-$\oa$ stable learner (w.r.t. input datasets of size $n'=n/k$) with consistent randomization. Let $\nu\in [0, 1/2]$, and $x\in\cX$ be any domain point. Set $\gamma = 16\alpha\sqrt{2n'}+\nu$ (assuming, w.l.o.g. that $1/\gamma$ is an integer). Suppose that $\cB$ has $\nu$-weak quality for $x$ (w.r.t. input datasets of size $n'$). Then, w.p. at least $1-\beta'$, the output classifier $\hpriv$ of $\cA_{\lpriv}$ satisfies
$\left\lvert\hpriv(x)-\ex{\hD\sim\cD^{n'}}{h^{\cB}_{\hD}(x)}\right\rvert \leq 8\alpha\sqrt{2n'}+\nu/2.$
Consequently, $\cA_{\lpriv}$ has $(\nu', \beta')$-strong quality for $x$, where $\nu'= 8\alpha\sqrt{2n'}+\nu/2$.
\end{lem}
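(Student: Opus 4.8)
The plan is to reduce the whole statement to the behaviour of a \emph{single} invocation of $\hpriv$ (Algorithm~\ref{Alg:hpriv}) on the query $x$, and to show that under $\nu$-weak quality the $k$ soft predictions concentrate tightly \emph{and} near an endpoint of $[0,1]$, tightly enough that they all land in the first (or last) bin of the $\gamma$-partition $\cP^{\gamma}$; this forces the very first stability test inside $\hpriv$ to succeed with high probability, and makes the shifted-discretization branch of $\hpriv$ irrelevant for this lemma. Write $\mu=\ex{\hD\sim\cD^{n'}}{h^{\cB}_{\hD}(x)}$, which is well defined by the consistent-randomization hypothesis (Definition~\ref{defn:cons-rand}); note $\mu\in[0,1]$. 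By the symmetry of everything under reflecting $[0,1]$ about $1/2$, it suffices to handle the case $\mu<\nu$ (the case $\mu>1-\nu$ is identical after reflection).

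First I would invoke Corollary~\ref{cor:conc} with failure probability $\beta'/2$: since $\cB$ is $\alpha$-$\oa$ stable with consistent randomization and $k=3(\sqrt5\,\lambda\log(2/\beta')+w)$ is large enough to meet its hypothesis, with probability at least $1-\beta'/2$ at least $2k/3$ of the values in $\cS=\{h_1(x),\ldots,h_k(x)\}$ lie in $J=\big[\max(0,\mu-4\alpha\sqrt{2n'}),\ \mu+4\alpha\sqrt{2n'}\big]$. Now observe that $\mu<\nu$ forces $\mu+4\alpha\sqrt{2n'}<\nu+4\alpha\sqrt{2n'}\le\nu+16\alpha\sqrt{2n'}=\gamma$, while the left endpoint of $J$ is $\ge 0$; hence $J\subseteq[0,\gamma)=\cI^{\gamma}_1$, the first bin of $\cP^{\gamma}$. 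Consequently $\hist^{\gamma}_{\cS}(1)\ge 2k/3$ and every other bin has count at most $k/3$, so the quantity $\ds=\max\{0,(\max-{\sf second}\max)-1\}$ computed in $\hpriv$ satisfies $\ds\ge(2k/3-k/3)-1=k/3-1$.

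Next I would show that the first stability test of $\hpriv$ (line~\ref{step:stab-test}) passes: it compares $\wds=\ds+{\sf Lap}(2\lambda)$ against the noisy threshold $w+{\sf Lap}(\lambda)$, and $\ds-w\ge k/3-1-w=\sqrt5\,\lambda\log(2/\beta')-1$, so it suffices that the perturbation ${\sf Lap}(2\lambda)-{\sf Lap}(\lambda)$ (centered, with standard deviation $\sqrt{10}\,\lambda$) stays smaller than $\sqrt5\,\lambda\log(2/\beta')-1$ in magnitude, which holds with probability at least $1-\beta'/2$ by a Chebyshev (or Laplace-tail) bound — precisely the calibration the constant $\sqrt5$ in the choice of $k$ serves. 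When the test passes, $\hpriv$ returns $\hs(x)=(2v^*-1)\gamma/2$ with $v^*=\argmax_v\hist^{\gamma}_{\cS}(v)=1$, i.e. $\hpriv(x)=\gamma/2$; since $0\le\mu<\nu\le\gamma$, the point $\mu$ lies in $[0,\gamma)$ whose midpoint is $\gamma/2$, so $\lvert\hpriv(x)-\mu\rvert\le\gamma/2=8\alpha\sqrt{2n'}+\nu/2$. A union bound over the two bad events (Corollary~\ref{cor:conc} failing; the noise being too large) gives the claimed inequality with probability at least $1-\beta'$. Finally, on the good event $\hpriv(x)=\gamma/2=\nu'$ (resp. $1-\gamma/2=1-\nu'$ in the reflected case), which is exactly $(\nu',\beta')$-strong quality in the sense of Definition~\ref{defn:squality}.

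The main obstacle is bookkeeping rather than ideas: one must verify that the single setting $k=3(\sqrt5\,\lambda\log(2/\beta')+w)$ simultaneously (i) meets the $\Theta(\log(1/\beta'))$-type hypothesis of Corollary~\ref{cor:conc} and (ii) dominates the fluctuation of the two independent Laplace terms in the stability test, and one must be slightly careful with the boundary bins of $\cP^{\gamma}$ near $0$ and $1$ — which is exactly why $\nu$-weak quality (forcing concentration near an endpoint) is the property that makes the unshifted histogram alone suffice here, so the $1/2$-shifted branch of $\hpriv$ never needs to be analysed for this lemma.
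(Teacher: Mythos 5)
Your proposal is correct and follows essentially the same route as the paper's proof: Corollary~\ref{cor:conc} combined with $\nu$-weak quality places at least $2k/3$ of the $k$ predictions in the extreme bin of $\cP^{\gamma}$ that contains $\ex{\hD\sim\cD^{n'}}{h^{\cB}_{\hD}(x)}$, so the unshifted histogram has stability margin about $k/3$, the first distance-to-instability test in $\hpriv$ passes with probability $1-\beta'/2$, and the returned bin midpoint is within $\gamma/2=8\alpha\sqrt{2n'}+\nu/2$ of the expectation, with the shifted branch never needed. One small caveat: plain Chebyshev does not suffice for the noise step (it yields only a $\poly(1/\beta')$ rather than a $\log(1/\beta')$ margin), so you should rely on the Laplace tail bound there, which is what the paper's appeal to the argument of Corollary~\ref{cor:spAcc1} amounts to.
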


\begin{proof}
Let $\cS=\left(h_1(x), \ldots, h_{k}(x)\right)$ constructed in Step~\ref{step:multiset} of $\hpriv$. By Corollary \ref{cor:conc}, and the $\nu$-weak quality of $\cB$ for $x$, we get that if $k\geq 72\log(2/\beta')$ with probability at least $1-\beta'/2$, at least $2k/3$ points of $\cS$ lie in either the leftmost or the rightmost interval of $\cP^{\gamma}$. Hence, $\hist^{\gamma}_{\cS}$ is a $k/3$-stable histogram (see Definition~\ref{defn:hist_stable}). By following a similar line as of the proof of Corollary~\ref{cor:spAcc1}, we can show that if we also have $k/3\geq \sqrt{5}\lambda\log(2/\beta')+w$ with probability at least $1-\beta'/2$, then $\hpriv$ will pass the first distance-to-stability test (Step~\ref{step:stab-test}) with probability at least $1-\beta'$, and hence, output the center of the interval of $\cP^{\gamma}$ where $\ex{\hD\sim\cD^{n'}}{h^{\cB}_{\hD}(x)}$ lies. This interval, as noted above, must also be either the leftmost or the rightmost interval of $\cP^{\gamma}$. This completes the proof.
\end{proof}

Now, we give our main theorem. 


\begin{thm}[Utility guarantee for $\cA_{\slc}$ via $\oa$ stability and weak-quality]
\label{thm:mainUtil}
Let $\beta\in (0, 1)$. In  Algorithm~\ref{Alg:privClass-soft}, set
$k=136 \log\left(8\,m\,T/\min(\beta, \delta)\right)\sqrt{2T\log(2/\delta)}/\epsilon$, and $\gamma = 16\alpha\sqrt{2n'}+\nu.$ Suppose that
the learner $\cB$ is $\alpha$-$\oa$ stable learner (w.r.t. input sample size $n'=n/k$) with consistent randomization. Let $\nu\in [0, 1/2]$. Let $\cG(\nu)\triangleq\left\{\ell\in [m]:~ \cB ~\text{ has }\nu\text{-weak quality for } ~x_{\ell} \right\}.$ If $\lvert\cG(\nu)\rvert\geq m-T$, then w.p. at least $1-\beta$, $\cA_{\slc}$ answers all $m$ queries (without outputting $\bot$), and for all $\ell\in [m]$, the output score $\hs(x_{\ell})$ satisfies $\bigg\lvert\hs(x_{\ell})-\ex{\hD\sim\cD^{n'}}{h^{\cB}_{\hD}(x_{\ell})}\bigg\rvert \leq 8\alpha\sqrt{2n'}+\nu/2.$
\end{thm}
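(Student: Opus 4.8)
The plan is to lift the single-query reasoning behind Lemma~\ref{lem:strng-qual} to all $m$ queries simultaneously and then feed the resulting stability facts into the general utility guarantee of $\cA_{\slc}$ (Theorem~\ref{thm:slc-gen-util}). Throughout I write $\mu_\ell\triangleq\ex{\hD\sim\cD^{n'}}{h^{\cB}_{\hD}(x_\ell)}$, which is well defined because $\cB$ has consistent randomization, and I note that the chosen $k$ satisfies both $k\ge 72\log(2m/\beta)$ and $k/3> d$, where $d=32\log\big(\tfrac{8mT}{\min(\delta,\beta)}\big)\sqrt{4T\log(2/\delta)}/\epsilon$ is exactly the quantity appearing in Theorem~\ref{thm:slc-gen-util} (the constant $136$ in the definition of $k$ is calibrated for precisely this).

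First I would apply Corollary~\ref{cor:conc} with confidence parameter $\beta/(2m)$ and union bound over $\ell\in[m]$: with probability at least $1-\beta/2$ over the sub-samples $\hD_1,\dots,\hD_k$ and the coins of $\cB$ the event $E$ holds that for every $\ell$ at least $2k/3$ of the points of $\cS_\ell=\{h_1(x_\ell),\dots,h_k(x_\ell)\}$ lie in $J_\ell\triangleq[\mu_\ell-4\alpha\sqrt{2n'},\,\mu_\ell+4\alpha\sqrt{2n'}]$, an interval of width at most $8\alpha\sqrt{2n'}=\gamma/2-\nu/2\le\gamma/2$. Conditioning on $E$: for $\ell\in\cG(\nu)$ weak quality gives $\mu_\ell<\nu$ or $\mu_\ell>1-\nu$, and since $\nu+4\alpha\sqrt{2n'}<\gamma$ both $J_\ell\cap[0,1]$ and $\mu_\ell$ sit strictly inside the leftmost (resp.\ rightmost) bin of $\cP^\gamma$; hence that bin of $\hist^\gamma_{\cS_\ell}$ holds $\ge 2k/3$ of the $k$ points, so its count gap is at least $2k/3-k/3>d$, i.e.\ $\hist^\gamma_{\cS_\ell}$ is $d$-stable and $\ell\in\cG_0(d)$. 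Thus $\cG(\nu)\subseteq\cG_0(d)$, and $|\cG(\nu)|\ge m-T$ yields $|\cG_0(d)|\ge m-T$. For the remaining $\ell\notin\cG(\nu)$, $J_\ell$ has width $<\gamma/2$ while the boundary points of $\cP^\gamma$ together with those of the $1/2$-shifted partition form a grid of spacing $\gamma/2$; so $J_\ell$ contains at most one grid point and therefore lies inside a single bin of $\cP^\gamma$ or inside a single bin of the shifted partition, making $\hist^\gamma_{\cS_\ell}$ or $\shist^\gamma_{\cS_\ell}$ $d$-stable, i.e.\ $\ell\in\cG_0(d)\cup\cG_1(d)$. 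Combining, $\cG_0(d)\cup\cG_1(d)=[m]$.

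Next I would invoke Theorem~\ref{thm:slc-gen-util} with failure probability $\beta/2$ and this same $d$: since $|\cG_0(d)|\ge m-T$, with probability at least $1-\beta/2$ over the internal Laplace noise of $\cA_{\slc}$ the algorithm answers all $m$ queries, with $\hs(x_\ell)=\tfrac{\gamma}{2}\big(2\arg\max_v\hist^\gamma_{\cS_\ell}(v)-1\big)$ for $\ell\in\cG_0(d)$ and $\hs(x_\ell)=\gamma\arg\max_v\shist^\gamma_{\cS_\ell}(v)$ for $\ell\in\cG_1(d)$; as $\cG_0(d)\cup\cG_1(d)=[m]$, no query receives $\bot$, and a union bound with $E$ keeps the total failure probability at $\beta$. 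For accuracy: on $\cG(\nu)$ the extreme bin holds $\ge 2k/3>k/2$ points, so it is the $\arg\max$ bin, $\hs(x_\ell)$ is its midpoint, and since $\mu_\ell$ lies in the same width-$\gamma$ bin we get $|\hs(x_\ell)-\mu_\ell|\le\gamma/2=8\alpha\sqrt{2n'}+\nu/2$; for $\ell\in\cG_1(d)$, the failure of $\hist^\gamma_{\cS_\ell}$ to be $d$-stable together with $\ge 2k/3$ points in $J_\ell$ forces $J_\ell$ (hence $\mu_\ell$) into a single shifted bin, which is then the $\arg\max$ bin, again giving the $\gamma/2$ bound; and for the leftover $\ell\in\cG_0(d)\setminus\cG(\nu)$ one argues that $d$-stability of $\hist^\gamma_{\cS_\ell}$ (with $d\approx k/3$) forces the $\ge 2k/3$ in-$J_\ell$ points into a single bin of $\cP^\gamma$, which is the $\arg\max$ bin containing $\mu_\ell$.

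The step I expect to be the main obstacle is this last one. Corollary~\ref{cor:conc} only guarantees that at least $2k/3$ predictions land in $J_\ell$ — it controls neither their spread inside $J_\ell$ nor the empirical mean of the full ensemble — so when $J_\ell$ straddles a boundary of $\cP^\gamma$ I must rule out the pathological scenario where enough in-$J_\ell$ points (plus out-of-$J_\ell$ points) pile into the bin \emph{not} containing $\mu_\ell$ to make it the $\arg\max$ while still leaving a count gap exceeding $d$. The clean resolution exploits exactly the parameter choices: for weak-quality queries $\gamma$ is large enough that $J_\ell$ and $\mu_\ell$ are boundary-free inside an extreme bin, and $d$ is set to $\approx k/3$, which is large enough that a straddling but still $d$-stable histogram cannot have its $\arg\max$ on the wrong side of $\mu_\ell$ — any attempt to tilt the split the wrong way drives the gap below $d$, which pushes the query into $\cG_1(d)$, where the interleaved-grid geometry hands control to the shifted histogram and the bound is recovered.
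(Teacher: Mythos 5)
Your proposal follows essentially the same route as the paper's proof: condition on the Corollary~\ref{cor:conc} concentration event for all $m$ queries, use the choice $\gamma=16\alpha\sqrt{2n'}+\nu$ so that for weak-quality queries the interval $J_\ell$ sits inside an extreme bin of $\cP^{\gamma}$ (making those histograms $d$-stable, exactly as in Lemma~\ref{lem:strng-qual}), and for the at most $T$ remaining queries use the geometric fact that $J_\ell$, having width at most $\gamma/2$, can cross at most one boundary of $\cP^{\gamma}$ and hence is contained in a single bin of the shifted partition whenever the unshifted histogram fails to be stable. Routing the conclusion through Theorem~\ref{thm:slc-gen-util} (after showing $\cG(\nu)\subseteq\cG_0(d)$ and $\cG_0(d)\cup\cG_1(d)=[m]$) is a slightly more modular packaging than the paper's, which instead re-runs the argument of Theorem~\ref{thm:utilitySubAgg}/Corollary~\ref{cor:spAcc1} directly, but the substance is the same. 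Your treatment of the $\cG(\nu)$ queries and of the $\cG_1(d)$ queries is correct and matches the paper.

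The step you flag as the main obstacle, however, is not resolved by the claim you offer. It is false that a straddling but $d$-stable unshifted histogram ``cannot have its $\arg\max$ on the wrong side of $\mu_\ell$'': Corollary~\ref{cor:conc} controls only membership in $J_\ell$, not the location of the mass inside it, so all $\geq 2k/3$ concentrated predictions may lie just above a boundary $b$ of $\cP^{\gamma}$ while $\mu_\ell$ lies just below $b$ (up to $4\alpha\sqrt{2n'}$ below). In that configuration the unshifted histogram has gap close to $2k/3\gg d$, so the query stays in $\cG_0(d)$ and is \emph{not} handed to the shifted histogram, yet the $\arg\max$ bin does not contain $\mu_\ell$ and the error of its midpoint is up to $\gamma/2+4\alpha\sqrt{2n'}=12\alpha\sqrt{2n'}+\nu/2$. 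You should be aware that the paper's own proof is no tighter at exactly this point: for a query outside $\cG(\nu)$ that passes the first stability test it only argues that the output is the midpoint of one of the two bins meeting $\cC_\ell$ and then asserts the accuracy condition, which likewise yields $12\alpha\sqrt{2n'}+\nu/2$ rather than the stated $8\alpha\sqrt{2n'}+\nu/2$ in the worst case. So, up to this constant-factor slack (which affects only the at most $T$ non-weak-quality queries and is shared with the paper), your argument reproduces the paper's proof; what you should not do is present the ``tilting drives the gap below $d$'' claim as a fix, since it is simply not true.
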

Theorem~\ref{thm:mainUtil} shows that, under the conditions of $\oa$ stability and weak quality, with high probability, for every query Algorithm $\cA_{\slc}$ is guaranteed to output a score that is close to the expected score $\ex{\hD\sim\cD^{n'}}{h^{\cB}_{\hD}(x_{\ell})}$ generated via $\cB$ regardless of the value of $\ex{\hD\sim\cD^{n'}}{h^{\cB}_{\hD}(x_{\ell})}$. Note that, as discussed earlier, there are various natural settings where the $\oa$ stability parameter $\alpha$ is $o\left(1/\sqrt{n'}\right)$.

\begin{proof}
The first part of the proof follows the same outline of that of Theorem~\ref{thm:utilitySubAgg}. 
By combining Corollary~\ref{cor:conc} with the setting for the discretization width $\gamma$, we can follow the same line of the proof of Theorem~\ref{thm:utilitySubAgg} to conclude that by setting $k$ as in the theorem statement, with probability at least $1-\beta$, all queries in $\cG(\nu)$ will be answered. Thus, by the setting we chose for $\gamma$ and the weak quality property of the queries $\ell\in \cG(\nu)$, we get $\bigg\lvert\hs(x_{\ell})-\ex{\hD\sim\cD^{n'}}{h^{\cB}_{\hD}(x_{\ell})}\bigg\rvert \leq 8\alpha\sqrt{2n'}+\nu/2.$


Now, it remains to show that the same accuracy guarantee holds for the queries not in $\cG(\nu)$. For the remainder of the proof, we will condition on the event in first part of the proof. Fix any  query $x_{\ell}$ such that $\ell\notin \cG(\nu)$. Let $\cS_{\ell}=\left(h_1(x_{\ell}), \ldots, h_{k}(x_{\ell})\right)$ as constructed in Step~\ref{step:multiset} of $\hpriv$, and $\cC_{\ell} \triangleq  \left[\ex{\hD}{h^{\cB}_{\hD}} - 4\alpha\sqrt{2n'}, \ex{\hD}{h^{\cB}_{\hD}} + 4\alpha\sqrt{2n'}\right]$. We know that this interval contains at least $2k/3$ points from $\cS_{\ell}$ (Recall that we already conditioned on the high probability event that all queries satisfy the condition in Corollary~\ref{cor:conc}). Note that in general, $\cC_{\ell}$ may intersect with at most two adjacent intervals in the partition $\cP^{\gamma}$. If the query passes the stability test from the first round, then the output score must be the mid-point of one of these two intervals. Hence, the accuracy condition in the theorem statement will hold for $x_{\ell}$ in this case. If the query does not pass the first stability test, then $\cC_{\ell}$ must be intersecting with exactly two adjacent intervals in $\cP^{\gamma}$. From the choice of $\gamma$, this implies that $\cC_{\ell}$ will be completely contained in exactly one interval in the \emph{shifted} discretization $\hcP^{\gamma}$. Hence, the output score must be the center of such interval. As a result, the accuracy condition in the theorem is satisfied in this case. This completes the proof.
\end{proof}

\paragraph{Comparison with the lower bound in \texorpdfstring{\cite{DSSUV15}}{}}
Our results in Theorem~\ref{thm:mainUtil} may seem to contradict with the lower bound in \texorpdfstring{\cite{DSSUV15}}{}, which implies that under some few assumptions, no efficient differentially private algorithm can accurately estimate the expected value of more than $n^2$ predicates (queries) over a dataset of size $n$. However, we note that the attack in the lower bound of \texorpdfstring{\cite{DSSUV15}}{} requires that the true answers of $\Omega (n^2/\gamma^2)$ queries (predicates) must arise from a ``\emph{$\gamma$-strong}'' distribution: a notion that captures how well-spread a distribution is (where $\gamma$ determines the degree of the `spread'). Interestingly, the weak quality condition in our result implies that \emph{all but $T$} queries are \emph{not} $\gamma$-strong for any $\gamma > 0$. Hence, the aforementioned lower bound does not apply to our setting.

    \subsection*{SGD and $\alpha$-$\oa$ Stability}
\label{app:sgd}

Here, we present a result establishing that  SGD satisfies $\oa$ stability. Our result does not follow directly from the work by \cite{hardt2015train} since we do not require any specific form of randomization, like random shuffling or sampling, to be performed on the dataset before or during the execution of SGD. In other words, our result applies to any standard SGD method regardless of the randomization, including the (deterministic) one-pass version of SGD. 

We provide a simple argument which, roughly speaking, reduces the $\oa$ stability of any standard SGD method to uniform stability restricted to permutation-based SGD (where data points are shuffled randomly prior to execution) for any problem class considered in \cite{hardt2015train}. Hence, one can translate all bounds on uniform stability in \cite{hardt2015train}, which are only applicable under specific randomization techniques, to  bounds on $\oa$ stability that hold without any assumptions on the nature of the randomization. This reduction does not involve any specific analysis of SGD, but goes through a simple argument that involves manipulating random variables and their expectations, and uses simple properties of i.i.d. sequences.
Since our analysis applies to a more general class of problems than soft-label classification, we will modify the notation to reflect such generality. 

We will use $\bw_D$ to denote the final output parameter when the input is $D=\left\{z_1, \ldots, z_n\right\}\sim\cD^n$. Let $V=\{z_1', \ldots, z_n'\}$ be an i.i.d. sequence that is independent of $D$. Let $z'$ be another fresh independent sample from $\cD$. As before, we define $D^{(j)}$ to be the dataset constructed by replacing the $j$-th point in $D$ with $z_j'$. Let $\hD^{(j)}$ be the dataset resulting from replacing the $j$-th entry of $D$ with $z'$. Without loss of generality, we will define stability (both $\oa$ and uniform) in terms of $\norm{\bw_D-\bw_{D'}}$ (where $D'$ denotes any dataset that differs from $D$ in exactly one point). An algorithm that takes a dataset $D$, and outputs a parameter (or a prediction rule) $\bw_D$, is $\alpha$-uniformly stable if for any \emph{fixed} pair of datasets $D, D'$ that differ in one point, we have that
$\ex{r \sim R}{\norm{\bw_D - \bw_{D'}}^2} \leq \alpha^2$,
where the expectation is taken over the random coins of the algorithm. Note that the bounds on uniform stability in \cite{hardt2015train} are derived by first obtaining a bound on $\norm{\bw_D-\bw_{D'}}$, which is then used to bound the difference in the loss function evaluated on $D$ and $D'$ in a straightforward manner (using standard properties like Lipschitz boundedness, and smoothness). Hence, the final bounds are constant factors away from the original bound on the parameters. For consistency with the way we define $\oa$ stability earlier, we prefer to focus our notation on the parameter space.  We note that our result would still apply if we define stability in terms of the loss. 
We now state our result.

\begin{theorem}
\label{thm:sgd}
Let $\cA$ be any (randomized) learner (e.g., SGD) that, on input dataset $D$, outputs a parameter vector $\bw_D$. Let $\sigma:[n]\rightarrow [n]$ be a random permutation. Let $\sigma(D)=\{z_{\sigma(1)},\ldots, z_{\sigma(n)}\}$ denote the dataset resulting from applying $\sigma$ on the indices of $D=\{z_1, \ldots, z_n\}$. If $\cA\left(\sigma\left(\cdot\right)\right)$ is $\alpha$-uniformly stable, then $\cA$ is $\alpha$-$\oa$ stable. 
\end{theorem}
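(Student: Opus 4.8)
\emph{Proof proposal.} Following the convention adopted just above the theorem, I will work with the parameter-space versions of both stability notions; the argument is entirely pointwise, so it applies verbatim to the prediction $h_D(x)$ at any fixed $x$. Write $\bw_S$ for the parameter returned by $\cA$, with random coins $r$, run on the \emph{ordered} tuple $S$; recall $D=(z_1,\ldots,z_n)\sim\cD^n$, $V=(z_1',\ldots,z_n')\sim\cD^n$ independent of $D$, and $D^{(j)}$ is $D$ with $z_j$ replaced by $z_j'$. The goal is to bound $\frac1n\sum_{j=1}^n\ex{r,D,V}{\norm{\bw_D-\bw_{D^{(j)}}}^2}$ by $\alpha^2$, given that $\cA(\sigma(\cdot))$ — which first applies a uniformly random permutation $\sigma$ to the coordinates of its input and then runs $\cA$ — satisfies $\ex{r,\sigma}{\norm{\bw_{\sigma(S)}-\bw_{\sigma(S')}}^2}\le\alpha^2$ for every fixed neighbouring pair $S,S'$ (here the coins of $\cA(\sigma(\cdot))$ are the pair $(r,\sigma)$).

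The core step is a coupling identity. Let $\sigma$ act \emph{simultaneously} on $D$ and on $V$, so that the tuple $\sigma(V)$ supplies the replacement points for $\sigma(D)$; then replacing the $i$-th coordinate of $\sigma(D)$ by the $i$-th coordinate of $\sigma(V)$ produces exactly the tuple $\sigma\big(D^{(\sigma(i))}\big)$. Hence, reindexing $j=\sigma(i)$,
$$\frac1n\sum_{i=1}^n\norm{\bw_{\sigma(D)}-\bw_{(\sigma(D))^{(i)}}}^2=\frac1n\sum_{j=1}^n\norm{\bw_{\sigma(D)}-\bw_{\sigma(D^{(j)})}}^2 .$$
Now take $\ex{r,D,V}{\cdot}$: since $(D,V)$ is a pair of i.i.d.\ tuples, $(\sigma(D),\sigma(V))$ has the same law as $(D,V)$, so the expectation of the left-hand side equals $\frac1n\sum_i\ex{r,D,V}{\norm{\bw_D-\bw_{D^{(i)}}}^2}$, the very quantity we want to bound. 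Thus for \emph{every} fixed $\sigma$,
$$\frac1n\sum_{j=1}^n\ex{r,D,V}{\norm{\bw_D-\bw_{D^{(j)}}}^2}=\frac1n\sum_{j=1}^n\ex{r,D,V}{\norm{\bw_{\sigma(D)}-\bw_{\sigma(D^{(j)})}}^2}.$$
Averaging the right-hand side over a uniformly random $\sigma$ and exchanging the order of expectation, it becomes $\frac1n\sum_j\ex{D,V}{\,\ex{r,\sigma}{\norm{\bw_{\sigma(D)}-\bw_{\sigma(D^{(j)})}}^2\mid D,V}}$. For each $j$, once $D$ and $V$ are fixed, $D$ and $D^{(j)}$ are a fixed pair of datasets differing in one coordinate, so the inner expectation is precisely the uniform-stability quantity of $\cA(\sigma(\cdot))$ on that pair, which is $\le\alpha^2$ by hypothesis. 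Summing the $n$ bounds and dividing by $n$ gives the claim.

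The plan has no deep technical content; the only thing requiring care is the bookkeeping — keeping the replacement sample $V$ permuted \emph{together with} $D$ in the coupling identity, and invoking uniform stability only \emph{after} conditioning on $(D,V)$ (so that $(D,D^{(j)})$ is frozen) while $\sigma$ and $\cA$'s internal coins remain random. I expect the places most prone to index/sign slips to be verifying $(\sigma(D))^{(i)}=\sigma(D^{(\sigma(i))})$ under the coupled replacements, and the exchangeability substitution $(\sigma(D),\sigma(V))\stackrel{d}{=}(D,V)$. Note that nothing about SGD is used beyond its being an algorithm that consumes an ordered dataset; in particular the deterministic one-pass case is covered, since there $r$ is vacuous and all the randomness needed is the permutation inside $\cA(\sigma(\cdot))$.
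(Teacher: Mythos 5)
Your proof is correct. It proves exactly the right inequality: the coupling identity $(\sigma(D))^{(i)}=\sigma\bigl(D^{(\sigma(i))}\bigr)$ (with $V$ permuted together with $D$) holds, the reindexing $j=\sigma(i)$ is a bijection for each fixed $\sigma$, the substitution $(\sigma(D),\sigma(V))\stackrel{d}{=}(D,V)$ is valid because a common fixed permutation applied to two independent i.i.d.\ tuples preserves their joint law, and after averaging over an independent uniform $\sigma$ and conditioning on $(D,V)$, the inner expectation over $(r,\sigma)$ is precisely the uniform-stability quantity of $\cA(\sigma(\cdot))$ on the frozen neighbouring pair $(D,D^{(j)})$, which the hypothesis bounds by $\alpha^2$ pointwise. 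While the underlying engine is the same as in the paper's proof -- exchangeability of i.i.d.\ data is what lets the random permutation be smuggled in -- your bookkeeping is genuinely different. The paper first collapses the $n$ independent replacement points $z_j'$ to a single fresh sample $z'$ (datasets $\hD^{(j)}$), then constructs swapped datasets $D_*^{(j)},\tD^{(j)}$ to move the differing entry to the first coordinate, identifies the uniform index $j$ with $\sigma(1)$, and applies the conditional permutation $\sigma_{|j}$ on positions $2,\ldots,n$ to reassemble a full random $\sigma$ acting on one canonical pair differing in the first position. Your argument needs none of these auxiliary constructions: the invariance of the on-average quantity under jointly permuting $(D,V)$, plus a swap of the order of expectations, does all the work, at the mild cost of invoking the uniform-stability hypothesis for neighbouring pairs differing at an arbitrary coordinate $j$ rather than only at the first coordinate (which the definition covers, and which is anyway position-symmetric for a permutation-randomized algorithm). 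The paper's route buys a reduction to a single canonical neighbouring pair; yours buys a shorter and more symmetric derivation with fewer intermediate datasets to track.
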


\begin{proof}
Let $D\sim\cD^n$. Let $r\sim R$ denote the random coins of $\cA$ (if any). Observe that since $z'$ and $z_j, ~j\in [n]$ are i.i.d., we have

\begin{align}
\frac{1}{n}\sum_{j=1}^n\ex{r, D, D^{(j)}}{\norm{\bw_D(x) - \bw_{D^{(j)}}(x)} ^2}=\frac{1}{n}\sum_{j=1}^n\ex{r, D, \hD^{(j)}}{\norm{ \bw_D(x) - \bw_{\hD^{(j)}}(x)} ^2} \label{eqn:sum}
\end{align}

Note that the term inside the $j$-th expectation in the RHS is a function of $D, \hD^{(j)}$. 
For every $j \in [n]$, create a new dataset $\tD^{(j)}$ by replacing the first entry in $\hD^{(j)}$ (i.e., $z_1$) by $z_j$, i.e., $\tD^{(j)}=\{z_j, z_2, z_3, \ldots, z_{j-1}, z', z_{j+1}, \ldots, z_n\}$. Also, construct another dataset $D_*^{(j)}$ which is identical to $D$ except that $z_1$ and $z_j$ are swapped. Observe that $\tD^{(1)} = \hD^{(1)}$, and $D_*^{(1)} = D$. Now, for $j = 2, \ldots, n$, we argue that the pair $(D_*^{(j)}, \tD^{(j)})$ is identically distributed to the pair $(D, \hD^{(j)})$. Note that the only relevant random variables, paired according to their respective positions, are $(z_1, z_1),~ (z_j, z')$ (from $(D, \hD^{(j)})$) and $(z_j, z_j), ~(z_1, z')$ (from $(D_*^{(j)}, \tD^{(j)})$), respectively. It is easy to see that the first group is identically distributed to the second, as the joint distribution $p\left((z_1, z_1), (z_j, z')\right)=p(z_1)p(z_j)p(z')=p\left((z_j, z_j), (z_1, z')\right)$. 

For every $j \in [n]$, we can now safely replace $D$ with $D_*^{(j)}$, and $\hD^{(j)}$ with $\tD^{(j)}$. Note that all $D_*^{(j)}$ have the same data points as $D$, except they may be shuffled. Also, observe that all $\tD^{(j)}$ contain the same data points $z_2, \ldots, z_n$ and $z'$, with $z'$ being the $j$-th entry of $\tD^{(j)}$.  Hence, from equation~\ref{eqn:sum}, we have:
\begin{align*}
\frac{1}{n}\sum_{j=1}^n\ex{r, D, \hD^{(j)}}{\norm{ \bw_D(x) - \bw_{\hD^{(j)}}(x)} ^2}&=\frac{1}{n}\sum_{j=1}^n\ex{r, D_*^{(j)}, \tD^{(j)}}{\norm{ \bw_{D_*^{(j)}}(x) - \bw_{\tD^{(j)}}(x)} ^2}\\
&=\ex{j\leftarrow [n]}{\ex{r, D_*^{(j)}, \tD^{(j)}}{\norm{ \bw_{D_*^{(j)}}(x) - \bw_{\tD^{(j)}}(x)} ^2}}
\end{align*}
Here, the outer expectation is taken over $j$ drawn uniformly from $[n]$ (in place of the average over the indices). Note that the location of $z'$ is hence uniformly random over $[n]$. 
Let $\sigma$ be a random permutation over $[n]$. Note that since the \emph{marginal} distribution of $\sigma(1)$ is uniform over $[n]$, we can write 
the above expectation as 

$$\ex{j\leftarrow\sigma(1)}{\ex{r, D_*^{(j)}, \tD^{(j)}}{\norm{ \bw_{D_*^{(j)}}(x) - \bw_{\tD^{(j)}}(x)} ^2}}$$

For any $j\in [n]$, let $\sigma_{|j}=\left(\sigma(2), \ldots, \sigma(n)\right)$ denote the permutation induced by $\sigma$ on the positions of the dataset ranging from $2$ to $n$ \emph{conditioned on the event that $\sigma(1)=j$}. Since \emph{any} permutation of an i.i.d. sequence is still i.i.d. (by exchangeability of i.i.d. random variables), then for any fixed $j\in [n]$, the pair $\left(D_*^{(j)}, ~\tD^{(j)}\right)$ is identically distributed to $\left(\sigma_{|j}\left(D_*^{(j)}\right), ~\sigma_{|j}\left(\tD^{(j)}\right)\right)$. Hence, the above expression can be written as 

$$\ex{j\leftarrow\sigma(1)}{\ex{\sigma_{|j}}{\ex{r, D_*^{(j)}, \tD^{(j)}}{\norm{ \bw_{\sigma_{|j}\left(D_*^{(j)}\right)}(x) - \bw_{\sigma_{|j}\left(\tD^{(j)}\right)}(x)} ^2}}}$$

Note that when $j\leftarrow \sigma(1)$, we have $\sigma_{|j}\left(D_*^{(j)}\right)=\sigma\left(D_*^{(1)}\right)$ and $\sigma_{|j}\left(\tD^{(j)}\right)=\sigma\left(\tD^{(1)}\right)$. Hence, using this fact, combining the outer expectations, and swapping the order of expectations, we get   
$$\ex{r, D_*^{(1)}, \tD^{(1)}}{\ex{\sigma}{\norm{ \bw_{\sigma\left(D_*^{(1)}\right)}(x) - \bw_{\sigma\left(\tD^{(1)}\right)}(x)} ^2}}$$

Note that $D_*^{(1)}$ and $\tD^{(1)}$ may differ only in their first position (the first entry in $D_*^{(1)}$ is $z_1$ whereas the first entry in $\tD^{(1)}$ is $z'$). Thus, since $\cA\left(\sigma\left(\cdot\right)\right)$ is $\alpha$-uniformly stable, the inner expectation is bounded by $\alpha^2$, proving that $\cA$ is $\alpha$-$\oa$-stable.

\end{proof}

	\bibliographystyle{alpha} 

    \bibliography{reference}

\end{document}